\documentclass[12pt]{article}
\usepackage[utf8]{inputenc}
\usepackage[sectionbib]{natbib}

\usepackage[margin=1.0in]{geometry}
\usepackage{graphicx, color}
\usepackage{booktabs}
\usepackage{xr}
\usepackage{pifont}
\usepackage{titlesec}
\usepackage{multirow}
\usepackage{multicol}
\usepackage{amsfonts}
\usepackage{textcomp}
\usepackage{comment}
\usepackage{amsthm}
\usepackage{mathtools}
\usepackage{algorithm}
\usepackage{algorithmic}
\usepackage{amsmath}
\usepackage{amssymb}
\usepackage[toc,page]{appendix}
\usepackage[mathscr]{euscript}

\let\counterwithin\relax
\usepackage{chngcntr}
\usepackage{apptools}
\usepackage{caption}
\usepackage{tabularx} %
\usepackage{array} %
\usepackage{soul}
\usepackage{tikz}
\usetikzlibrary{shapes.misc}
\usetikzlibrary{shapes}
\usepackage{underscore} 
\AtAppendix{\counterwithin{lemma}{section}}
\usepackage{indentfirst}
\usepackage{epstopdf}
\usepackage{diagbox}
\usepackage{lineno}
\usepackage{lipsum}
\usepackage{subcaption}
\usepackage{bbm}
\usepackage{lscape}
\usepackage{enumitem}
\usepackage{titling}
\newtheorem{assumption}{Assumption}

\newtheorem{definition}{Definition}
\newtheorem{lemma}{Lemma}

\newtheorem{proposition}{Proposition}
\newtheorem{theorem}{Theorem}
\newtheorem{corollary}{Corollary}

\newcolumntype{P}[1]{>{\centering\arraybackslash}p{#1}} %
\newcommand{\norm}[1]{\left\lVert#1\right\rVert}

\usepackage[mathscr]{euscript}
\usepackage{mathtools}
\usepackage{amsthm}
\usepackage{amsfonts}
\usepackage{amssymb}
\usepackage{color}
\usepackage{multirow}
\usepackage{makecell}
\usepackage{booktabs}
\usepackage{tikz}
\newcommand{\bm}{\boldsymbol}

\DeclareMathOperator*{\vectorize}{vec}

\DeclareMathOperator*{\argmin}{arg\,min}

\makeatletter
\newcommand*{\addFileDependency}[1]{
	\typeout{(#1)}
	\@addtofilelist{#1}
	\IfFileExists{#1}{}{\typeout{No file #1.}}
}
\makeatother

\definecolor{blue}{rgb}{0.0,0.0,1.0}
\definecolor{green}{rgb}{0.0,0.5,0.0}
\definecolor{red}{rgb}{1.0,0.0,0.0}
\definecolor{cyan}{rgb}{0.0,1.0,1.0}

\usepackage{lipsum}
\usepackage[colorlinks,linkcolor=blue,citecolor=blue]{hyperref}

\titlespacing*{\section}
{0pt}   
{1.2ex} 
{0.9ex}   

\titlespacing*{\subsection}
{0pt}
{1ex}
{0.7ex}

\newcommand{\Fr}{{\mathrm{F}}}

\definecolor{turquoise}{rgb}{0.03, 0.7, 0.87}

\newcommand{\bbm}{\boldsymbol}

\usetikzlibrary{shapes.geometric, arrows}
\tikzstyle{startstop} = [rectangle, rounded corners, minimum width=3cm, minimum height=1cm,text centered, draw=black, fill=purple!30]
\tikzstyle{stop} = [rectangle, rounded corners, minimum width=3.6cm, minimum height=1cm, draw=black, fill=purple!30, text width = 3.6cm]
\tikzstyle{start} = [rectangle, rounded corners, minimum width=2.7cm, minimum height=1cm, draw=black, fill=purple!30, text width = 2.7cm]
\tikzstyle{io} = [trapezium, trapezium left angle=70, trapezium right angle=110, minimum width=3cm, minimum height=1cm, draw=black, fill=blue!30]
\tikzstyle{process} = [rectangle, rounded corners, minimum width=3cm, minimum height=1cm, draw=black, fill=blue!10, text width=10cm]
\tikzstyle{process_new} = [rectangle, rounded corners, minimum width=3cm, minimum height=1cm, draw=black, fill=blue!10, text width=11.5cm]
\tikzstyle{process2} = [rectangle, rounded corners, minimum width=3cm, minimum height=1cm, draw=black, fill=blue!10, text width=10cm]
\tikzstyle{decision} = [diamond, minimum width=3cm, minimum height=1cm, text centered, draw=black, fill=green!30]
\tikzstyle{arrow} = [thick,->,>=stealth]
\DeclareRobustCommand\sampleline[1]{%
	\tikz\draw[#1] (0,0) (0,\the\dimexpr\fontdimen22\textfont2\relax)
	-- (2em,\the\dimexpr\fontdimen22\textfont2\relax);%
}

\title{ParaRNN: An Interpretable and Parallelizable Recurrent Neural Network for Time-Dependent Data}
\author{Yuxi Cai, Lan Li, Feiqing Huang, Guodong Li
	\\ \textit{Department of Statistics and Actuarial Science, University of Hong Kong}}

\begin{document}
\vspace{-3em}
\maketitle
\begin{abstract}
	The proliferation of large-scale and structurally complex data has spurred the integration of machine learning methods into statistical modeling. Recurrent neural networks (RNNs), a foundational class of models for time-dependent data, can be viewed as nonlinear extensions of classical autoregressive moving average models. Despite their flexibility and empirical success in machine learning, RNNs often suffer from limited interpretability and slow training, which hinders their use in statistics.
	This paper proposes the Parallelized RNN (ParaRNN), a novel model composed of multiple small recurrent units. 
	ParaRNN admits an additive representation that decouples recurrent dynamics into interpretable components, whose behavior can be characterized through recurrence features. This interpretability enables its applications in nonparametric regression for time-dependent data, while the design also allows efficient parallelization.
	The approximation capacity and non-asymptotic prediction error bounds in a nonparametric regression setting are established for ParaRNN. Empirical results on three sequential modeling tasks further demonstrate that ParaRNN achieves performance comparable to vanilla RNNs while offering improved interpretability and efficiency.
\end{abstract}
\textit{Keywords}:  nonparametric regression, parallel computing, recurrence feature, recurrent neural network, time series forecasting

\newpage
\section{Introduction}
Time-dependent data modeling is a longstanding problem in both statistics and machine learning communities. It encompasses a wide range of tasks, such as time series forecasting \citep{flunkert2017deepar}, language modeling \citep{mikolov2010recurrent}, machine translation \citep{sutskever2014sequence}, as well as action and speech recognition \citep{chan2016listen}. Despite addressing similar tasks, statistical and machine learning approaches differ primarily in their modeling objectives. Statistical methods are typically developed for structured data such as time series. They place strong emphasis on model interpretability in addition to predictive performance. In contrast, machine learning models employ highly flexible architectures, leveraging nonlinear activations, input-adaptive design, and deep layered structures. This flexibility enables them to accommodate more complex data and achieve superior predictive accuracy, but often undermines their interpretability and requires considerably larger sample sizes. 
As advances in technology continue to generate large-scale and structurally complex time-dependent data, the advantages of machine learning models become increasingly attractive, calling for their integration into statistics to balance flexibility and interpretability.

Among machine learning models for time-dependent data, the vanilla recurrent neural network (RNN) \citep{elman1990rnn} is the most foundational recurrent architecture. It forms the basis for many important variants, including long short-term memory (LSTM) networks \citep{hochreiter1997lstm}, gated recurrent units (GRU) \citep{cho2014gru}, and many others \citep{chang2018antisymmetricrnn, qiao2019stabilizing, gu2020improving,  erichson2021lipschitz, qin2023hierarchically}.
Recently, the inherent advantages of RNNs' recurrent structures have also been leveraged to address limitations of the popular Transformer architecture \citep{vaswani2017attention}, further expanding the landscape of recurrent models.
From a data modeling perspective, incorporating recurrent dynamics enables more sample-efficient extraction of sequential information \citep{shaw2018self}, leading to a growing trend of imposing RNN structures at the segment level of long sequences \citep{dai2019transformerxl, hutchins2022blockrecurrent} or on attention maps \citep{huang2023encoding}.
From an algorithmic perspective, RNNs enjoy linear scaling in time and memory costs with respect to sequence length, in contrast to the quadratic complexity of attention mechanisms in transformers. These differences have motivated the reformulation of attention into RNN-type designs for better scalability \citep{peng2021random,mao2022fine, peng2023rwkv}. 
The enduring popularity of RNNs underscores their empirical effectiveness in time-dependent data modeling.

Motivated by their empirical success, there has been growing interest in introducing RNNs into statistical modeling \citep{chen2025rnn}. A key challenge in this transfer is achieving interpretability, which depends critically on both model dimension and the choice of activation functions. When the activations are linear, RNNs have direct connections to classical time series models in statistics, including the commonly used autoregressive moving average (ARMA) and generalized autoregressive conditional heteroskedasticity (GARCH) models; see Section \ref{subsec:relation-ts} for details. 
In this setting, the recurrent mechanism of an RNN can be understood through established statistical frameworks, although the interpretation of ARMA models remains an open problem compared to that of simpler autoregressive (AR) models.
When nonlinear activation functions are used, the resulting recurrent dynamics become even more difficult to interpret, especially for large hidden dimensions. Hence, this paper aims to provide an alternative way to elucidate the dynamics of RNNs.

Section \ref{subsec:decouple} first revisits the recurrent dynamics of vanilla RNNs from a structural perspective.
Specifically, an RNN retains memory information by updating a fixed number of hidden states throughout the sequence, and its recurrent dynamics can therefore be characterized by the associated weight matrix, which this paper refers to as the \textit{recurrent matrix}. 
We observe that when this matrix is block diagonal, the hidden state admits an additive decomposition of the hidden states from smaller constituent RNNs, leading to a separation of recurrent dynamics into simpler and more interpretable parts. 
For general recurrent matrices, we may consider their block diagonal approximations in a denseness sense, which motivates a new model whose dynamics are decoupled while retaining expressive power.

\begin{figure}[t]
	\centering
	\begin{subfigure}{.55\textwidth}
		\centering
		\includegraphics[width=.95\linewidth]{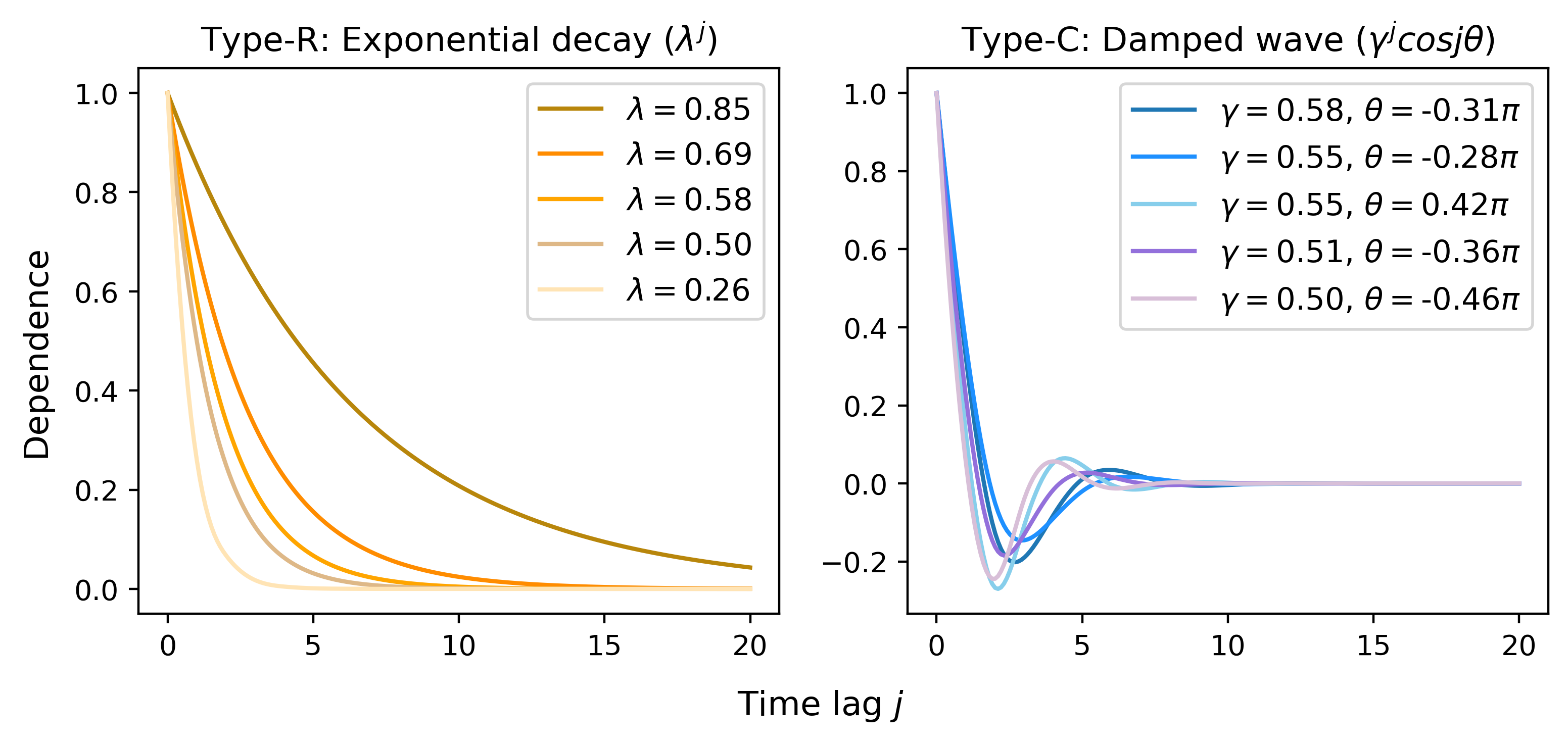}
		\caption{}
	\end{subfigure}%
	\begin{subfigure}{.45\textwidth}
		\centering
		\includegraphics[width=.95\linewidth]{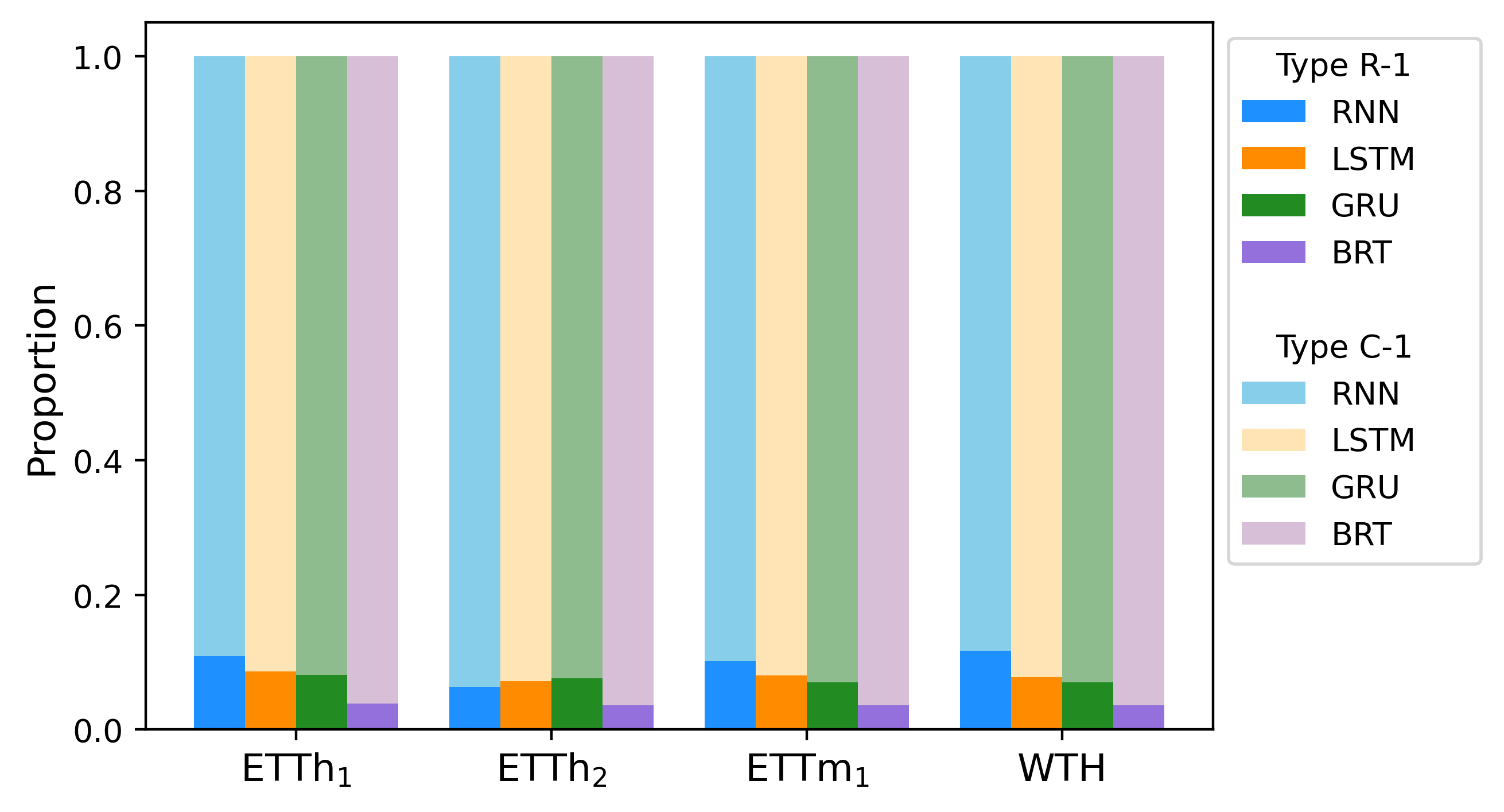}
		\caption{}
	\end{subfigure}
	\vspace{-1.2cm}
	\caption{(a) Two patterns of recurrence features that the vanilla RNN learns from real data: exponential decay (Type R) and damped wave decay (Type C). (b) Proportion of recurrence feature types learned by vanilla RNN, LSTM, GRU and BRT across two data sets, Electricity Transformer Temperature (ETT) and Weather (WTH). All features are first-order.}
	\label{fig:derivative}
\end{figure}

Building upon these observations, we propose an interpretable and computationally efficient recurrent network in Section \ref{sec:pararnn}, referred to as the \textit{Parallelized RNN} (ParaRNN). The model comprises multiple small RNNs operating in parallel, with an additional aggregation step to combine their hidden states. From an interpretability standpoint, the recurrent dynamics of ParaRNN are naturally decomposable. The behavior of each constituent RNN can be analyzed through the real Jordan decomposition \citep{horn2012matrix} of its recurrent matrix, which reveals that recurrent dynamics can be further dissected into a collection of elementary patterns, termed \textit{recurrence features}. Each feature is determined by either a real eigenvalue or a pair of conjugate complex eigenvalues of the recurrent matrix, giving rise to two broad types of temporal behavior. 
When the activation is linear, these types coincide with two renowned patterns in time series analysis \citep{fuller1996intro, cryer2008time}: exponential decay and damped sinusoidal wave decay; see the illustration in Figure \ref{fig:derivative}(a).
This hence sheds new light on ARMA models in statistics.

ParaRNN is also computationally appealing, as vanilla RNNs suffer from slow sequential training. Some previous works focus on parallelizing over the temporal dimension by convolutions \citep{bradbury2016quasirnn} or sequence splitting \citep{dennis2019shallowrnn}; other approaches include substituting the recurrent weight with a diagonal matrix \citep{li2018indrnn, martin2018plrn, rusch2021unicornn} or directly diagonalizing recurrent matrices over the complex plane and then excluding imaginary parts to obtain real-valued outputs \citep{orvieto2023resurrect}. However, they may sacrifice essential information. In particular, the last two approaches lose important features that capture periodic patterns, as suggested by Theorem \ref{thm:dense}.
In contrast, ParaRNN substantially accelerates computation along the hidden states' dimension but without significant loss in recurrence features. 
Our theoretical analysis and numerical studies demonstrate that recurrent dynamics are primarily influenced by low-order recurrence features; see Figure \ref{fig:derivative}(b). As a result, employing a series of small parallel RNNs is sufficient to effectively capture the original RNN's dynamics. 

Beyond interpretability and efficiency, theoretical justification is essential when applying RNNs to statistical modeling. \cite{chen2016generalized} and \cite{tu2020understanding} study the generalization ability; \cite{jiao2024approximation} derives approximation error and convergence properties of nonparametric regression with deep RNNs; and \cite{chen2025rnn} provides theoretical guarantees for RNNs trained on time series generated by nonlinear vector ARMA models. In Section \ref{subsec:theory}, we show that deep ParaRNNs can have the same approximation capacity as deep RNNs, and establish a non-asymptotic upper bound for prediction errors of our models in the context of nonparametric least squares regression. Additional theoretical analyses of RNNs have also been developed outside statistical settings, including expressive power \citep{khrulkov2018expressive}, memory capacity \citep{jasmine2017capacity, haviv2019understand}, and training dynamics \citep{alemohammad2021the, farrell2022gradient, cohen2023learning}.

In summary, this paper makes three main contributions: (i) a novel network, ParaRNN, is proposed to improve interpretability and computational efficiency; (ii) recurrence features are introduced to depict recurrent dynamics, offering insight into ParaRNNs as well as vanilla RNNs and classical ARMA models; (iii) theoretical justifications are established for the proposed methodology. In addition, extensions of our design to other recurrent models are demonstrated in Section \ref{sec:extension}, and their favorable numerical performance is validated through three sequential modeling tasks in Section \ref{sec:result}. Section \ref{sec:conclusion} concludes the paper with a discussion. All theoretical proofs and experiment details are deferred to the Supplementary Material.

\section{Motivation} \label{sec:motivation}
\subsection{Recurrent Neural Networks and Time Series Models}\label{subsec:relation-ts}
In machine learning, recurrent neural networks play a fundamental role in modeling time‑dependent data, whereas in statistics, classical time series models such as AR and ARMA models have long been developed for the same purpose. With the growing adoption of machine learning methods in statistical applications, it is therefore of interest to understand the relationship between these two modeling approaches.

Consider a general recurrent neural network layer with inputs $\bm{x}_t \in \mathbb{R}^{d_{\mathrm{in}}}$ and hidden states $\bm{h}_t \in \mathbb{R}^{d}$ where $1\leq t \leq T$. It has the form of
\begin{linenomath}
	\begin{align}
		\bm{h}_t=\sigma_h(\bm{W}_h\bm{h}_{t-1}+\bm{W}_x\bm{x}_t + \bm{b}),
		\label{eq:nonlinear-rnn}
	\end{align}
\end{linenomath}
where $\sigma_h(\cdot)$ is an element-wise activation function, $\bm{W}_h\in \mathbb{R}^{d\times d}$ and $\bm{W}_x \in \mathbb{R}^{d \times d_{\mathrm{in}}}$ are weight or parameter matrices, and $\bm{b}$ is the bias term.
In fact, this form encompasses many important time series models in statistics, such as ARMA and GARCH models.

We first verify that the ARMA model is a special case of model \eqref{eq:nonlinear-rnn}. Specifically, for a time series $\{\bm{y}_t\}$ with $\bm{y}_t \in \mathbb{R}^{d_\mathrm{in}}$, the ARMA(1,1) model and its AR$(\infty)$ form are given by
\begin{equation}
	\bm{y}_t = \bm{\Phi} \bm{y}_{t-1} + \bm{\epsilon}_t - \bm{\Theta} \bm{\epsilon}_{t-1} 
	\quad \text{or} \quad
	\bm{y}_t = \bm{h}_t +  \bm{\epsilon}_t \hspace{2mm}\text{with}\hspace{2mm}
	\bm{h}_t= \sum_{j=0}^{\infty} \bm{\Theta}^j (\bm{\Phi} - \bm{\Theta} )\bm{y}_{t-1-j} ,
	\label{eq:arma}
\end{equation}
respectively, where $\bm{\Phi}, \bm{\Theta} \in \mathbb{R}^{d_\mathrm{in} \times d_\mathrm{in}}$ are parameter matrices, $\bm{\epsilon}_t \in \mathbb{R}^{d_\mathrm{in}}$ is white noise, and the invertibility condition holds, i.e., the spectral radius of $\bm{\Theta}$ is less than one; see \cite{cryer2008time}.
Moreover, denote by $\mathcal{F}_t$ the $\sigma$-field generated by $\bm{y}_s$ with $s\leq t$, and it holds that $\bm{h}_t=\mathbb{E}(\bm{y}_t|\mathcal{F}_{t-1})$, i.e., model \eqref{eq:arma} attempts to fit $\bm{h}_t$.
When conducting estimation, the initial values of $\bm{y}_t$ with $t\leq 0$ are not observable, and the common practice is to set them as constants, say zeros.
As a result, after reparameterization of $\bm{W}_h =  \bm{\Theta}$ and $\bm{W}_x = \bm{\Phi} - \bm{\Theta}$, we have $\bm{h}_t=\sum_{j=0}^{t-1}\bm{W}_h^j\bm{W}_x\bm{x}_{t-j}\in\mathbb{R}^{d_\mathrm{in}}$ with $\bm{x}_t = \bm{y}_{t-1}$.
In the meanwhile, under the special case $\sigma_h(\bm{x})=\bm{x}$ and $\bm{b}=\bm{0}$, i.e., a linear RNN without bias, the output at \eqref{eq:nonlinear-rnn} can be rewritten into the same form.
Moreover, any ARMA$(p, q)$ model can be recast in a form with $p=q=1$, implying that RNNs are expressive enough to represent arbitrary ARMA models.

Note that all neural networks, including RNNs, from machine learning are for modeling only. From the perspective of time series modeling, RNNs are more general than ARMA models by including nonlinearities.  This additional flexibility enables RNNs to capture more complex temporal structures and can improve predictive performance, but it substantially obscures model interpretation, especially when the hidden dimension $d$ is large. This trade-off is analogous to the relationship between AR and ARMA models: while ARMA models often outperform AR models in terms of prediction by allowing richer autocorrelation structures, their interpretation is frequently complicated by a general form of $\bm{\Theta}$ and even nontrivial identification constraints \citep{chan2016large, wang2022high}. Therefore, when using RNNs in statistical contexts, it is important to explicitly account for issues of interpretability.


On the other hand, we consider the VECH-GARCH(1,1) model \citep{bollerslev1988capital} for a time series $\{\bm{y}_t\}$ with $\bm{y}_t \in \mathbb{R}^{d_\mathrm{out}}$. The model is specified as
\begin{align*}
	\bm{y}_t = \bm{H}_t^{1/2} \bm{\epsilon}_t
	\quad
	\text{with}\quad\bm{h}_t  = \text{vech}(\bm{H}_t) = \bm{\Theta} \text{vech}(\bm{y}_{t-1} \bm{y}_{t-1}^\top) + \bm{\Phi}\text{vech}(\bm{H}_{t-1}) + \bm{b},
\end{align*}
where the innovations $\{\bm{\epsilon}_t\}$ are independent and identically distributed with standard multivariate normality, 
$\bm{H}_t=\mathbb{E}(\bm{y}_{t} \bm{y}_{t}^\top|\mathcal{F}_{t-1}) \in \mathbb{R}^{d_\mathrm{out} \times d_\mathrm{out}}$ is the conditional variance matrix, and $\text{vech}(\cdot)$ denotes the half-vectorization operator that stacks the columns of the lower triangular part of a symmetric matrix.
Accordingly, the hidden state satisfies $\bm{h}_t\in\mathbb{R}^{d_\mathrm{out}(d_\mathrm{out} + 1)/2}$, and the parameter matrices $\bm{\Phi}$ and $\bm{\Theta}$ lie in $\mathbb{R}^{d_\mathrm{out}(d_\mathrm{out} + 1)/2 \times d_\mathrm{out}(d_\mathrm{out} + 1)/2}$.
By letting $d = d_\mathrm{in} = d_\mathrm{out}(d_\mathrm{out} + 1)/2$,  $\bm{W}_h = {\bm{\Phi}}$, $\bm{W}_x = {\bm{\Theta}}$, and $\bm{x}_t = \text{vech}(\bm{y}_{t-1} \bm{y}_{t-1}^\top)$, this model can be rewritten as the RNN in \eqref{eq:nonlinear-rnn} with a linear activation function. The resulting model can be trained by specifying a Gaussian likelihood loss and, by the same recasting principle, extends naturally to general GARCH$(p,q)$ models.
More broadly, there are many other multivariate GARCH models in the literature, and most of them can also be represented as RNNs whose weight matrices possibly have special low-dimensional structures.

\subsection{Decoupling Recurrent Dynamics}\label{subsec:decouple}
For both RNNs and time series models in the previous subsection, the matrix $\bm{W}_h$ governs the recurrent dynamics by controlling how subsequent hidden states are shaped by previous ones, and we therefore refer to it as the \textit{recurrent matrix}. Compared with AR models, the recurrent matrix in ARMA models enables persistent dependence, i.e., current observations can depend on those far in the past, which leads to significantly better forecasting performance in real applications \citep{cryer2008time}.
However, this increased expressive power also poses substantial challenges for interpreting fitted ARMA models, especially when $\bm{W}_h$ has a large size, and these difficulties are further exacerbated for RNNs.
Motivated by this issue, this subsection attempts to decompose the recurrent matrix such that we can break the recurrent dynamics of RNNs into simpler and more interpretable components.


We start by examining a special scenario where the recurrent matrix is block diagonal, i.e., $\bm{W}_h = \oplus_{k=1}^{K} \bm{W}_h^{(k)}\in\mathbb{R}^{d\times d}$, with $\oplus$ denoting the matrix direct sum, $\bm{W}_h^{(k)} \in \mathbb{R}^{d_k \times d_k}$ and $d=\sum_{k=1}^{K} d_k$. Then the RNN at \eqref{eq:nonlinear-rnn} can be decomposed into a series of smaller RNNs,
\begin{linenomath}
	\begin{align}
		\bm{h}_t^{(k)} = \sigma_h(\bm{W}_h^{(k)} \bm{h}_{t-1}^{(k)} + \bm{W}_x^{(k)} \bm{x}_t + \bm{b}^{(k)}) \in \mathbb{R}^{d_k}
		\quad \text{and} \quad
		\bm{h}_t = \text{Concat}[\bm{h}_t^{(1)}, \ldots,\bm{h}_t^{(K)}],
		\label{eq:nonlinear-rnn-diag}
	\end{align}  
\end{linenomath}
where Concat[$\cdot$] denotes concatenation, and $\bm{W}_x^{(k)}$ and $\bm{b}^{(k)}$ are obtained by partitioning $\bm{W}_x$ and $\bm{b}$ along rows, i.e., $\bm{W}_x^{(k)} = (\bm{W}_x)_{a_{k-1}+1: a_k, :}$, $\bm{b}^{(k)} = \bm{b}_{a_{k-1}+1: a_k}$, $a_0=0$, and $a_k = \sum_{i=1}^{k}d_i$ for $k>0$. In each constituent RNN, hidden neurons interact solely with those from the previous time step and the same group, remaining independent across groups.
In other words, the block diagonality of the recurrent matrix offers a way to separate recurrent dynamics.
Moreover, the hidden state $\bm{h}_t$ at \eqref{eq:nonlinear-rnn-diag} admits an additive representation, 
\begin{equation}
	\bm{h}_t = \sum_{k=1}^K f_{\text{RNN}}^{(k)} (\mathcal{F}_{t}; \bm{\Psi}^{(k)})
	\quad
	\text{with}
	\quad
	f_{\text{RNN}}^{(k)} (\mathcal{F}_{t}; \bm{\Psi}^{(k)}) = (\bm{0}_{d_k \times a_{k-1}}, \bm{I}_{d_k \times d_k}, \bm{0}_{d_k \times (d - a_k)})^{\top}\bm{h}_t^{(k)}.
	\label{eq:rnn-additive}
\end{equation}
The mapping $f_{\text{RNN}}^{(k)} (\mathcal{F}_{t}; \bm{\Psi}^{(k)})$ is induced by the $k$-th smaller RNN with parameter set $\bm{\Psi}^{(k)} = \{\bm{W}_h^{(k)}, \bm{W}_x^{(k)}, \bm{b}^{(k)}\}$. 
This additive structure brings benefits in both interpretability and computational efficiency. The RNN can be viewed through the lens of generalized additive and index models \citep{yuan2011on, chen2016generalized}, reducing interpretation from a single entangled recurrent mechanism to the analysis of temporal dynamics for each component separately. Meanwhile, the functions $f_{\text{RNN}}^{(k)} (\mathcal{F}_{t}; \bm{\Psi}^{(k)})$ can be evaluated in parallel, yielding computational gains that are unavailable for general RNNs with dense recurrent matrices.

In general, the recurrent matrix $\bm{W}_h$ is not necessarily block diagonal, and we may try to find its close approximation that has the block diagonal form, thereby allowing the recurrent dynamics to be deliberately disentangled.
Specifically, let $\mathbb{M}_d$ be the set of all $d \times d$ real matrices with rank at least two, and assume that $d$ is even without loss of generality. Denote similarity over $\mathbb{R}$ by $\sim$, and define the following subsets of $\mathbb{M}_d$: $\mathbb{M}_d^1 \coloneqq \{\bm{W} \in \mathbb{M}_d: \bm{W} \sim \text{Diag}(w^{(1)}, w^{(2)}, \ldots, w^{(d)}), w^{(k)}\in \mathbb{R}, 1\leq k \leq d\}$ and $\mathbb{M}_d^2 \coloneqq  \{\bm{W} \in \mathbb{M}_d: \bm{W} \sim \oplus_{k=1}^{d/2} \bm{W}^{(k)}, \bm{W}^{(k)} \in \mathbb{R}^{2 \times 2}\}$, where $\text{Diag}(\cdot)$ takes its arguments to construct a diagonal matrix.
\begin{theorem}
	\label{thm:dense}
	It holds that
	$\mathbb{M}_d^2$ is dense in $\mathbb{M}_d$, while $\mathbb{M}_d^1$ is not dense in $\mathbb{M}_d$.
\end{theorem}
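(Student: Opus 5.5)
For the first claim I would use the density of diagonalizable-over-$\mathbb{C}$ matrices, combined with the real Jordan form. Fix $\bm{W}\in\mathbb{M}_d$ and $\varepsilon>0$. The set of real matrices with $d$ distinct complex eigenvalues is dense in $\mathbb{R}^{d\times d}$. One way to see this is that the discriminant of the characteristic polynomial is a nonzero polynomial in the entries, so its zero set has empty interior. Alternatively, take the real Schur form $\bm{W}=\bm{Q}\bm{T}\bm{Q}^\top$ with $\bm{T}$ block upper triangular with $1\times1$ and $2\times2$ diagonal blocks, and perturb the diagonal blocks slightly so that all eigenvalues become distinct. Either way we obtain $\widetilde{\bm{W}}$ with $\|\widetilde{\bm{W}}-\bm{W}\|<\varepsilon$ and distinct eigenvalues.

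Since the rank is lower semicontinuous, $\widetilde{\bm{W}}$ still has rank at least two for small $\varepsilon$, so $\widetilde{\bm{W}}\in\mathbb{M}_d$. Because its eigenvalues are distinct, the real Jordan form gives
\[
\widetilde{\bm{W}}\sim \mathrm{Diag}(\lambda_1,\dots,\lambda_r)\oplus \bm{C}_1\oplus\cdots\oplus\bm{C}_s,
\]
with real $\lambda_i$ and $2\times 2$ blocks $\bm{C}_j=\begin{pmatrix}a_j&-b_j\\ b_j&a_j\end{pmatrix}$, where $r+2s=d$. Since $d$ is even, $r$ is even, so the real eigenvalues can be paired into $2\times2$ diagonal blocks $\mathrm{Diag}(\lambda_{2i-1},\lambda_{2i})$. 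This gives $\widetilde{\bm{W}}\in\mathbb{M}_d^2$.

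For the second claim, I would exhibit an open set in $\mathbb{M}_d$ that is disjoint from $\mathbb{M}_d^1$. Any $\bm{W}\in\mathbb{M}_d^1$ is similar over $\mathbb{R}$ to a real diagonal matrix, so all of its eigenvalues are real. Take $\bm{W}_0=\oplus_{k=1}^{d/2}\begin{pmatrix}0&-1\\1&0\end{pmatrix}$. It is invertible, so it lies in $\mathbb{M}_d$, and its eigenvalues are $\pm i$, which are non-real. By continuity of eigenvalues (roots of the characteristic polynomial depend continuously on its coefficients, e.g.\ via Rouché's theorem), there is a $\delta>0$ such that every $\bm{W}$ with $\|\bm{W}-\bm{W}_0\|<\delta$ has all eigenvalues within distance $1/2$ of $\{\pm i\}$. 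Such eigenvalues are non-real, so this ball misses $\mathbb{M}_d^1$. Shrinking $\delta$ if needed keeps the ball inside $\mathbb{M}_d$, since invertibility is an open condition. Hence $\mathbb{M}_d^1$ is not dense.

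The main obstacle is the density step for matrices with distinct eigenvalues, in particular making the perturbation real while keeping conjugate pairs consistent. I would handle this with the discriminant argument: $\mathrm{disc}(\chi_{\bm{W}})$ is a real polynomial in the entries and is not identically zero, since for example it is nonzero at $\mathrm{Diag}(1,\dots,d)$. A nonzero polynomial cannot vanish on an open set, so matrices with nonzero discriminant are dense, which avoids any explicit construction.
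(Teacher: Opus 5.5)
Your proof is correct, but for the density half it takes a different route from the paper. The paper gets density of $\mathbb{M}_d^2$ purely by citing Theorem~1 of Hartfiel (1995) on dense sets of diagonalizable matrices. You instead give a self-contained argument with four ingredients. First, simple spectrum is generic, since the discriminant does not vanish. Second, having rank at least two is an open condition. Third, a matrix with simple spectrum has a real Jordan form built only from $1\times 1$ real blocks and $2\times 2$ rotation--scaling blocks. Fourth, $r+2s=d$ with $d$ even lets you pair the real eigenvalues into $2\times 2$ diagonal blocks.

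What your route buys is transparency. It shows exactly where the evenness of $d$ is used. It also gives a slightly stronger statement: the approximants can be taken with distinct eigenvalues, hence with only R-1 and C-1 recurrence features. That is the same genericity fact the paper invokes separately in proving Proposition~\ref{thm:recurrence-feature}, so your argument links the two results; the citation is shorter but opaque.

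For non-density you follow essentially the paper's argument: a witness with non-real eigenvalues plus continuity of eigenvalues. The paper's witness $\bm{C}_1(\gamma,\theta)\oplus\bm{J}_1(\lambda_1)\oplus\cdots\oplus\bm{0}$ relies on the pair $1\pm i$ persisting under perturbation. Your direct sum of rotation blocks has its whole spectrum at distance $1$ from the real axis, so you only need that every eigenvalue of a nearby matrix lies near $\{\pm i\}$. Since $\bm{W}_0$ is normal, Bauer--Fike even gives the explicit radius $1$ in the spectral norm.

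Your step of shrinking $\delta$ to stay inside $\mathbb{M}_d$ is harmless but unnecessary. Because $\mathbb{M}_d^1\subset\mathbb{M}_d$, any ball around a point of $\mathbb{M}_d$ that misses $\mathbb{M}_d^1$ already yields a nonempty relatively open subset of $\mathbb{M}_d$ disjoint from $\mathbb{M}_d^1$.
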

Theorem \ref{thm:dense} implies that any matrix in $\mathbb{M}_d$ can be approximated arbitrarily closely by a block diagonal alternative with $2\times 2$ real blocks. The approximation is considered under similarity transformations which are standard in matrix analysis as they preserve linear dynamics  \citep{horn2012matrix}.

In fact, the block size of the alternative matrix can be generalized beyond $2\times 2$. Let $d_1, d_2, \ldots, d_K$ be even integers that satisfy $d = \sum_{k=1}^{K}d_k$, and define $\mathbb{M}_d^{\mathrm{even}} \coloneqq  \{\bm{W} \in \mathbb{M}_d: \bm{W} \sim \oplus_{k=1}^{K} \bm{W}^{(k)}, \bm{W}^{(k)} \in \mathbb{R}^{d_k \times d_k}\}$. Since $\mathbb{M}_d^2\subset \mathbb{M}_d^{\mathrm{even}} \subset \mathbb{M}_d$, and $\mathbb{M}_d^2$ is dense in $\mathbb{M}_d$, it follows that $\mathbb{M}_d^{\mathrm{even}}$ is also dense in $\mathbb{M}_d$. Thus, block diagonal matrices with even-dimensional blocks suffice to approximate a general matrix,
and we can directly consider $\bm{W}_h \in \mathbb{M}_d^{\mathrm{even}}$ for \eqref{eq:nonlinear-rnn}.
As a result, there exist an invertible matrix $\bm{B} \in \mathbb{R}^{d\times d}$ and a block diagonal matrix $\tilde{\bm{W}}_h = \oplus_{k=1}^{K}\tilde{\bm{W}}_h^{(k)}$ such that $\bm{W}_h = \bm{B} \tilde{\bm{W}}_h \bm{B}^{-1}$ and
\begin{linenomath}
	\begin{align}
		\bm{h}_t &= \sigma_h[\bm{B}(\tilde{\bm{W}}_h\bm{B}^{-1}\bm{h}_{t-1}+\bm{B}^{-1}\bm{W}_x\bm{x}_t + \bm{B}^{-1}\bm{b})] \notag \\
		&\approx \bm{B}\sigma_h(\tilde{\bm{W}}_h \bm{B}^{-1}\bm{h}_{t-1}+\bm{B}^{-1}\bm{W}_x\bm{x}_t + \bm{B}^{-1}\bm{b}),
		\label{eq:approx}
	\end{align}
\end{linenomath}
where the second step approximates the dynamics by moving the matrix $\bm{B}$ outside the activation function, and it becomes mathematically equivalent for linear activation functions. 
However, for nonlinear activation functions, the equivalence does not hold in general.
Nonetheless, it may serve as a useful modeling compromise to motivate a RNN surrogate: 
\begin{linenomath}
	\begin{align}
		\tilde{\bm{h}}_t=\sigma_h(\tilde{\bm{W}}_h\tilde{\bm{h}}_{t-1}+\tilde{\bm{W}}_x\bm{x}_t + \tilde{\bm{b}})
		\quad \text{and} \quad
		\bm{h}_t = \bm{B}\tilde{\bm{h}}_t,
		\label{eq:rnn-approx-main}
	\end{align}
\end{linenomath}
where $\tilde{\bm{W}}_x = \bm{B}^{-1}\bm{W}_x $ and $\tilde{\bm{b}}= \bm{B}^{-1}\bm{b}$. The recurrent matrix of this surrogate model has the desirable block diagonal form, inducing an additive model structure similar to \eqref{eq:rnn-additive} and enabling interpretation and parallelization through decoupled recurrent components.

\section{Parallelized Recurrent Neural Network}\label{sec:pararnn}
\subsection{Model Formulation} \label{subsec:pararnn-architecture}
\begin{figure}[t]
	\centering
	\begin{subfigure}{.3\textwidth}
		\centering
		\includegraphics[width=.92\linewidth]{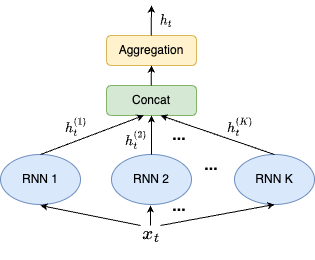}
		\caption{}
	\end{subfigure}%
	\begin{subfigure}{.3\textwidth}
		\centering
		\includegraphics[width=.98\linewidth]{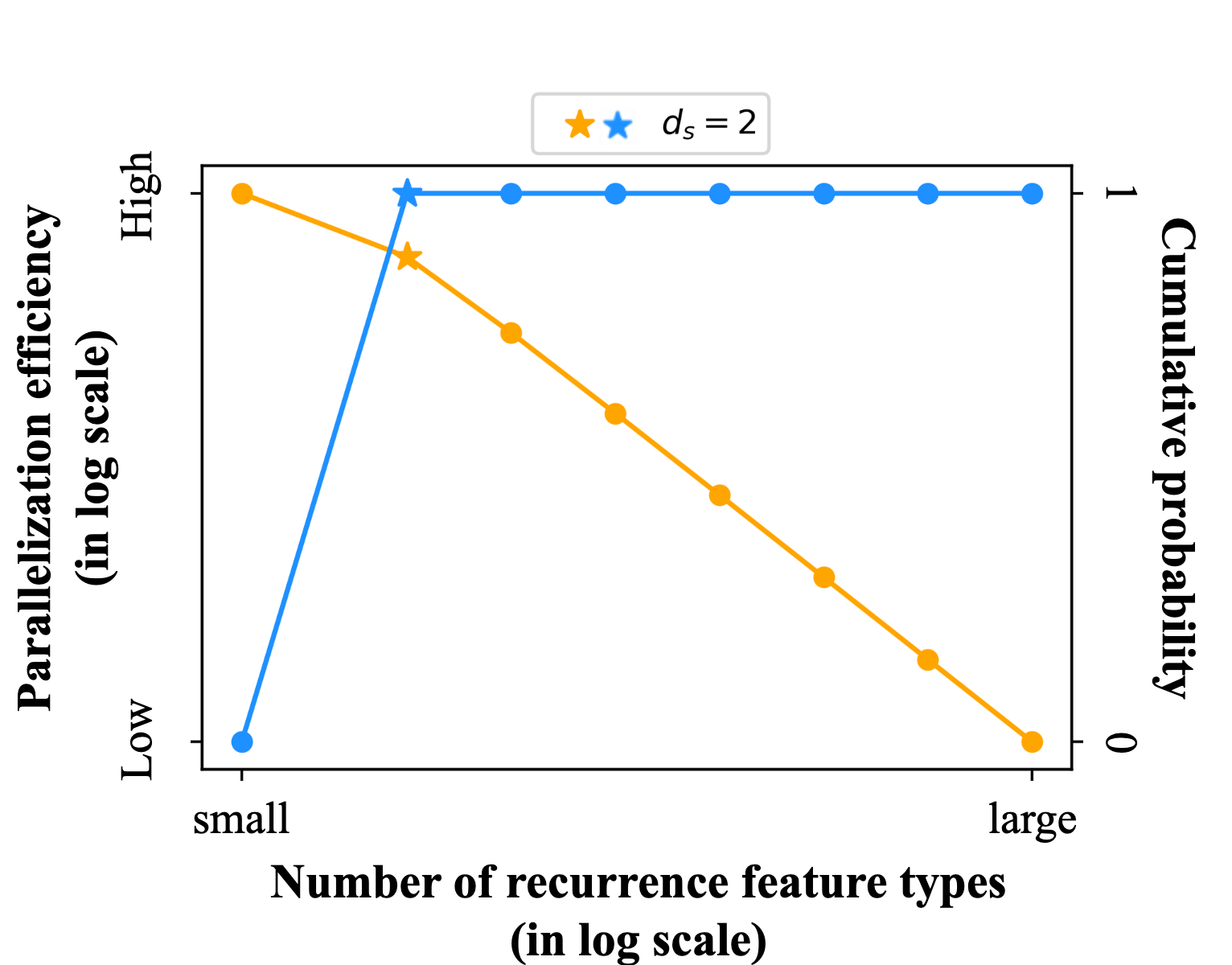}
		\caption{}
	\end{subfigure}%
	\begin{subfigure}{.33\textwidth}
		\centering
		\includegraphics[width=1.02\linewidth]{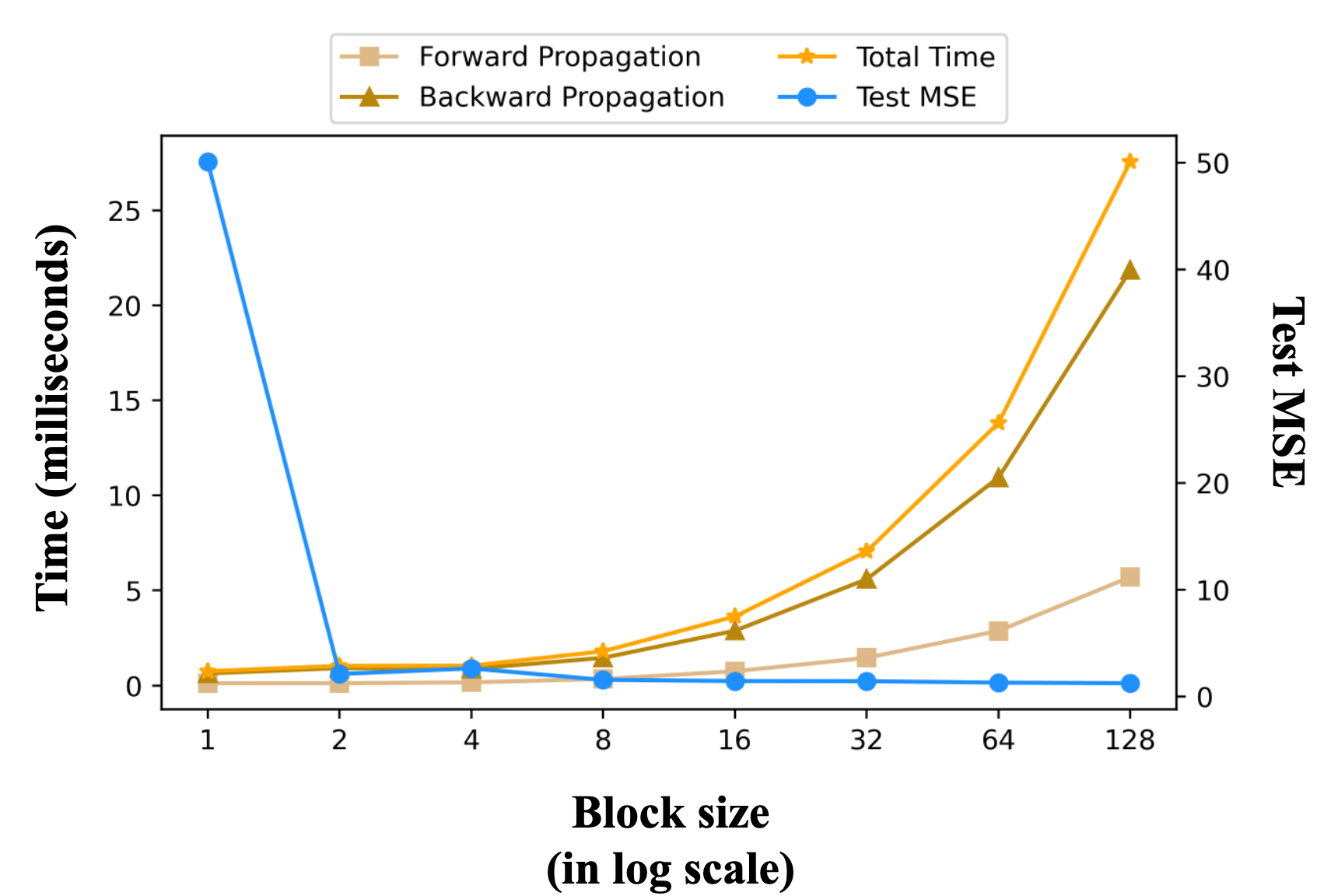}
		\caption{}
	\end{subfigure}
	\caption{(a) A ParaRNN layer at time $t$. (b) Number of recurrence feature types ($d=128$): its cumulative distribution from Proposition \ref{thm:recurrence-feature} (in blue) and the trade-off with parallelization efficiency from Theorem \ref{thm:complexity} (in orange). (c) Test MSE averaged over 25 simulation replicates (in blue), and average execution time over 100 forward and backward passes as well as their sum for a ParaRNN layer on a single V100 GPU.}
	\label{fig:two}
\end{figure}
We propose a new recurrent neural network that is composed of $K$ small RNNs with equal hidden size $d_s$ (i.e., $d_1 = d_2 = \cdots = d_K = d_s$):
\begin{linenomath}
	\begin{align}
		{\bm{h}}_{t}^{(k)} = \text{Recurrent-cell}(\bm{h}_{t-1}^{(k)}, \bm{x}_t) \in \mathbb{R}^{d_s}
		\quad \text{and} \quad
		\bm{h}_t =\bm{W}_f(\text{Concat}[{\bm{h}}_t^{(1)}, \ldots,{\bm{h}}_t^{(K)}]).
		\label{eq:pararnn}
	\end{align}
\end{linenomath}
Here $d_s$ and $K$ are hyperparameters, with $d_s$ restricted to be an even integer and the total hidden dimension given by $d=Kd_s$. $\text{Recurrent-cell}(\cdot)$ takes the same formulation as the first equality in \eqref{eq:nonlinear-rnn-diag}.
In addition, a parameter matrix, $\bm{W}_f \in \mathbb{R}^{d\times d}$, is utilized to aggregate all small hidden states, serving a similar purpose as the matrix $\bm{B}$ in \eqref{eq:rnn-approx-main}. 
In practice, the aggregation step can adopt alternative architectures, such as a fully-connected (FC) layer or a position-wise feedforward network \citep{vaswani2017attention}.
An illustration of the model architecture is presented in Figure \ref{fig:two}(a). Note that when $d_s=d$ and $K=1$, the proposed model is equivalent to the general RNN at \eqref{eq:nonlinear-rnn}.

Our design explicitly builds on the decoupling and approximation ideas developed in Section \ref{sec:motivation}. As in \eqref{eq:rnn-additive}, the hidden state of our network admits an additive form,
\begin{equation*}
	\vspace{-0.5em}
	\bm{h}_t = \sum_{k=1}^K f_{\text{RNN}}^{(k)} (\mathcal{F}_{t}; \bm{\Psi}^{(k)})
	\quad
	\text{with}
	\quad
	f_{\text{RNN}}^{(k)} (\mathcal{F}_{t}; \bm{\Psi}^{(k)}) =\bm{W}_f^{(k)} \bm{h}_t^{(k)},
	\vspace{0.1em}
\end{equation*}
where $\bm{W}_f^{(k)} = (\bm{W}_f)_{:, a_{k-1}+1: a_k}$ and $\bm{\Psi}^{(k)} = \{\bm{W}_h^{(k)}, \bm{W}_x^{(k)}, \bm{b}^{(k)}, \bm{W}_f^{(k)}\}$. As a result, the network is both more interpretable and computationally efficient: its recurrent dynamics are disentangled into components, and its forward and backward propagations can be parallelized across $K$ constituent RNNs. We refer to the network defined in \eqref{eq:pararnn} as \textbf{ParaRNN}.

\begin{theorem}
	Without the equal-sized constraint in \eqref{eq:pararnn}, the $T$-step time complexity of forward propagation for one ParaRNN layer is $\mathcal{O}(Td_\mathrm{max}^2+ T{d_\mathrm{max}}d_{\mathrm{in}} + d^2)$, and that of backward propagation is $\mathcal{O}(T^2 d_\mathrm{max}^3 + dd_\mathrm{max}^3)$, where $d_\mathrm{max} = \max_{1\leq k\leq K}{d_k}$.
	\label{thm:complexity}
\end{theorem}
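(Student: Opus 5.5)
The argument will be an operation count carried out in a parallel computational model. The $K$ constituent recurrent cells run simultaneously, so the wall-clock cost of a step is the cost of the largest block, of size $d_{\max}$. The aggregation step $\bm{W}_f$ is applied once and is counted separately. We treat the cost of a matrix--vector product $\bm{A}\bm{v}$ with $\bm{A}\in\mathbb{R}^{m\times n}$ as $\mathcal{O}(mn)$, and we take the element-wise activation and its derivative to cost $\mathcal{O}(1)$ per coordinate.

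\textbf{Forward pass.} At time $t$, cell $k$ computes
$\bm{h}_t^{(k)}=\sigma_h(\bm{W}_h^{(k)}\bm{h}_{t-1}^{(k)}+\bm{W}_x^{(k)}\bm{x}_t+\bm{b}^{(k)})$.
\begin{itemize}
\item The recurrent product costs $\mathcal{O}(d_k^2)$.
\item The input product costs $\mathcal{O}(d_kd_{\mathrm{in}})$.
\item The bias and activation cost $\mathcal{O}(d_k)$.
\end{itemize}
Running the $K$ cells in parallel and iterating over $T$ steps gives $\mathcal{O}(Td_{\max}^2+Td_{\max}d_{\mathrm{in}})$. The aggregation $\bm{W}_f\,\mathrm{Concat}[\cdot]$ is a dense $d\times d$ product. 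It contributes the $d^2$ term, which does not multiply $T$ under the statement's convention that the aggregation is parallelized over time (or applied at the final output). Summing the two parts yields the stated forward bound.

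\textbf{Backward pass.} We use backpropagation through time on each block. The gradient of the loss with respect to $\bm{W}_h^{(k)}$ involves, for each pair $s\le t$, the Jacobian product
\[
\frac{\partial \bm{h}_t^{(k)}}{\partial \bm{h}_s^{(k)}}=\prod_{j=s+1}^{t}\mathrm{Diag}\bigl(\sigma_h'(\cdot)\bigr)\bm{W}_h^{(k)}.
\]
If each such product is formed as a $d_k\times d_k$ matrix, every multiplication costs $\mathcal{O}(d_k^3)$.

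Summing over all pairs $(s,t)$ while reusing prefix products gives $\mathcal{O}(T^2)$ matrix multiplications, hence $\mathcal{O}(T^2d_{\max}^3)$ per block, and blocks run in parallel. Gradients with respect to $\bm{W}_x^{(k)}$ and $\bm{b}^{(k)}$ reuse these Jacobians. They add only lower-order terms $\mathcal{O}(T^2d_{\max}^2d_{\mathrm{in}})$, which I will absorb under the implicit convention $d_{\mathrm{in}}\lesssim d_{\max}$, or otherwise note separately.

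The remaining term $dd_{\max}^3$ comes from propagating the gradient through $\bm{W}_f$ into each block. The gradient $\partial\bm{h}_t/\partial\bm{h}_t^{(k)}=\bm{W}_f^{(k)}$ is $d\times d_k$. Chaining it through $d_k\times d_k$ Jacobian/parameter-derivative tensors costs $\mathcal{O}(d\,d_k\cdot d_k^2)$, which gives $\mathcal{O}(dd_{\max}^3)$.

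\textbf{Main obstacle.} The hard part is fixing the accounting conventions so that the bounds come out exactly as stated. Three choices matter:
\begin{itemize}
\item whether the $\bm{W}_f$ aggregation is charged once or at every $t$, which decides whether the forward term is $d^2$ or $Td^2$;
\item explicit Jacobian-matrix formation (the $d^3$ per product) versus vector--Jacobian products (which would give only $Td^2$);
\item how the derivative of a $d_k\times d_k$ parameter block combines with the $d\times d_k$ aggregation slice to produce $d\,d_{\max}^3$.
\end{itemize}
I will state these conventions explicitly at the start: the naive full-Jacobian BPTT implementation, with aggregation applied once to the final hidden state. I will then verify that each term matches, checking the $dd_{\max}^3$ term most carefully.
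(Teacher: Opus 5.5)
Your operation count is essentially the paper's. The $K$ blocks run in parallel, so the largest block dominates. The forward pass costs $\mathcal{O}(d_{\max}^2+d_{\max}d_{\mathrm{in}})$ per step, plus a single $\mathcal{O}(d^2)$ aggregation justified by parallelism across time. The backward pass forms Jacobian products explicitly at $\mathcal{O}(d_{\max}^3)$ each. The $dd_{\max}^3$ term is exactly the product of the $d\times d_{\max}$ slice $\bm{W}_f^{(k)}$ with the Jacobian $\partial\bm{h}_T^{(k)}/\partial\mathrm{vec}(\bm{W}_h^{(k)})$; that Jacobian is $d_k\times d_k^2$, not $d_k\times d_k$ as you wrote, although your cost $\mathcal{O}(d\,d_k\cdot d_k^2)$ is right.

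The accounting differs from the paper in two places.

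First, the source of $T^2$. The paper does not sum over pairs $(s,t)$. It costs only the single Jacobian $\partial\bm{h}_T^{(k)}/\partial\mathrm{vec}(\bm{W}_h^{(k)})=\bm{B}_T+\sum_{j=1}^{T-1}(\prod_{i=0}^{j-1}\bm{C}_{T-i})\bm{B}_{T-j}$, where $\bm{C}_t=\mathrm{Diag}(\sigma_h'(\bar{\bm{h}}_t^{(k)}))\bm{W}_h^{(k)}$ and $\bm{B}_t=\mathrm{Diag}(\sigma_h'(\bar{\bm{h}}_t^{(k)}))(\bm{h}_{t-1}^{(k)}\otimes\bm{I})^\top$. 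It charges the $j$-th term $j$ dense products with no reuse. It also uses the sparsity of $\bm{B}_t$ so that the last factor costs $d_{\max}^3$ rather than $d_{\max}^4$. You need the same sparsity remark wherever you combine your $d_k\times d_k$ Jacobians with the parameter derivative.

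Your all-pairs count with prefix reuse also gives $\mathcal{O}(T^2d_{\max}^3)$. However, it is inconsistent with your closing convention of aggregating only the final hidden state. Under that convention only pairs $(s,T)$ enter, and reuse gives $\mathcal{O}(Td_{\max}^3)$. That is still inside the stated bound, so nothing fails, but you should commit to one convention.

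Second, the $\bm{W}_x^{(k)}$ gradients. The paper never charges them; it declares the $\bm{W}_h^{(k)}$-Jacobian to be the representative measure of backward cost. Adopt that convention rather than assuming $d_{\mathrm{in}}\lesssim d_{\max}$. That assumption is not a hypothesis of the theorem, and it fails in the recommended regime: in stacked layers the input is the concatenated state, so $d_{\mathrm{in}}=d\gg d_{\max}=2$. Under your fuller accounting, the backward bound would pick up $d_{\mathrm{in}}$-dependent terms that the statement does not contain.
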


Although ParaRNN can allow constituent RNNs to have heterogeneous hidden sizes $d_1, d_2, \ldots, d_K$, we recommend the equal-sized setting to maximize parallelization efficiency.
Theorem \ref{thm:complexity} shows that, in the absence of an equal-sized constraint, the time complexity of a ParaRNN layer is determined by the constituent RNN with the largest hidden size.
In comparison with a vanilla RNN of the same total hidden dimension $d$, which incurs forward and backward complexities of $\mathcal{O}(Td^2 + Tdd_{\mathrm{in}})$ and $\mathcal{O}(T^2 d^3)$, respectively, 
ParaRNN achieves substantial computational savings when all constituents have the same hidden size and $d_s$ is small. The aggregation step brings an additional cost of $\mathcal{O}(d^2)$ and $\mathcal{O}(dd_\mathrm{max}^3)$ to forward and backward propagation. However, these complexities do not scale with sequence length $T$ since the aggregation is parallel across time.

\begin{algorithm}[t]
	\caption{Forward propagation of $L$ ParaRNN layers} \label{alg:ParaRNN}
	{\setlength{\baselineskip}{0.73\baselineskip}
	\begin{algorithmic}[1]
		\STATE \textbf{Input:} $\{\bm{x}_t\}_{t=1}^{T}$, $\sigma_h(\cdot)$, $K$, $d_s$, and $L$ \\
		\STATE \textbf{Output}: the output hidden states of the $L$-th layer $\{\bm{h}_{t, L}\}_{t=1}^T$ \\
		\STATE \textbf{for} $l \leftarrow 1$ \textbf{to} $L$ \textbf{do} \\
		\STATE  \hspace{\algorithmicindent} Initialize ${\bm{h}}_{0,l}^{(k)}\leftarrow\bm{0}$ for $1 \leq k \leq K$ \\
		\STATE \hspace{\algorithmicindent} \textbf{for} $t \leftarrow 1$ \textbf{to} $T$ \textbf{do} \\
		\STATE \hspace{\algorithmicindent}\hspace{\algorithmicindent} \textbf{do in parallel}
		\STATE \hspace{\algorithmicindent}\hspace{\algorithmicindent}\hspace{\algorithmicindent} \textbf{if} $l=1$ \textbf{then}
		\STATE \hspace{\algorithmicindent}\hspace{\algorithmicindent}\hspace{\algorithmicindent}\hspace{\algorithmicindent} compute ${\bm{h}}_{t,l}^{(k)} \leftarrow \sigma_h({\bm{W}}_{h,l}^{(k)} {\bm{h}}_{t-1,l}^{(k)} + {\bm{W}}_{x,l}^{(k)} \bm{x}_t)$ for $1\leq k \leq K$
		\STATE \hspace{\algorithmicindent}\hspace{\algorithmicindent}\hspace{\algorithmicindent}
		\textbf{else}
		\STATE 
		\hspace{\algorithmicindent}\hspace{\algorithmicindent}\hspace{\algorithmicindent}\hspace{\algorithmicindent}
		compute ${\bm{h}}_{t,l}^{(k)} \leftarrow \sigma_h({\bm{W}}_{h,l}^{(k)} {\bm{h}}_{t-1,l}^{(k)} + {\bm{W}}_{x,l}^{(k)} \text{Concat}[{\bm{h}}_{t,l-1}^{(1)}, ..., {\bm{h}}_{t,l-1}^{(K)}])$ for $1\leq k \leq K$
		\STATE \hspace{\algorithmicindent}\hspace{\algorithmicindent}\hspace{\algorithmicindent} \textbf{end if}
		\STATE \hspace{\algorithmicindent} \textbf{end for}
		\STATE \textbf{end for}
		\STATE \textbf{do in parallel}
		\STATE \hspace{\algorithmicindent} compute $\bm{h}_{t,L} \leftarrow \text{Aggregate}(\text{Concat}[{\bm{h}}_{t, L}^{(1)},...,{\bm{h}}_{t, L}^{(K)}])$ for $1 \leq t \leq T$
	\end{algorithmic}
}
\end{algorithm}

Finally, same as standard recurrent architectures, ParaRNN can be extended to a deep network by stacking multiple layers together, which facilitates the modeling of more intricate temporal dynamics. The forward propagation for $L$ ParaRNN layers is described in Algorithm \ref{alg:ParaRNN}. 
To mitigate the computational overhead, the aggregation step is applied only once after the final ParaRNN layer; in the intermediate layers, it is implicitly incorporated into the linear projection of layer inputs that the recurrent computation already requires.

\subsection{Model Interpretation}\label{subsec:recurrence-features}
As established by the additive structure in Section \ref{subsec:pararnn-architecture}, the overall recurrent dynamics of ParaRNN already decompose into a sum of contributions from individual RNNs. Nevertheless, the interpretation of the temporal behavior within each component remains elusive. This subsection attempts to characterize the recurrent dynamics induced by each constituent recurrent matrix $\bm{W}_h^{(k)} \in \mathbb{R}^{d_s \times d_s}$. To this end, we employ a classical result from matrix analysis, namely the real Jordan decomposition \citep[Theorem 3.1.11]{horn2012matrix}.

\begin{proposition}[Real Jordan Decomposition]
	Suppose a matrix $\bm{W} \in \mathbb{R}^{d_s \times d_s}$ has $r$ distinct nonzero real eigenvalues $\{\lambda_j\}_{j=1}^r$ and $s$ distinct conjugate pairs of nonzero complex eigenvalues $\{(\lambda_{r+2k-1}, \lambda_{r+2k}) = (\gamma_k e^{i\theta_k}, \gamma_k e^{-i\theta_k}) \}_{k=1}^s$ with $\lambda_j \in \mathbb{R}$, $\gamma_k > 0$ and $\theta_k \in (-\pi/2, \pi/2)$. Assume that each nonzero eigenvalue has geometric multiplicity one. Then the matrix $\bm{W}$ is similar over $\mathbb{R}$ to a real block diagonal matrix 
	\begin{linenomath}
		\begin{align}
			\bm{J} = \bm{J}_{n_1}(\lambda_1) \oplus \cdot \cdot \cdot \oplus\bm{J}_{n_r}(\lambda_r) \oplus  \bm{C}_{n_{r+1}}(\gamma_1, \theta_1) \oplus \cdot \cdot \cdot \oplus \bm{C}_{n_{r+s}}(\gamma_s, \theta_s)
			\oplus \bm{0}_{\mathrm{nullity}(\bm{W} )},
			\label{eq:J-form-main}
		\end{align} 
	\end{linenomath}
	where the real Jordan blocks are defined by 
	\begin{align*} 
		\bm{J}_n(\lambda) = 
		\begin{pmatrix}
			\lambda & 1 &   &  \\
			& \ddots & \ddots & \\
			& & \lambda & 1 \\
			& & & \lambda \\
		\end{pmatrix} \in\mathbb{R}^{n\times n}
		\hspace{1.5mm} \text{and} \hspace{1.5mm}
		\bm{C}_{n}(\gamma, \theta) = 
		\begin{pmatrix}
			\bm{C}(\gamma, \theta) & \bm{I}_2&   &  \\
			& \ddots & \ddots & \\
			& &\ddots & \bm{I}_2\\
			& & &\bm{C}(\gamma, \theta)  \\
		\end{pmatrix}\in\mathbb{R}^{2n\times 2n}
	\end{align*}
	with 
	$\bm{C}(\gamma, \theta) =  \left( \begin{smallmatrix} \cos{\theta} & \sin{\theta} \\ -\sin{\theta} & \cos{\theta}  \end{smallmatrix} \right)\in \mathbb{R}^{2\times2}$
	and $\bm{I}_2 \in  \mathbb{R}^{2\times2}$ being the identity matrix. 
	The subscript $n_k \geq 1$ denotes the corresponding algebraic multiplicity of real eigenvalue $\lambda_k$ or complex conjugate pair $(\gamma_{k-r} e^{i\theta_{k-r}}, \gamma_{k-r} e^{-i\theta_{k-r}})$, and $\bm{0}_{\mathrm{nullity}(\bm{W} )} \in \mathbb{R}^{\mathrm{nullity}(\bm{W} )\times \mathrm{nullity}(\bm{W} )}$ is a zero matrix with $\mathrm{nullity}(\bm{W} )$ representing the dimension of $\bm{W} $'s null space.
	\label{prop:RJD}
\end{proposition}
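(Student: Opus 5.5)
The plan is to derive the statement from the complex Jordan canonical form, pairing conjugate Jordan chains to obtain real bases, as in the standard argument behind \citep[Theorem 3.1.11]{horn2012matrix}.

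\textbf{Step 1 (complex Jordan form).} Regard $\bm{W}$ as acting on $\mathbb{C}^{d_s}$ and decompose $\mathbb{C}^{d_s}$ into generalized eigenspaces $\mathcal{G}(\lambda)=\ker(\bm{W}-\lambda\bm{I})^{d_s}$. Each of these is $\bm{W}$-invariant and has dimension equal to the algebraic multiplicity of $\lambda$. Since each nonzero eigenvalue has geometric multiplicity one, $\bm{W}|_{\mathcal{G}(\lambda)}$ has a single Jordan block $\bm{J}_{n}(\lambda)$, where $n$ is the algebraic multiplicity. So there is a single chain $v_1,\dots,v_n$ with $(\bm{W}-\lambda\bm{I})v_1=0$ and $(\bm{W}-\lambda\bm{I})v_{j}=v_{j-1}$.

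\textbf{Step 2 (real eigenvalues).} For real $\lambda_j\neq 0$, the matrix $\bm{W}-\lambda_j\bm{I}$ is real. Hence the generalized eigenspace is the complexification of the real space $\ker_{\mathbb{R}}(\bm{W}-\lambda_j\bm{I})^{d_s}$. Running the chain construction over $\mathbb{R}$ gives a real basis in which $\bm{W}$ acts as $\bm{J}_{n_j}(\lambda_j)$.

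\textbf{Step 3 (conjugate pairs).} For $\mu=\gamma e^{i\theta}$ non-real, conjugation maps $\mathcal{G}(\mu)$ onto $\mathcal{G}(\bar\mu)$ and sends a chain for $\mu$ to a chain for $\bar\mu$. Take the chain $v_1,\dots,v_n$ for $\mu$ and set $x_j=\mathrm{Re}\,v_j$, $y_j=\mathrm{Im}\,v_j$. The $2n$ real vectors $x_1,y_1,\dots,x_n,y_n$ span the same complex space as $v_j,\bar v_j$, so they form a real basis of $(\mathcal{G}(\mu)\oplus\mathcal{G}(\bar\mu))\cap\mathbb{R}^{d_s}$.

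Take real and imaginary parts of $\bm{W}v_j=\mu v_j+v_{j-1}$ to get
\begin{equation*}
\bm{W}x_j=\gamma\cos\theta\,x_j-\gamma\sin\theta\,y_j+x_{j-1},\qquad \bm{W}y_j=\gamma\sin\theta\,x_j+\gamma\cos\theta\,y_j+y_{j-1}.
\end{equation*}
In the ordered basis $(x_1,y_1,\dots)$ this is exactly the block $\bm{C}_n(\gamma,\theta)$. Here the $2\times2$ diagonal blocks are $\gamma$ times the displayed rotation; I read $\bm{C}(\gamma,\theta)$ as including the factor $\gamma$. If the sign convention for $\theta$ differs, reorder $(x_j,y_j)$ or replace $v_j$ by $\bar v_j$. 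The normalization of $\theta$ to the stated range comes from choosing which member of each conjugate pair is labelled $\gamma_k e^{i\theta_k}$.

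\textbf{Step 4 (assembly and the zero eigenvalue).} Concatenate all these real bases together with a basis of the zero generalized eigenspace to form an invertible real $\bm{B}$ with $\bm{B}^{-1}\bm{W}\bm{B}$ block diagonal. This is the main obstacle. The statement writes the last block as $\bm{0}_{\mathrm{nullity}(\bm{W})}$, which requires the generalized kernel to equal the kernel, i.e.\ $\ker\bm{W}^2=\ker\bm{W}$ (algebraic and geometric multiplicity of $0$ coincide). This does not follow from the hypotheses on nonzero eigenvalues: a nilpotent $\bm{J}_2(0)$ is a counterexample. So I would either add this semisimplicity of the zero eigenvalue as an implicit assumption, or read the final block as the nilpotent real Jordan part on $\ker\bm{W}^{d_s}$, which equals $\bm{0}$ exactly under that assumption. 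With it, any real basis of $\ker\bm{W}$ completes $\bm{B}$, and the dimension count $\sum_j n_j+2\sum_k n_{r+k}+\mathrm{nullity}(\bm{W})=d_s$ holds.
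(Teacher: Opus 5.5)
Your argument is correct. It is the standard route (complex Jordan form, then splitting each non-real chain $v_j$ into $\mathrm{Re}\,v_j,\mathrm{Im}\,v_j$) that the paper does not reproduce but invokes through its citation of Horn and Johnson. Your Step 3 gives exactly $\bm{C}_n(\gamma,\theta)$ with diagonal blocks $\gamma\bm{C}(\gamma,\theta)$ and no reordering, and you are right that the displayed $\bm{C}(\gamma,\theta)$ omits $\gamma$ and that $\bm{0}_{\mathrm{nullity}(\bm{W})}$ tacitly requires $\ker\bm{W}^2=\ker\bm{W}$, since $1\oplus\bm{J}_2(0)$ violates the statement as written. The one slip is your remark on normalizing $\theta$: relabelling a conjugate pair only flips the sign of $\theta$, so a pair such as $-1\pm i$ can never be written with $\theta\in(-\pi/2,\pi/2)$. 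That range is a genuine hypothesis of the statement (harmless here, since your construction never uses it), not a convention one can always arrange.
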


Without loss of generality, suppose that $\bm{W}_h^{(k)}$ satisfies those eigenvalue conditions in Proposition \ref{prop:RJD}. Then it has the decomposition $\bm{W}_h^{(k)} = \bm{S}\bm{J}\bm{S}^{-1}$ with $\bm{S} \in \mathbb{R}^{d_s \times d_s}$ invertible and $\bm{J}$ of the form \eqref{eq:J-form-main}. Notably, each block of $\bm{J}$ is determined by the eigenvalues of $\bm{W}_h^{(k)}$ and their respective algebraic multiplicities. Putting this decomposition back into Recurrent-cell($\cdot$) and applying the same approximation argument in \eqref{eq:approx}, we obtain the surrogate dynamics
\begin{linenomath}
	\begin{align*}
		\tilde{\bm{h}}_t^{(k)}=\sigma_h\big(\bm{J}\tilde{\bm{h}}_{t-1}^{(k)}+\tilde{\bm{W}}_x^{(k)}\bm{x}_t + \tilde{\bm{b}}^{(k)}\big) \quad \text{and} \quad \bm{h}_t^{(k)} = \bm{S}\tilde{\bm{h}}_t^{(k)},
	\end{align*}
\end{linenomath}
where $\tilde{\bm{W}}_x^{(k)} = \bm{S}^{-1}\bm{W}_x^{(k)}$ and $\tilde{\bm{b}}^{(k)}= \bm{S}^{-1}\bm{b}^{(k)}$. 
This representation further isolates the recurrent behavior of each constituent RNN through the canonical blocks of $\bm{J}$.

As shown in \eqref{eq:J-form-main}, matrix $\bm{J}$ has three types of blocks: $\bm{J}_{n}(\lambda)$, $\bm{C}_{n}(\gamma, \theta)$ and $\bm{0}$. The term $\bm{0}$ produces no recurring pattern, while the others induce two types of recurrent dynamics: 
\begin{linenomath}
	\begin{equation}\label{eq:rnn-basic}
		{\bm{h}}_t=\sigma_h \big(\bm{J}_{n}(\lambda) {\bm{h}}_{t-1} + \bm{W}_x \bm{x}_t + {\bm{b}} \big) \in\mathbb{R}^n \hspace{3mm}\text{and}\hspace{3mm}
		{\bm{h}}_t=\sigma_h \big(\bm{C}_{n}(\gamma,\theta) {\bm{h}}_{t-1} + \bm{W}_x \bm{x}_t + {\bm{b}} \big) \in\mathbb{R}^{2n}.
	\end{equation}
\end{linenomath}
These dynamics cannot be further decoupled, since the recurrent matrices $\bm{J}_n(\lambda)$ and $\bm{C}_{n}(\gamma, \theta)$ are irreducible under the real Jordan decomposition. 
Recognizing that the two RNNs in \eqref{eq:rnn-basic} are the fundamental units to constitute the recurrent dynamics of a ParaRNN, we formally define their dynamics as \textit{recurrence features}.
Furthermore, for an algebraic multiplicity $n$, $\bm{J}_{n}(\lambda)$ and $\bm{C}_{n}(\gamma, \theta)$ characterize distinct temporal dependencies between hidden states and inputs through whether the eigenvalue is real or complex. We hence refer to the corresponding dynamics as Types R-$n$ and C-$n$ recurrence features, respectively. 

When $\sigma_h(\bm{x}) = \bm{x}$, as in ARMA-type models, and $|\lambda|, |\gamma| < 1$,
Type R recurrence features provide a regular exponential decay pattern of temporal dependence, whereas Type C features exhibit a damped sinusoidal wave-like behavior and can be used to capture periodic patterns \citep{fuller1996intro, cryer2008time}. 
For instance, we have 
\begin{align*}
	\bm{h}_t =  \sum_{j=0}^{t-1}\lambda^j \bm{W}_x  \bm{x}_{t-j}
	\hspace{3mm} \text{or} \hspace{3mm} 
	\bm{h}_t =  \sum_{j=0}^{t-1}
	\gamma^j 
	\begin{pmatrix}
		\cos{j\theta} & \sin{j\theta} \\ -\sin{j\theta} & \cos{j\theta}  
	\end{pmatrix}\bm{W}_x \bm{x}_{t-j}
\end{align*}
for a R-1 feature and a C-1 feature, respectively.

When $\sigma_h(\cdot)$ is nonlinear, interpreting recurrence features becomes less direct, as the nonlinearity confounds the dynamics encoded by $\bm{J}_{n}(\lambda)$ and $\bm{C}_{n}(\gamma, \theta)$. Taking a R-1 feature as an example, the local dependence of the current hidden state on the previous one can be quantified as
$\partial \bm{h}_t/\partial \bm{h}_{t-1} = \sigma_h'(\bar{\bm{h}}_{t}) \cdot \lambda$,
where $\sigma_h'(\cdot)$ is the first derivative of $\sigma_h(\cdot)$ and $\bar{\bm{h}}_t=\lambda \bm{h}_{t-1}+\bm{W}_x\bm{x}_t + \bm{b} \in \mathbb{R}$ is the pre-activation state. 
The impact of nonlinearity is data-dependent. 
For commonly used activation functions such as Tanh, Sigmoid, and ReLU, the nonlinearity may preserve exponential decay, accelerate forgetting, or entirely reset memory, thereby altering the linear recurrence behavior. 

\subsection{Hyperparameter Selection}\label{subsec:dominant-feature}
Compared to a general RNN with hidden size $d$ in \eqref{eq:nonlinear-rnn}, a ParaRNN with the same total hidden dimension can be regarded as imposing an equal-sized block diagonal constraint on its recurrent matrix, i.e., $\bm{W}_h = \oplus_{k=1}^{K} \bm{W}_h^{(k)}$ with $\bm{W}_h^{(k)} \in \mathbb{R}^{d_s \times d_s}$. When $d_s < d$, this leads to differences in the recurrence features that the two models are able to provide. Specifically, a general RNN can have at most Types R-$\{n\}_{n=1}^d$ and C-$\{n\}_{n=1}^{d/2}$ recurrence features. In contrast, a ParaRNN can provide at most Types R-$\{n\}_{n=1}^{d_s}$ and C-$\{n\}_{n=1}^{d_s/2}$ features, which form a subset of those attainable by a general RNN and are controlled by the size $d_s$. Increasing $d_s$ expands the range of recurrence feature types that ParaRNN can present, but this comes at the expense of higher time complexity, as indicated by Theorem \ref{thm:complexity} and the orange curve in Figure \ref{fig:two}(b). Consequently, under a limited computational budget, the choice of $d_s$ involves a trade-off between recurrence feature richness and parallelization efficiency.

\begin{figure}[t]
	\centering
	\includegraphics[width=\linewidth]{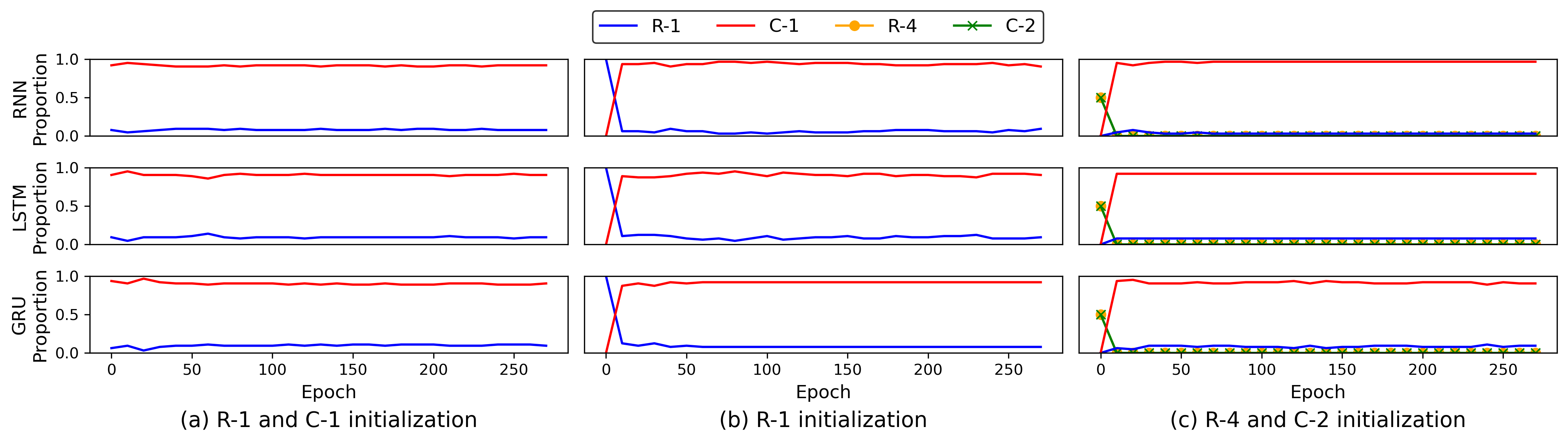}
	\caption{Evolution of recurrence feature types across training for vanilla RNN, LSTM, and GRU models from the permuted sequential MNIST task in Section \ref{sec:result-mnist}. Types other than R-1, C-1, R-4, or C-2 have zero occurrence.}
	\label{fig:recurrence-feature-type}
\end{figure}

To guide the selection of $d_s$, we first examine the empirical prevalence of different recurrence feature types in the real-world examples of Section \ref{sec:result}, where the recurrent models are overparameterized. 
Figure \ref{fig:recurrence-feature-type} tracks the shift of recurrence feature types during training for RNN, LSTM, and GRU models on the permuted sequential MNIST classification task in Section \ref{sec:result-mnist}. 
Despite different initializations, the trained models rapidly concentrate on low-order (R-1 and C-1) features.
Besides, C-1 features occur more frequently than R-1 features, a trend consistent in the time series forecasting example in Section \ref{sec:result-ts} as depicted by Figure \ref{fig:derivative}(b). 
Note that these features and their corresponding eigenvalues undergo active changes at the initial stages of training and stabilize as optimization converges. 
A simulation setting under the asymptotic framework with the data generating process (DGP) known to originate from an RNN model has also been explored in Section \ref{appendix:sec-feature-simulation} of Supplementary Material. While the trained recurrent matrix may converge to true values from the DGP that comprises R-4 and C-2 features only, the recurrence features are still predominantly low-order. 

\begin{proposition}
	Let $\bm{W}_h = (\xi_{ij})_{1 \leq i,j \leq d}$ in \eqref{eq:nonlinear-rnn}, and assume that $\{\xi_{ij}\}$ have a continuous joint distribution. Then, with probability one, the RNN contains only recurrence features of Types R-1, R-2 and C-1. Moreover, if $\{\xi_{ij}\}$ are independent standard normal random variables, the probability that the RNN contains only Type R-1 features is at most $1 / 2^{d(d-1)/4}$.
	\label{thm:recurrence-feature}
\end{proposition}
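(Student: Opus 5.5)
The plan is to reduce both claims to statements about the spectrum of $\bm{W}_h$, via the real Jordan decomposition of Proposition \ref{prop:RJD}. The first claim comes from a null-set argument on the discriminant. The second is the classical probability that a real Ginibre matrix has only real eigenvalues.

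\emph{First claim.}
\begin{enumerate}
\item Consider the characteristic polynomial $p(z)=\det(z\bm{I}-\bm{W}_h)$. Its discriminant $\mathrm{Disc}(\bm{W}_h)$ is a polynomial in the entries $\{\xi_{ij}\}$, and it vanishes exactly when $\bm{W}_h$ has a repeated eigenvalue.
\item This polynomial is not identically zero, since it is nonzero at $\mathrm{Diag}(1,2,\ldots,d)$.
\item The zero set of a nonzero polynomial on $\mathbb{R}^{d^2}$ has Lebesgue measure zero. Because the joint law of $\{\xi_{ij}\}$ is continuous (absolutely continuous), $\bm{W}_h$ therefore has $d$ distinct eigenvalues with probability one.
\item By the same argument applied to $\det(\bm{W}_h)$, all eigenvalues are nonzero almost surely. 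So $\bm{W}_h$ is invertible, has rank $d\ge 2$, and carries no $\bm{0}$ block.
\item With distinct nonzero eigenvalues, every algebraic multiplicity equals one and the geometric-multiplicity hypothesis of Proposition \ref{prop:RJD} holds. Hence $\bm{J}$ consists only of $1\times1$ blocks $\bm{J}_1(\lambda)$ and $2\times2$ blocks $\bm{C}_1(\gamma,\theta)$.
\item Therefore the RNN has only Type R-1 and C-1 features almost surely. This is contained in the stated set $\{$R-1, R-2, C-1$\}$, so the first assertion follows a fortiori.
\end{enumerate}

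\emph{Second claim.}
\begin{enumerate}
\item On the almost-sure event above, "only Type R-1 features" is equivalent to "all eigenvalues of $\bm{W}_h$ are real".
\item For a $d\times d$ matrix of i.i.d. $N(0,1)$ entries, Edelman, Kostlan and Shub (and Edelman, 1997) showed
\begin{equation*}
\Pr(\text{all eigenvalues real})=2^{-d(d-1)/4}.
\end{equation*}
\item This gives the stated bound, in fact with equality.
\end{enumerate}

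\emph{Main obstacle.} The only nontrivial step is the formula $\Pr(\text{all eigenvalues real})=2^{-d(d-1)/4}$. If a self-contained derivation is wanted, I would proceed as follows.
\begin{enumerate}
\item On the all-real event, use the real Schur decomposition $\bm{W}_h=\bm{Q}(\bm{\Lambda}+\bm{N})\bm{Q}^\top$. Here $\bm{Q}$ is orthogonal, $\bm{\Lambda}$ is diagonal with ordered real eigenvalues, and $\bm{N}$ is strictly upper triangular.
\item The Jacobian of this change of variables is $\prod_{i<j}|\lambda_i-\lambda_j|$.
\item The Gaussian density factorizes as $\exp\{-\tfrac12(\sum_i\lambda_i^2+\|\bm{N}\|_\Fr^2)\}$. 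Integrating out $\bm{N}$ and $\bm{Q}$ (the Haar volume of the orthogonal group modulo signs) leaves a Mehta-type integral $\int\prod_{i<j}|\lambda_i-\lambda_j|\,e^{-\sum_i\lambda_i^2/2}\,d\bm{\lambda}$.
\item Evaluate this integral with the Selberg/Mehta formula at $\beta=1$. Multiplying the Gamma-function constants then collapses the product to $2^{-d(d-1)/4}$.
\end{enumerate}
Since the statement only needs an upper bound, I would simply cite Edelman's result rather than redo this computation.
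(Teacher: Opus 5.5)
Your proposal is correct and follows the paper's route. Almost-sure distinctness of the eigenvalues gives only R-1 and C-1 features, hence the weaker R-1/R-2/C-1 claim, and the Edelman--Kostlan--Shub formula $2^{-d(d-1)/4}$ for the all-real event bounds the probability of R-1-only features. The differences are minor: you prove distinctness directly via the discriminant where the paper cites Tao, you make explicit that ``continuous'' must mean absolutely continuous, and you get equality instead of just the inclusion of the R-1-only event in the all-real event.
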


Under an idealized random matrix setting, Proposition \ref{thm:recurrence-feature} attempts to theoretically explain these observations. It implies that high-order recurrence features are almost surely absent, and purely R-1 dynamics occur only with exponentially small probability; see the blue curve in Figure \ref{fig:two}(b) for an illustration. 
On the other hand, Theorem \ref{thm:dense} provides a complementary perspective.
The denseness of $\mathbb{M}_d^2$ in $\mathbb{M}_d$ implies that setting $d_s$ to 2 is sufficient to approximate the recurrent dynamics of a general RNN. 
In contrast, diagonal recurrent matrices belonging to $\mathbb{M}_d^1$, as considered in \cite{li2018indrnn, martin2018plrn, rusch2021unicornn}, are strictly more limited. 
They fail to engender Type C recurrence features, consequently missing the periodic dynamics prevalent in prior empirical examples.

Taken together, these empirical and theoretical observations motivate setting $d_s = 2$ as a default configuration for ParaRNN. It covers the most common recurrence feature types and does not compromise recurrent dynamics in an approximation sense while maintaining high computational efficiency.
The numerical results in Section \ref{sec:result} also support its sufficiency in most cases.
When richer recurrent dynamics are demanded by specific datasets, larger values of $d_s$ and $K$ can be adopted to allow high-order and more diverse recurrence features.

\section{Theoretical Justifications}\label{subsec:theory}
\subsection{Approximation Theory}
This subsection investigates the approximation capacity of ParaRNNs.
We start by mathematically defining a function class based on the multi-layer ParaRNNs proposed in Algorithm \ref{alg:ParaRNN}, where the rectified linear unit (ReLU) activation function is applied and additional transformation layers are incorporated to accommodate different input and output dimensions. 
Specifically, for an input sequence of length {$T$, $\bm{X}_{T} = (\bm{x}_1, \dots, \bm{x}_{T}) \in \mathbb{R}^{d_\mathrm{in} \times T}$}, we consider a deep ParaRNN $\mathcal{N}$ consisting of: 
(i) one linear input transformation layer $\mathcal{P}: \mathbb{R}^{d_\mathrm{in} \times { T}} \mapsto \mathbb{R}^{d \times { T}}$, where $\mathcal{P}(\bm{X}_{{T}}) = (\bm{P}\bm{x}_1, \dots, \bm{P}\bm{x}_{T})$ for a given matrix $\bm{P} \in \mathbb{R}^{d \times d_\mathrm{in}}$; 
(ii) $L$ recurrent layers $\mathcal{R}_1, \dots, \mathcal{R}_L: \mathbb{R}^{d \times {T}} \mapsto \mathbb{R}^{d \times {T}}$, each following the form of \eqref{eq:pararnn} but without $\bm{W}_f$, using a block size of $d_s$, and setting $\sigma_h$ as ReLU; (iii) one position-wise FC layer $\mathcal{F}: \mathbb{R}^{d \times {T}} \mapsto \mathbb{R}^{d \times {T}}$ with ReLU activation for aggregation; and (iv) one linear output transformation layer $\mathcal{Q}: \mathbb{R}^{d \times {T}}  \mapsto \mathbb{R}^{d_\mathrm{out} \times {T}}$, defined analogously to $\mathcal{P}$. 
A function class comprising the outputs at the last time step of these ParaRNNs is defined as:
\begin{align*}
	\mathcal{F}^{({T})}_{d_\mathrm{in}, d_\mathrm{out}, d, d_s, L, U} =  \{\mathcal{N}(\bm{X}_{{T}})[{T}]: 
	\mathbb{R}^{d_\mathrm{in} \times {T}}  \mapsto \mathbb{R}^{d_\mathrm{out}}, \hspace{1mm}
	&\mathcal{N}(\bm{X}_{{T}}) =  \mathcal{Q} \circ \mathcal{F} \circ \mathcal{R}_L \circ \cdots \circ \mathcal{R}_1 \circ \mathcal{P} (\bm{X}_{{T}}) \\ \in\mathbb{R}^{d_\mathrm{out} \times {T}},
	\text{ with } & 
	\sup_{\bm{X}_{{T}} \in \mathbb{R}^{d_\mathrm{in} \times {T}}, t\in \{1, \dots, {T}\}} \norm{\mathcal{N}(\bm{X}_{{T}})[t]}_\infty \leq U
	\},
\end{align*}
where $\mathcal{N}(\bm{X}_{{T}})[{t}]\in\mathbb{R}^{d_\mathrm{out}}$ denotes the output at time ${t}$, with all entries assumed to be bounded. 

An important question that is worth investigating is the capacity of this ParaRNN function class to approximate an unknown function $f^{({T})}:\mathbb{R}^{d_\mathrm{in} \times {T}} \mapsto \mathbb{R}^{d_\mathrm{out}}$. The approximation error depends on the smoothness of $f^{({T})}$, and various assumptions have been placed in approximation theory literature, such as requiring $f^{({T})}$ to reside in a Sobolev ball \citep{farell2021deep}, Korobov spaces \citep{mohri2018foundations}, or Besov spaces \citep{suzuki2018adaptivity}. In line with \cite{shen2020deep, lu2021deep, jiao2023deep, jiao2024approximation}, we assume that $f^{({T})}$ belongs to a H\"{o}lder class with a smoothness index $\beta > 0$, the definition of which is provided below.
\begin{definition}[H\"{o}lder class]
	Let $\Omega \subset \mathbb{R}^{d_\mathrm{in}}$ and $\beta > 0$ with $\beta = k + \omega$, where $k \in \mathbb{N}_0$, $\mathbb{N}_0 = \mathbb{N} \cup \{0\}$ and $\omega \in (0, 1]$. A function is said to be $\beta$-smooth if all its partial derivatives up to order $k$ exist and are bounded, and the partial derivatives of order $k$ are $\omega$-H\"{o}lder continuous. For $d_\mathrm{in}, d_\mathrm{out} \in \mathbb{N}$, the H\"{o}lder class with smoothness index $\beta$ is defined as
	\begin{equation*}
		\begin{split}
			\mathcal{H}_{d_\mathrm{in}, d_\mathrm{out}}^\beta (\Omega, M) &= 
			\Big\{
			f = (f_1, \dots, f_{d_\mathrm{out}})^\top: \Omega \mapsto \mathbb{R}^{d_\mathrm{out}}, \\
			\sum_{\bm{n}: \norm{\bm{n}}_1 < \beta} &\norm{\partial^{\bm{n}} f_i}_{L^\infty(\Omega)} + 
			\sum_{\bm{n}: \norm{\bm{n}}_1 = k} \sup_{\bm{x}, \bm{y} \in \Omega, \bm{x} \neq \bm{y}}
			\frac{\lvert \partial^{\bm{n}} f_i(\bm{x} ) - \partial^{\bm{n}} f_i(\bm{y}) \rvert}{\norm{\bm{x} - \bm{y}}^\omega} \leq M,
			\hspace{3mm} i = 1, \dots, d_\mathrm{out}
			\Big\},
		\end{split}
	\end{equation*}
	where $\partial^{\bm{n}}= \partial^{n_1} \dots \partial^{n_{d_\mathrm{in}}}$ with $\bm{n} = (n_1, \dots, n_{d_\mathrm{in}}) \in \mathbb{N}_0^{d_\mathrm{in}}$ and $\norm{\bm{n}}_1 = \sum_{i=1}^{d_\mathrm{in}} n_i$. 
\end{definition}

\begin{theorem}[Approximation error] \label{proposition:approx-rate-single-step}
	Assume that  $f^{({T})} \in \mathcal{H}^{\beta}_{d_\mathrm{in} \times {T}, d_\mathrm{out}}$ $([0, 1]^{d_\mathrm{in} \times {T}}, U)$. Then for any $I, J \in \mathbb{N}^+$, there exists a ParaRNN-based function $\phi \in \mathcal{F}^{({T})}_{d_\mathrm{in}, d_\mathrm{out}, d, d_s, L, U}$ 	such that 
	\begin{align*}
		\sup_{\bm{X} \in [0, 1]^{d_\mathrm{in} \times {T}}}
		\norm{\phi(\bm{X})  - f^{({T})}(\bm{X})}_\infty \leq 19 U (\lfloor \beta \rfloor + 1)^2 (d_\mathrm{in}{T})^{\lfloor \beta \rfloor + (\beta  \vee 1) / 2} (JI)^{-2\beta / (d_\mathrm{in}{T})},
	\end{align*}
	where the depth $L= 42( \lfloor \beta \rfloor + 1)^2 I \lceil \log_2(8I) \rceil + 6d_\mathrm{in} T$, width $d = 
	76 (\lfloor \beta \rfloor + 1)^2 3^{d_\mathrm{in}T} d_\mathrm{in}^{\lfloor \beta \rfloor + 2} d_\mathrm{out}$ $T^{\lfloor \beta \rfloor + 1} J \lceil \log_2(8J)\rceil + d_s$, and $d$ is divisible by the block size $d_s$ without loss of generality.
\end{theorem}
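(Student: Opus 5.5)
The approach is to reduce the statement to the known approximation rate for deep ReLU feedforward networks on H\"older classes (Jiao et al., Lu et al., Shen et al.). That result says: for $f\in\mathcal{H}^\beta([0,1]^{D},U)$ with $D=d_\mathrm{in}T$, there is a ReLU FNN $\psi$ of width $38(\lfloor\beta\rfloor+1)^2 3^{D} D^{\lfloor\beta\rfloor+1} J\lceil\log_2(8J)\rceil$ and depth $21(\lfloor\beta\rfloor+1)^2 I\lceil\log_2(8I)\rceil$ with sup error $19U(\lfloor\beta\rfloor+1)^2 D^{\lfloor\beta\rfloor+(\beta\vee1)/2}(JI)^{-2\beta/D}$. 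The constants $19U(\cdots)$, the exponent $-2\beta/(d_\mathrm{in}T)$, and the factors $42=2\cdot 21$, $76=2\cdot 38$ in the stated depth and width match this exactly. So the plan is to show that the ParaRNN class $\mathcal{F}^{(T)}$ can simulate any such FNN, applied to the flattened sequence $\mathrm{vec}(\bm{X}_T)\in\mathbb{R}^{d_\mathrm{in}T}$ and read off at time $T$. The overhead is at most a factor $2$ in depth and width, $6d_\mathrm{in}T$ extra layers, and $d_s$ extra width. Clipping to $[-U,U]$ by two ReLUs in the final FC/output stage keeps the sup bound $U$ and does not increase the error, since $\|f\|_\infty\le U$.

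\textbf{Step 1 (memorize the sequence).} Using the first $\mathcal{O}(d_\mathrm{in}T)$ recurrent layers, I will collect all past inputs into the hidden state at time $T$. Since ParaRNN layers have block diagonal recurrence with $d_s\times d_s$ blocks, a shift register across time cannot be built within one layer. Instead, each layer uses $\bm{W}_h^{(k)}$ equal to a nilpotent Jordan block or zero inside each block, together with the dense input matrix $\bm{W}_x$, so that each layer delays a copy of its input by one time step. ReLU is handled with the identity $x=\sigma(x)-\sigma(-x)$, which costs a factor of $2$ in width; inputs in $[0,1]$ are nonnegative anyway, so only the intermediate signed quantities need this. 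Stacking these delay layers so that at time $T$ the coordinates hold $\bm{x}_T,\bm{x}_{T-1},\dots,\bm{x}_1$ takes a number of layers linear in $T$, and the bookkeeping constant $6d_\mathrm{in}T$ absorbs the choice of one coordinate per layer if needed. The extra $+d_s$ width guarantees at least one spare block (for carrying constants/bias and for divisibility).

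\textbf{Step 2 (simulate the FNN).} The remaining layers will have zero recurrent matrices, $\bm{W}_h^{(k)}=\bm{0}$. Each ParaRNN layer then becomes a position-wise ReLU layer $\sigma(\bm{W}_x\bm{h}+\bm{b})$ with dense $\bm{W}_x$, because block diagonality constrains only $\bm{W}_h$. The FNN $\psi$ is then embedded layer by layer. Keeping the stored input copy alive alongside the computation, and handling the alignment between the affine map $\mathcal{P}$, the last FC layer $\mathcal{F}$ and $\mathcal{Q}$, accounts for the doubling of depth and width. The width also needs the factor $d_\mathrm{out}$, because one builds $d_\mathrm{out}$ coordinate networks in parallel; the exponent $D^{\lfloor\beta\rfloor+1}$ becomes $d_\mathrm{in}^{\lfloor\beta\rfloor+2}T^{\lfloor\beta\rfloor+1}$ up to how the cited width is stated.

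\textbf{Main obstacle.} I expect the hardest part to be Step 1: realizing exact delays under the block-diagonal-recurrence constraint with ReLU, while respecting the layer and width counts in the statement. My first attempt would be purely feedforward, with $\bm{W}_h=\bm{0}$ everywhere. But a feedforward layer cannot access $\bm{x}_{t-1}$ at time $t$, so some recurrence is needed. I would use $2\times 2$ nilpotent blocks $\left(\begin{smallmatrix}0&1\\0&0\end{smallmatrix}\right)$, which take nonnegative inputs $u_t$ to $h_t=(u_{t-1}',u_t)$, giving a one-step delay per layer. I would then verify by induction on layers that after $j$ delay layers the state at time $T$ contains $\bm{x}_{T-j}$, and confirm that the total stays within $6d_\mathrm{in}T$.
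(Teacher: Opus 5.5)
Your proposal is correct, but it takes a different route from the paper's. You store the history explicitly. The $2\times 2$ nilpotent blocks inside the $d_s\times d_s$ recurrent blocks act as one-step shift registers on nonnegative signals, so after $T-1$ layers the state at time $T$ holds $\mathrm{vec}(\bm X_T)$. Zero-recurrence layers then run the Jiao et al.\ FNN exactly, and the output is clipped.

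The paper never stores the sequence. It proves a general FNN-to-ParaRNN embedding in two stages.
\begin{itemize}
\item First, it embeds the FNN into a ``modified'' ParaRNN with partial activation. The first $T$ layers are linear accumulators with recurrent matrix $\oplus_k\mathrm{Diag}(\bm 0_{d_\mathrm{in}},1)$. A binomial-coefficient identity plus inversion of $\{\binom{2T-i-j}{T-i}\}_{i,j}$ makes the accumulator coordinate at the read-out time equal any prescribed linear functional $\sum_j\bm A[T-t_0+j]\bm x_j$. With one block of width $d_\mathrm{in}+1$ per first-layer neuron, this directly produces the FNN's first-layer pre-activations.
\item Second, it removes the partial activation with the shift $\sigma(z_0+\delta x)=z_0+\delta x$. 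This doubles the depth, adds a spare block of width $d_s$, and uses a nonzero initial hidden state.
\end{itemize}

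So the stated constants are not a generic factor-two overhead, as you guessed. The factor $76=2\cdot 38$ and the extra power of $d_\mathrm{in}$ come from $d_\mathrm{in}+1\le 2d_\mathrm{in}$. The depth bound covers $2(T+L_0-1)$ with $L_0=21(\lfloor\beta\rfloor+1)^2I\lceil\log_2(8I)\rceil+2d_\mathrm{in}T$. You dropped the $+2d_\mathrm{in}T$ from the cited FNN depth, but your construction has enough slack that this is harmless.

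What each route buys:
\begin{itemize}
\item Yours gives a smaller network: width about $d_\mathrm{out}$ times the FNN width and depth about $T+L_0$, with no doubling. It keeps the zero initial states of Algorithm 1.
\item Your clipping also actually verifies the global requirement $\sup_{\bm X\in\mathbb R^{d_\mathrm{in}\times T},\,t}\|\mathcal N(\bm X)[t]\|_\infty\le U$. The paper's proof asserts this, but its shift trick is only valid on the compact set $[0,1]^{d_\mathrm{in}\times T}$.
\item The paper's route uses only diagonal recurrent weights, so it works even for $d_s=1$. Yours genuinely needs $d_s\ge 2$, which holds here because $d_s$ is even.
\end{itemize}

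Two small fixes to your write-up. Since $\mathcal Q$ has no bias, the clip needs a constant unit, e.g.\ $\sigma(y+U)-\sigma(y-U)-\sigma(U)$. And to hit exactly the stated $d$ and $L$, pad with zero coordinates and identity layers $\sigma(\bm h)=\bm h$, which act correctly on the nonnegative hidden states.
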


Theorem \ref{proposition:approx-rate-single-step} gives the capacity of the function class $\mathcal{F}^{({T})}_{d_\mathrm{in}, d_\mathrm{out}, d, d_s, L, U}$ to approximate H\"{o}lder smooth functions.
The error bound has the approximation rate $(JI)^{-2\beta / (d_\mathrm{in}{T})}$, and it holds for arbitrary network width and depth specified by $I$ and $J$. 
Compared to the approximation results of RNNs in Lemma 10 of \cite{jiao2024approximation}, our ParaRNNs achieve the same rate with a slightly larger width.
The proof draws inspiration from \cite{song2023minimal,jiao2024approximation}, yet establishing the equivalence between deep ParaRNNs and deep feedforward neural networks (FNNs) is a nontrivial task; see Section \ref{subsec:approx-proof-general} of the Supplementary Material. 

\subsection{Non-Asymptotic Prediction Error Bounds}
This subsection analyzes ParaRNNs in the context of nonparametric least squares regression. We first establish an estimation error bound for ParaRNN-based estimators obtained through empirical risk minimization. Combining this result with their approximation capacity, we further derive a non‑asymptotic upper bound on the prediction error for ParaRNNs.

Consider random variables $(\bm{x}_1,\ldots,\bm{x}_T, z_T)$ for a nonparametric regression model, where the predictors satisfy $\bm{x}_t \in [0, 1]^{d_\mathrm{in}}$ and the response $z_{T} \in \mathbb{R}$, i.e., the output dimension is set to $d_\mathrm{out}=1$.  Note that this formulation includes time series forecasting as a special case when $z_T=\bm{x}_{T+h}$ with $h>0$, as well as classification problems when $z_T$ takes discrete values. The function $f^{({T})}_{0} (\bm{X}_{T}) = \mathbb{E}[z_{T} | \bm{x}_1, \dots, \bm{x}_{T}]$ is unknown, and our objective is to construct an estimator of $f^{({T})}_{0}$ based on the ParaRNN function class $\mathcal{F}^{({T})}_{d_\mathrm{in}, 1, d, d_s, L, U}$.

For any function $\phi: \mathbb{R}^{d_\mathrm{in} \times {T}} \mapsto \mathbb{R}$, define its $L_2$ risk as
\begin{align*}
	\mathcal{R}(\phi) = \mathbb{E}_{(\bm{X}_{T}, z_{{T}})} [\phi(\bm{X}_{{T}}) - z_{{T}}]^2,
\end{align*}
where the expectation is taken with respect to $(\bm{X}_{T}, z_{{T}})$. 
Consider a random sample $\mathcal{S} = \{ ({ \bm{X}_{i, T}, z_{i, T}}),$ $ i=1, \dots, N\}$, which are independently and identically distributed (i.i.d.) as $({ \bm{X}_T, z_T})$. Accordingly, we then can define the empirical risk with respect to a function $\phi$:
\begin{align*}
	\mathcal{R}_N(\phi) = \frac{1}{N} \sum_{i=1}^{N} \left[ \phi(\bm{X}_{i, {T}}) - z_{i, {T}}\right]^2,
\end{align*}
where $\bm{X}_{i, {T}} = (\bm{x}_{i, 1}, \dots, \bm{x}_{i, {T}})$.
As a result, the estimator of $f^{({T})}_{0}$ can be constructed by minimizing the empirical risk within the ParaRNN function class $\mathcal{F}^{({T})}_{d_\mathrm{in}, 1, d, d_s, L, U}$,
\begin{align*}
	\widehat{f}^{({T})} = \argmin_{\phi \in\mathcal{F}^{({T})}_{d_\mathrm{in}, 1, d, d_s, L, U}} 
	\mathcal{R}_N(\phi),
\end{align*}
which is referred to as the empirical risk minimizer (ERM).

One can show that $f^{({T})}_{0} = \argmin_{\phi} \mathcal{R}(\phi)$. 
Denote by $\bar{f}^{({T})} = \argmin_{\phi \in\mathcal{F}^{({T})}_{d_\mathrm{in}, 1, d, d_s, L, U}} \mathcal{R}(\phi)$ the minimizer of population risk over $\mathcal{F}^{({T})}_{d_\mathrm{in}, 1, d, d_s, L, U}$, and then the approximation error of the ParaRNN function class can be quantified by $\mathcal{R}(\bar{f}^{({T})})- \mathcal{R}(f^{({T})}_{0}) = \mathbb{E}_{\bm{X}_{T}}\{[\bar{f}^{({T})}(\bm{X}_{T}) - f_0^{({T})}(\bm{X}_{T})]^2\}$. 
Note that the prediction error is $\mathcal{R}(\widehat{f}^{({T})})$, and we next consider the excess risk of the ParaRNN-based ERM below,
\begin{align*}
	\mathcal{R}(\widehat{f}^{({T})}) - \mathcal{R}(f^{({T})}_{0}) =  \left[\mathcal{R}(\widehat{f}^{({T})}) - \mathcal{R}(\bar{f}^{({T})})\right] + 	\left[\mathcal{R}(\bar{f}^{({T})})- \mathcal{R}(f^{({T})}_{0}) \right],
\end{align*}
where the first term represents the estimation error, and the second term, i.e., the approximation error, can be readily bounded using Theorem \ref{proposition:approx-rate-single-step}.
We next proceed to derive the estimation error bound, preceded by the introduction of some notations.
For any two sequences $a_n$ and $b_n$, denote $a_n \lesssim b_n$ (or $a_n \gtrsim b_n$) if there exists an absolute constant $C > 0$ such that $a_n \leq Cb_n$ (or $a_n \geq Cb_n$). Write $a_n \asymp b_n$ if $a_n \lesssim b_n$ and $a_n \gtrsim b_n$.
We first state a condition on the distribution of the response $z_T$, and then the estimation error bound.
\begin{assumption}\label{assump:Y}
	The response variable $z_{{T}}$ is sub-exponentially distributed, i.e., there exists a constant $\sigma_z > 0$ such that $\mathbb{E} \exp(\sigma_z |z_{{T}}|) < \infty$.
\end{assumption}
\begin{theorem}[Estimation error]\label{proposition:estimation-error}
	Suppose that Assumption \ref{assump:Y} holds and $\lVert{f^{({T})}_0}\rVert_\infty \leq U$ for some $U \geq 1$. Then, for $N \gtrsim d^2 L^2 \log \max\{d, L\}$,
	\begin{align*}
		\mathbb{E}_\mathcal{S}[\mathcal{R}(\widehat{f}^{({T})}) - \mathcal{R}(\bar{f}^{({T})})]
		\leq 
		C U^5  (\log N)^5  \frac{1}{N} d^2 L^2 \log \max\{d, L\} + 0.5 \left[\mathcal{R}(\bar{f}^{({T})})- \mathcal{R}(f^{({T})}_{0}) \right],
	\end{align*}
	where the expectation is taken with respect to the sample $\mathcal{S}$, and $C>0$ is a constant independent of $d_\mathrm{in}, d_s, d, L, U, N$.
\end{theorem}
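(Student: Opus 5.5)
Our approach follows the standard route for least squares ERM with unbounded (sub-exponential) responses, as in \cite{jiao2023deep, jiao2024approximation}: truncate the response, bound the truncated excess risk using empirical process theory with a complexity measure of the ParaRNN class, and control the truncation remainder with Assumption \ref{assump:Y}.

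\textbf{Truncation.} Set $\beta_N = c\,\sigma_z^{-1}\log N$ and $z^{\beta_N}_T = \mathrm{sign}(z_T)\min\{|z_T|,\beta_N\}$. Write $\mathcal{R}^{\beta_N}$ and $\mathcal{R}^{\beta_N}_N$ for the population and empirical risks with $z_T$ replaced by $z^{\beta_N}_T$. Every $\phi$ in the class, as well as $f_0^{(T)}$, is bounded by $U$. Hence $|\mathcal{R}(\phi)-\mathcal{R}^{\beta_N}(\phi)|$ and its empirical analogue are at most $\lesssim U\,\mathbb{E}[|z_T|\mathbb{1}(|z_T|>\beta_N)] + \mathbb{E}[z_T^2\mathbb{1}(|z_T|>\beta_N)]$. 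By the sub-exponential tail, this is $O(\log N/N)$, which is absorbed into the final bound.

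\textbf{Localization.} Decompose
\begin{align*}
\mathcal{R}(\widehat f)-\mathcal{R}(\bar f) = \bigl[\mathcal{R}(\widehat f)-\mathcal{R}(f_0)-2(\mathcal{R}_N(\widehat f)-\mathcal{R}_N(f_0))\bigr] + 2\bigl[\mathcal{R}_N(\widehat f)-\mathcal{R}_N(f_0)\bigr] - \bigl[\mathcal{R}(\bar f)-\mathcal{R}(f_0)\bigr].
\end{align*}
The ERM property gives $\mathcal{R}_N(\widehat f)\le\mathcal{R}_N(\bar f)$. Taking expectations, the middle term is at most $2[\mathcal{R}(\bar f)-\mathcal{R}(f_0)]$, so the approximation error appears with coefficient $1$. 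To get the stated constant $0.5$ instead of $1$, we would use the weight $(1+\epsilon)$ in place of $2$ in the ratio-type inequality. This yields a constant in front of the approximation term that can be tuned, and it is where the $0.5$ comes from.

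The first bracket is handled by a Bernstein-type uniform ratio inequality for the bounded loss class $\{(\phi-z^{\beta_N})^2-(f_0-z^{\beta_N})^2\}$. This class has range $O((U+\beta_N)^2)$ and variance bounded by $O((U+\beta_N)^2)$ times its mean. Combined with a covering number bound, following \cite[Theorem 11.4]{gyorfi2002distribution}-style arguments, this gives
\begin{align*}
\mathbb{E}[\text{first term}] \lesssim (U+\beta_N)^4\,\frac{\log \mathcal{N}(1/N,\mathcal{F},\|\cdot\|_\infty)}{N}.
\end{align*}
Since $U+\beta_N\lesssim U\log N$ for $U\ge1$, the prefactor accounts for most of the $U^5(\log N)^5$ factor. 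The remaining factor of $\log N$ and of $U$ comes from $\log(1/\delta)$ with $\delta=1/N$ in the covering number.

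\textbf{Complexity of ParaRNN (main obstacle).} We need $\log\mathcal{N}(\delta,\mathcal{F}|_{\mathcal S},\|\cdot\|_\infty)\lesssim \mathrm{Pdim}(\mathcal{F})\log(NU/\delta)$ via \cite{anthony1999neural}, together with a pseudo-dimension bound $\mathrm{Pdim}\lesssim d^2L^2\log\max\{d,L\}$. The plan is to bound the pseudo-dimension with the piecewise-polynomial counting argument of Bartlett et al.\ (2019). We unroll the ReLU ParaRNN along time into a feedforward ReLU network with shared weights. This network has at most $W\lesssim L d^2$ free parameters; block diagonality only reduces the count, to $L(d d_s + d^2)$ because the input maps remain dense. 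The difficulty is that the unrolled depth is $LT$, and a naive count gives a dependence on $T$.

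We would control this by counting sign patterns layer by layer in the $(l,t)$ grid, using the fact that the output at time $T$ is a piecewise polynomial in the parameters of degree at most $L$ in the weights of each layer block. We then apply Warren's bound to the $O(LdT)$ ReLU switching predicates. If the $T$-dependence cannot be removed, we would accept the dependence through $\log T$ inside $\log\max\{d,L\}$. The condition $N\gtrsim d^2L^2\log\max\{d,L\}$ is needed so that $N\ge\mathrm{Pdim}$ when invoking the covering bound. Plugging the resulting $\mathrm{Pdim}\lesssim d^2L^2\log\max\{d,L\}$ into the ratio inequality and adding the truncation error yields the claimed bound.
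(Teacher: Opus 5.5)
Your argument is correct, and its statistical part matches the paper's proof. The paper also multiplies the ERM inequality by a weight $\xi\in(1,2]$ and takes $\xi=3/2$, so the $0.5$ is exactly $\xi-1$. It truncates at $\beta_N\asymp U\log N$; your $\beta_N\asymp\sigma_z^{-1}\log N$ works equally well, since $C$ may depend on $\sigma_z$. It then applies Theorem 11.4 of \cite{gyorfi2002distribution} with $\epsilon=(\xi-1)/\xi$ and integrates the tail above $a_N\asymp\beta_N^4\log\mathcal{N}_\infty(N^{-1},\mathcal{F},N)/N$.

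Where you genuinely differ is the complexity bound. The paper never counts sign patterns for the recurrent architecture. It uses Proposition \ref{prop:equivalence-pararnn-fnn}(ii) to place the class inside the unconstrained class $\mathcal{FNN}_{d_\mathrm{in}T,1}((2T-1)d,(T+1)L+2)$. It then cites Theorem 7 of \cite{bartlett2019nearly} and Theorem 12.2 of \cite{martin2009neural} as black boxes. This forgets the weight sharing and produces an extra factor of order $T^4$, which is absorbed into $C$.

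Your route keeps the weight sharing, with $W\lesssim Ld^2$ parameters. Run over the $(l,t)$ grid, or over its $L+T$ anti-diagonals, it gives a factor at most linear in $T$. The price is re-doing the Bartlett et al.\ counting rather than citing it. Two remarks on this step:
\begin{itemize}
\item You need not remove the $T$-dependence at all. $T$ is fixed, and $C$ is only required to be free of $d_\mathrm{in},d_s,d,L,U,N$. Your fallback of hiding $T$ inside a $\log T$ is not what the count gives (the dependence is polynomial in $T$), but this is harmless.
\item Warren's bound needs the total degree on each cell. That degree is $O(L+T)$, because $\deg\bm h_{l,t}\le 1+\max\{\deg\bm h_{l,t-1},\deg\bm h_{l-1,t}\}$. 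The degree in $\bm W_{h,l}$ alone can reach $T$, so ``degree at most $L$ per layer block'' is not the right invariant.
\end{itemize}

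One technical detail needs repair. In the ratio inequality, center the loss at the truncated regression function $f_{\beta_N}=\mathbb{E}[z_T^{\beta_N}\mid\bm X_T]$, not at $f_0$. With $f_0$, the mean of $(\phi-z_T^{\beta_N})^2-(f_0-z_T^{\beta_N})^2$ equals $\mathbb{E}(\phi-f_{\beta_N})^2-\mathbb{E}(f_0-f_{\beta_N})^2$, which can be negative. Then your variance-to-mean condition fails, as does the hypothesis of Theorem 11.4 that the comparator is the regression function of the bounded response. Swapping $f_0$ for $f_{\beta_N}$ costs only $(\xi-1)\mathbb{E}(f_0-f_{\beta_N})^2\le(\xi-1)\mathbb{E}[z_T^2\mathbbm{1}\{|z_T|>\beta_N\}]$, which your truncation estimate already controls. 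This is the role of $g_{\beta_N}$ and of (A.4) of \cite{jiao2023deep} in the paper's proof.
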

The estimation error in Theorem \ref{proposition:estimation-error} is mainly bounded by the metric entropy of $\mathcal{F}^{({T})}_{d_\mathrm{in}, 1, d, d_s, L, U}$, which can be quantified by its covering number. Note that deriving the covering number for ParaRNNs is also nontrivial as it is related to the equivalence between deep ParaRNNs and FNNs; see the detailed proof in Section \ref{subsec:estimation-proof} of the Supplementary Material. Moreover, the bound also relies on the approximation error slightly. Actually, the factor of 0.5 at the second term can be replaced by an arbitrarily small but fixed number.

Lastly, combining Theorems \ref{proposition:approx-rate-single-step} and \ref{proposition:estimation-error}, we are now ready to present the non-asymptotic prediction error bound for a ParaRNN-based ERM.

\begin{theorem}\label{thm:prediction-error}
	Suppose that Assumption \ref{assump:Y} holds, $f^{({T})}_0 \in \mathcal{H}_{d_\mathrm{in} \times {T}, 1}^{\beta} $ $([0, 1]^{d_\mathrm{in} \times {T}}, U)$ for some $U \geq 1$, and the function class $\mathcal{F}^{({T})}_{d_\mathrm{in}, 1, d, d_s, L, U}$ has width $d = 
	76 (\lfloor \beta \rfloor + 1)^2 3^{d_\mathrm{in}T} d_\mathrm{in}^{\lfloor \beta \rfloor + 2} T^{\lfloor \beta \rfloor + 1} J \lceil \log_2(8J)\rceil + d_s$ and depth 
	$L = {42( \lfloor \beta \rfloor + 1)^2 I \lceil \log_2(8I) \rceil}$ $ + 6d_\mathrm{in} T$ for any $I, J \in \mathbb{N}^{+}$.
	If $N \gtrsim d^2 L^2 \log \max\{d, L\}$, then the prediction error of the ERM $\widehat{f}^{({T})}$ satisfies 
	\begin{align}
		& \mathbb{E}_\mathcal{S} \left[ \mathcal{R}(\widehat{f}^{({T})}) - \mathcal{R}(f^{({T})}_{0})\right] \notag \\
		\leq  & C U^5  (\log N)^5  \frac{1}{N} d^2 L^2 \log \max\{d, L\} + 
		542 U^2 (\lfloor \beta \rfloor + 1)^4 (d_\mathrm{in}{T})^{2 \lfloor \beta \rfloor + (\beta  \vee 1)} (JI)^{-4\beta / (d_\mathrm{in}{T})},
		\label{eq:expected-excess-risk}
	\end{align} 
	where $C > 0$ is a constant not depending on $d_\mathrm{in}, d, L, U, \beta, N, I$ or $J$.
\end{theorem}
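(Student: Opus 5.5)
The plan is to combine Theorems \ref{proposition:estimation-error} and \ref{proposition:approx-rate-single-step} through the excess-risk decomposition stated above:
\[
\mathbb{E}_\mathcal{S}[\mathcal{R}(\widehat{f}^{(T)}) - \mathcal{R}(f_0^{(T)})] = \mathbb{E}_\mathcal{S}[\mathcal{R}(\widehat{f}^{(T)}) - \mathcal{R}(\bar{f}^{(T)})] + [\mathcal{R}(\bar{f}^{(T)}) - \mathcal{R}(f_0^{(T)})].
\]
The second term does not depend on $\mathcal{S}$.

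\textbf{Estimation error.} The hypotheses of Theorem \ref{proposition:estimation-error} hold. Assumption \ref{assump:Y} is given. Since $f_0^{(T)} \in \mathcal{H}^\beta_{d_\mathrm{in}\times T,1}([0,1]^{d_\mathrm{in}\times T},U)$, its sup norm is at most $U$ on the domain, and the sample size condition is assumed. Applying that theorem gives
\[
\mathbb{E}_\mathcal{S}[\mathcal{R}(\widehat{f}^{(T)}) - \mathcal{R}(f_0^{(T)})] \le C U^5(\log N)^5 N^{-1} d^2L^2\log\max\{d,L\} + 1.5\,[\mathcal{R}(\bar{f}^{(T)}) - \mathcal{R}(f_0^{(T)})].
\]

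\textbf{Approximation error.} We have $\mathcal{R}(\bar{f}^{(T)}) - \mathcal{R}(f_0^{(T)}) = \mathbb{E}[\bar f^{(T)} - f_0^{(T)}]^2$. Since $\bar f^{(T)}$ minimizes population risk over the class, this quantity is at most $\mathbb{E}[\phi - f_0^{(T)}]^2$ for any $\phi$ in the class. For any such $\phi$,
\[
\mathcal{R}(\phi) - \mathcal{R}(f_0^{(T)}) = \mathbb{E}[\phi - f_0^{(T)}]^2,
\]
because the cross term vanishes by the definition of $f_0^{(T)}$ as a conditional mean. This requires $z_T$ to be integrable, which Assumption \ref{assump:Y} guarantees. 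Now take $\phi$ from Theorem \ref{proposition:approx-rate-single-step} with $d_\mathrm{out}=1$; the width and depth in the statement match exactly. Its sup error is at most
\[
19U(\lfloor\beta\rfloor+1)^2(d_\mathrm{in}T)^{\lfloor\beta\rfloor+(\beta\vee1)/2}(JI)^{-2\beta/(d_\mathrm{in}T)}.
\]
Squaring gives $361\,U^2(\lfloor\beta\rfloor+1)^4(d_\mathrm{in}T)^{2\lfloor\beta\rfloor+(\beta\vee1)}(JI)^{-4\beta/(d_\mathrm{in}T)}$. Multiplying by $1.5$ gives $541.5 \le 542$, which is the stated constant.

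\textbf{Main obstacle.} The only delicate point is the domain. Theorem \ref{proposition:approx-rate-single-step} controls the error only on $[0,1]^{d_\mathrm{in}\times T}$, while the class is defined over all of $\mathbb{R}^{d_\mathrm{in}\times T}$. Since the predictors satisfy $\bm{x}_t\in[0,1]^{d_\mathrm{in}}$ almost surely, the expectation only sees this cube, so the sup bound controls the $L_2$ error.

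It also matters that the approximant $\phi$ actually lies in $\mathcal{F}^{(T)}_{d_\mathrm{in},1,d,d_s,L,U}$, including the global output bound $U$. This is part of the conclusion of Theorem \ref{proposition:approx-rate-single-step}, which asserts $\phi$ is in the class.

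The constant $C$ is the one from Theorem \ref{proposition:estimation-error}. It is independent of $I$, $J$ and $\beta$ because those enter only through $d$ and $L$.
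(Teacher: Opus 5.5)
Your proposal is correct and follows the paper's own proof essentially line for line. You decompose the excess risk into estimation plus approximation error, apply Theorem \ref{proposition:estimation-error} to obtain a factor $\tfrac{3}{2}$ on the approximation term, and bound that term by the squared sup-error $361U^2(\lfloor\beta\rfloor+1)^4(d_\mathrm{in}T)^{2\lfloor\beta\rfloor+(\beta\vee 1)}(JI)^{-4\beta/(d_\mathrm{in}T)}$ from Theorem \ref{proposition:approx-rate-single-step}; the constant then follows from $1.5\times 361\le 542$. Your remarks about the predictors living in the unit cube and the vanishing cross term make explicit steps the paper uses only implicitly.
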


As in \cite{jiao2023deep, jiao2024approximation}, the prefactor $542 U^2 (\lfloor \beta \rfloor + 1)^4 (d_\mathrm{in}{T})^{2 \lfloor \beta \rfloor + (\beta  \vee 1)}$ in the approximation term has a polynomial dependency on the input dimension $d_\mathrm{in}$ rather than an exponential one in \cite{schmidt2020non} and \cite{lu2021deep}.
This makes it possible to consider higher-dimensional inputs.
Achieving the best error rate requires balancing the two terms in \eqref{eq:expected-excess-risk}: as the complexity of $\mathcal{F}_{d_\mathrm{in}, 1, d, d_s, L, U}^{({T})}$ grows with larger $d$ and $L$, the first term depending on its metric entropy will enlarge, while the second term for its approximation error will diminish.
The following corollary gives the upper bound after balancing.
\begin{corollary}\label{corollary:prediction-error}
	Suppose that Assumption \ref{assump:Y} holds, $f^{({T})}_0 \in \mathcal{H}_{d_\mathrm{in} \times {T}, 1}^{\beta} $ $([0, 1]^{d_\mathrm{in} \times {T}}, U)$ for some $U \geq 1$,
	and the function class $\mathcal{F}^{({T})}_{d_\mathrm{in}, 1, d, d_s, L, U}$ has width and depth, 
	\begin{align*}
		d \asymp N^{\eta} \log N  \hspace{3mm} \text{and} \hspace{3mm} L  \asymp N^{\frac{d_\mathrm{in} {T}}{2d_\mathrm{in} {T} + 4\beta} - \eta} \log N,
	\end{align*}
	for fixed $\eta \in [0, {d_\mathrm{in} {T}}/ ({2 d_\mathrm{in} {T} + 4\beta})]$. 
	If $N \gtrsim  d^2 L^2 \log \max\{d, L\}$, then the ERM $\widehat{f}^{({T})}$ satisfies
	\begin{align*}
		\mathbb{E}_\mathcal{S}  \left[ \mathcal{R}(\widehat{f}^{({T})}) - \mathcal{R}(f^{({T})}_{0})\right] \lesssim N^{-\frac{2\beta}{d_\mathrm{in} {T} + 2\beta}} (\log N)^{10}.
	\end{align*}
\end{corollary}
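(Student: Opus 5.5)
The corollary follows from Theorem \ref{thm:prediction-error} once we choose $I$ and $J$ so that the two terms in \eqref{eq:expected-excess-risk} are of the same order. All quantities depending only on $d_\mathrm{in}$, $T$, $\beta$, $U$ and $d_s$ will be treated as constants. Write $D = d_\mathrm{in}T$.

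From the width and depth prescriptions in Theorem \ref{thm:prediction-error}, $d \asymp J\log J$ and $L \asymp I\log I$, with constants that depend on $D$ and $\beta$. Matching these to $d \asymp N^{\eta}\log N$ and $L \asymp N^{D/(2D+4\beta)-\eta}\log N$, we set
\[
J \asymp N^{\eta}, \qquad I \asymp N^{D/(2D+4\beta)-\eta},
\]
rounding to positive integers. When $\eta = 0$ we take $J$ equal to a constant, which is allowed since $J \ge 1$. Because $\log J \lesssim \log N$ and $\log I \lesssim \log N$, the displayed orders of $d$ and $L$ are attained up to constants, so $\phi$ lies in the class assumed in the corollary. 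The sample-size condition $N \gtrsim d^2L^2\log\max\{d,L\}$ is an explicit hypothesis of the corollary, so the theorem applies directly.

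Next we bound each term. The product satisfies $JI \asymp N^{D/(2D+4\beta)}$. The approximation term is therefore
\[
(JI)^{-4\beta/D} \asymp N^{-4\beta/(2D+4\beta)} = N^{-2\beta/(D+2\beta)},
\]
and its prefactor is a constant. For the estimation term, $dL \asymp N^{D/(2D+4\beta)}(\log N)^2$, so
\[
d^2L^2 \asymp N^{D/(D+2\beta)}(\log N)^4 .
\]
Also $\log\max\{d,L\} \lesssim \log N$. Hence
\[
\frac{(\log N)^5}{N}\, d^2L^2\log\max\{d,L\} \lesssim N^{D/(D+2\beta)-1}(\log N)^{10} = N^{-2\beta/(D+2\beta)}(\log N)^{10}.
\]
Adding the two bounds gives the stated rate $N^{-2\beta/(D+2\beta)}(\log N)^{10}$.

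The only delicate step is the bookkeeping. We must check that inverting $d \asymp J\log J$ with $d \asymp N^{\eta}\log N$ really yields $J \asymp N^{\eta}$ up to constants, and similarly for $I$. This holds because $\log(N^{\eta}) \asymp \log N$ for fixed $\eta > 0$. In the boundary cases $\eta = 0$ or $\eta = D/(2D+4\beta)$, one of $I$ or $J$ is constant, and a power of $\log N$ is simply lost, which the bound absorbs. We must also confirm that the constants hidden in $\lesssim$ depend on $D$, $\beta$ and $U$ but not on $N$; this is permitted since these are fixed in the statement of the corollary.
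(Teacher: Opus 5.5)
Your proposal is correct and follows the paper's proof. You plug the width and depth prescriptions of Theorem \ref{thm:prediction-error} into \eqref{eq:expected-excess-risk}, take $J \asymp N^{\eta}$ and $I \asymp N^{d_\mathrm{in}T/(2d_\mathrm{in}T+4\beta)-\eta}$, and check that both the estimation term and the approximation term are $\lesssim N^{-2\beta/(d_\mathrm{in}T+2\beta)}(\log N)^{10}$. Your remarks on the endpoint values of $\eta$ are slightly more careful than the paper, which passes over the fact that there $J$ or $I$ is constant and the stated $\log N$ factor in $d$ or $L$ is not literally attained.
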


From the above corollary,  the ERM $\widehat{f}^{({T})}$ based on deep and wide ParaRNNs can attain the optimal minimax rate $N^{-{2\beta}/({d_\mathrm{in} {T} + 2\beta})}$ established by \cite{stone1982optimal} for nonparametric regression, up to a logarithmic factor; see also the RNN-based ERM in \cite{jiao2024approximation}. 
The results can be straightforwardly extended to scenarios where ParaRNNs are deep with fixed width or wide with fixed depth.
Note that the theoretical properties in this paper rely on settings that the number of sequences $N$ may diverge, while that of time points $T$ is fixed, i.e. we focus on typical machine learning tasks.
It is challenging to derive theoretical properties for classical time series problems with diverging $T$, and we leave it for future research.

\section{Extension to Other Recurrent Networks}\label{sec:extension}
Our framework can be easily extended to the latest recurrent networks, including gated and attention-based models, since they share similar representations of recurrent matrices and we can identify them through locating the weights on previous hidden states. 

To illustrate this, we take the LSTM, a widely used gated variant, as our first example. 
Recall there are forget, input, output gates and candidate cell state in LSTM, given by 
\begin{align*}
	\begin{bmatrix}
		\bm{f}_t \\  \bm{i}_t \\ \bm{o}_t
	\end{bmatrix}
	= \text{Sigmoid}\Bigl(
	\begin{bmatrix}
		\bm{W}_f \\ \bm{W}_i \\ \bm{W}_o
	\end{bmatrix}
	\bm{h}_{t-1} + 
	\begin{bmatrix}
		\bm{U}_f \\ \bm{U}_i \\ \bm{U}_o
	\end{bmatrix}
	\bm{x}_t + 
	\begin{bmatrix}
		\bm{b}_f \\  \bm{b}_i \\ \bm{b}_o
	\end{bmatrix}
	\Bigr) 
	\hspace{2mm}
	\text{and}
	\hspace{2mm}
	\tilde{\bm{c}}_t =\text{Tanh}(\bm{W}_c\bm{h}_{t-1} +\bm{U}_c\bm{x}_t + \bm{b}_c).
\end{align*}
Here all $\bm{W}$'s, $\bm{U}$'s and $\bm{b}$'s are weight matrices and bias.
Importantly, the updates of $\bm{f}_t$, $\bm{i}_t$, $\bm{o}_t$ and $\tilde{\bm{c}}_t$ have a similar form as the vanilla RNN, which highlights the relevance of analyzing \eqref{eq:nonlinear-rnn}. The recurrent dynamics are jointly determined by four weight matrices, $\bm{W}_f$, $\bm{W}_i$, $\bm{W}_o$ and $\bm{W}_c$, as they shape how the previous hidden state $\bm{h}_{t-1}$ is incorporated. Thus these matrices can be regarded as recurrent matrices as well.

The attention-based recurrent cell serves as another example where the weights on the old memory play a comparable role as $\bm{W}_h$ in vanilla RNNs. 
Generally, the attention-based recurrent cell has a memory state denoted by $\bm{h}_t \in \mathbb{R}^{d\times 1}$. The memory state aims to preserve long-term memory in the sequence and it is typically updated by cross attention to the old memory state $\bm{h}_{t-1}$ and $N$ input tokens at the current chunk $\bm{X}_t \in \mathbb{R}^{d \times N}$. In detail, the recurrent cell first generates the query $\bm{Q}_t \in \mathbb{R}^{d \times 1}$ from $\bm{h}_{t-1}$. It also extracts from both $\bm{h}_{t-1}$ and $\bm{X}_t$ to get the key $\bm{K}_t = [
\bm{K}_{t,h} \hspace{2mm} \bm{K}_{t,x} ] \in \mathbb{R}^{d \times (1+N)}$ and similarly for the value $\bm{V}_t = [
\bm{V}_{t,h} \hspace{2mm} \bm{V}_{t,x} ] \in \mathbb{R}^{d \times (1+N)}$, where 
\begin{align*}
	\begin{bmatrix}
		\bm{Q}_t \\ \bm{K}_{t,h} \\ \bm{V}_{t,h}
	\end{bmatrix}
	= 
	\begin{bmatrix}
		\bm{W}_Q \\ \bm{W}_{K} \\ \bm{W}_{V}
	\end{bmatrix}
	\bm{h}_{t-1},   
	\quad \text{and} \quad
	\begin{bmatrix}
		\bm{K}_{t, x} \\ \bm{V}_{t, x} 
	\end{bmatrix}
	=
	\begin{bmatrix}
		\bm{W}_{K} \\ \bm{W}_{V}
	\end{bmatrix}
	\bm{X}_{t}.
\end{align*}
Then the current memory state is obtained from $\bm{h}_t^\top = \text{softmax}\big(\bm{Q}_t^\top \bm{K}_t / \sqrt{d}\big) \bm{V}_t^\top$. Notably, the weight matrices $\bm{W}_Q$, $\bm{W}_K$, $\bm{W}_V\in \mathbb{R}^{d\times d}$ decide how the previous memory state contributes to the memory update, shaping the recurrent dynamics of attention-based recurrent cell. Hence we also consider them as the recurrent matrices.

Once recurrent matrices are identified, the decomposition of recurrent dynamics 
for these variants follows directly from the vanilla RNN case.
Furthermore, the ParaRNN framework, comprising segregation, parallelization, and aggregation, can be readily applied to these networks by block diagonalizing recurrent matrices and parallelizing the resulting constituents.

\section{Numerical Studies}
\label{sec:result}
We first empirically verify the trade-off between performance and training speed in ParaRNN through a simulation, and then validate the sufficiency of $d_s=2$ in three tasks: time series forecasting, sequential image classification, and long-sequence genomics classification. Four baseline models are considered: RNN, LSTM, GRU, and the Block Recurrent Transformer (BRT) \citep{hutchins2022blockrecurrent}.
To distinguish between the baselines and our models, we use the suffixes ``Vanilla" and ``Para". Unless otherwise stated, the block size $d_s$ is uniformly set to two for all our models. All experiments are performed on a single V100 GPU and their training scheme can be found in Section \ref{sec:appendix-training} of the Supplementary Material.

\subsection{Simulations for Verifying the Trade-off in ParaRNN} 
\label{sec:result-simulation}
Firstly, we assess the performance of a single ParaRNN layer on data generated by a vanilla RNN of size $d=128$. 
We generate $N=50,000$ samples, with the $i$-th sample represented as $\big(x_{i, 1:T} \: , \bm{z}_i\big)$ and following the form:
\begin{linenomath}
	\begin{align*}
		\bm{z}_i= \bm{W}_y \bm{h}_{i, T} + \bm{b}_y + \bm{\epsilon}_i \in \mathbb{R}^{10}
		\quad \text{and} \quad 
		\bm{h}_{i, t}= \mathrm{Tanh}(\bm{W}_h \bm{h}_{i, t-1} + \bm{W}_x x_{i, t}+ \bm{b}_h) \in \mathbb{R}^{128}
	\end{align*}
\end{linenomath}
for $1 \leq t \leq T, 1 \leq i \leq N$.
In particular, the inputs $x_{i, 1:T}$ of length $T=128$ are generated from the ARMA$(1,1)$ process with the AR and MA coefficients being 0.7 and 0.3, respectively. 
All entries of the additive errors $\bm{\epsilon}_i$, the weights $\bm{W}$'s, and the biases $\bm{b}$'s are independently sampled from $N(0, 1)$.
For modeling, we employ one ParaRNN layer with a fully-connected layer. 
We vary the hidden size $d_s$ of the constituent RNNs among $\{2^{j}| j\in \mathbb{N}: 0 \leq j \leq 7 \}$ and let $K = d/d_s$, with $d_s=d=128$ corresponding to the vanilla RNN.
The blue curve in Figure \ref{fig:two}(c) shows the test mean squared errors (MSE) averaged over 25 replicates, whereas the corresponding standard deviations are reported in Section \ref{sec:appendix-simulation} of the Supplementary Material. Notably, there is a substantial drop in loss when $d_s$ changes from $1$ to $2$, whereas the change in performance for $d_s \geq 2$ is not significant. 
The same observation holds true even when considering a larger variance for $\bm{\epsilon}_i$, as elaborated in Section \ref{sec:appendix-simulation}.
This suggests that ParaRNN with $d_s=2$ can mostly recover the recurrent dynamics of the vanilla model.

Next, we evaluate the training speed of one ParaRNN layer under the same setting of $T$, $d$, and $d_s$.
It is worth mentioning that the time complexity presented in Theorem \ref{thm:complexity} is theoretical; while for implementation, operations among matrix rows can be further parallelized by CUDA. Therefore, the empirical execution time increases linearly with larger $d_s$ in Figure \ref{fig:two}(c). The corresponding standard deviations are provided in Section \ref{sec:appendix-simulation}.

\subsection{Time Series Forecasting} 
\label{sec:result-ts}
The Electricity Transformer Temperature (ETT) and Weather (WTH) data sets \citep{zhou2021informer} are employed in this multivariate time series forecasting task.
The ETT data set includes seven features related to the long-term electric power development, collected over two years, with hourly records in ETTh$_1$ and ETTh$_2$, and 15‑minute records in ETTm$_1$.
The task is to predict the target value ``oil temperature" using all features. The WTH data set contains twelve climatological features collected from around 1,600 U.S. locations between 2010 and 2013, where the goal is to predict the ``Wet Bulb" temperature.

\begin{table}[t!]
	\centering
	\caption{(a) The total winning count for time series forecasting. (b)-(e) The classification accuracies for permuted sequential MNIST / pixel-by-pixel CIFAR-10 / noise-padded CIFAR-10 / EigenWorms tasks, respectively. Better performances are in bold. The ``$\ast$" denotes the use of chrono-initialization \citep{tallec2018can} in the corresponding model.}
	\renewcommand{\arraystretch}{0.9}
	\resizebox{1.\textwidth}{!}{
		\begin{tabular}{cc|cc|cc|cc}
			\specialrule{1pt}{0.1pc}{0.1pc}  
			Vanilla RNN&ParaRNN&Vanilla LSTM&ParaLSTM&Vanilla GRU&ParaGRU&Vanilla BRT&ParaBRT\\
			\specialrule{1pt}{0.1pc}{0.1pc}  
			\multicolumn{8}{c}{(a) Time Series Forecasting (Count)}\\
			\specialrule{0.5pt}{0.1pc}{0.1pc}  
			7 & \textbf{27} & 7 & \textbf{26} & 8 & \textbf{26} & 15 & \textbf{21}\\
			\specialrule{1pt}{0.1pc}{0.1pc}  
			\multicolumn{8}{c}{(b) Permuted Sequential MNIST Classification (Accuracy in \%)}\\
			\specialrule{0.5pt}{0.1pc}{0.1pc}  
			90.31 & \textbf{93.93} & \textbf{94.20} & 94.10 & 94.44 & \textbf{94.67}& 97.87 & \textbf{97.90} \\
			\specialrule{1pt}{0.1pc}{0.1pc}  
			\multicolumn{8}{c}{(c) Pixel-by-pixel CIFAR-10 Classification (Accuracy in \%)}\\
			\specialrule{0.5pt}{0.1pc}{0.1pc}  
			31.80 & \textbf{36.37} & 66.35 & \textbf{66.40} & \textbf{70.61} & 70.06 & 74.01 & \textbf{74.06} \\
			\specialrule{1pt}{0.1pc}{0.1pc}  
			\multicolumn{8}{c}{(d) Noise-padded CIFAR-10 Classification (Accuracy in \%)}\\
			\specialrule{0.5pt}{0.1pc}{0.1pc}  
			/ & / & 56.71$^\ast$ & \textbf{57.07}$^\ast$ & 52.90$^\ast$ & \textbf{53.40}$^\ast$ & 68.85 & \textbf{68.91} \\ 
			\specialrule{1pt}{0.1pc}{0.1pc}  
			\multicolumn{8}{c}{(e) EigenWorms Classification (Accuracy in \%)}\\
			\specialrule{0.5pt}{0.1pc}{0.1pc}  
			43.33 & \textbf{43.59} & 42.82 & \textbf{43.59 }&  42.82 & \textbf{43.33} & 58.40 & \textbf{60.97} \\ 
			\specialrule{1pt}{0.1pc}{0.1pc}  
	\end{tabular}}
	\label{tab:exp}
\end{table}

Here and in Section \ref{sec:result-mnist}, we especially include the attention-based model, BRT. Different from the traditional transformers, BRT introduces a set of state vectors, $\bm{M} = (\bm{m}_1, \dots, \bm{m}_S) \in \mathbb{R}^{S \times d}$, where $S$ denotes the number of states and $d$ refers to the dimension of each state. These state vectors serve a similar role as the hidden states in RNNs by preserving memory from previous sequences. They are updated by self-attention to themselves and cross-attention to the input tokens, and thus operate in a recurrent way; please refer to the architecture details in \cite{hutchins2022blockrecurrent}. To apply our framework, we replace the multi-head self-attention of state vectors by the following: 
\begin{linenomath}
	\begin{align*}
		\text{ParaAttention}(\bm{M}) &= \text{Concat}[\text{head}_1, \dots, \text{head}_K] \bm{W}_O,\\
		\text{where}\hspace{3mm}\text{head}_k &= \text{Attention}(\bm{M}^{(k)}\bm{W}_Q^{(k)}, \bm{M}^{(k)}\bm{W}_K^{(k)}, \bm{M}^{(k)}\bm{W}_V^{(k)}),
	\end{align*} 
\end{linenomath}
and $\bm{M}^{(k)}= \bm{M}_{[:, \,(k-1)d_s + 1\,:\, kd_s]} \in \mathbb{R}^{S \times d_s}$. Here $\bm{W}$'s are projection matrices and we set $d_s = 2$. Compared to the multi-head attention, the attention of each head in our framework involves different source tokens $\bm{M}^{(k)}$, which are obtained by splitting $\bm{M}$'s feature dimension $d$. The number of heads $K$ is decided by $d = Kd_s$, which is different from the entirely heuristic choice in multi-head attention. As a result, the number of parameters involved in ParaAttention is significantly smaller than that in the vanilla multi-head attention. Following \cite{hutchins2022blockrecurrent} and \cite{huang2023encoding}, we insert one recurrent layer to the transformer, while the rest layers follow the Transformer-XL \citep{dai2019transformerxl} style; see the choices of hyperparameters in Table \ref{tab:setting} of Section \ref{sec:appendix-brt} in the Supplementary Material.
As for RNNs, LSTMs, and GRUs, we set the number of recurrent layers and the total hidden size to 2 and 128, respectively, and use the feedforward network to aggregate the hidden states. 

Table \ref{tab:exp}(a) reports the total winning count across all data sets, prediction horizons and evaluation metrics. Detailed results, including mean squared error (MSE) and mean absolute error (MAE) obtained from three replicates, are listed in Table \ref{tab:time-series} of Section \ref{sec:appendix-ts} in Supplementary Material.
Overall, adopting our framework incurs no significant loss; in some cases, it even outperforms the baselines, possibly due to the ease of training.
Besides, to support Theorem \ref{thm:dense}, we count the recurrence feature types learned by vanilla models in Figure \ref{fig:derivative}(b), where the more opaque and transparent colors correspond to R-1 and C-1 features, respectively. Notably, no other types are observed in our experiments with vanilla models.


\subsection{Sequential Image Classification} \label{sec:result-mnist}
We consider three multi-class sequential image classification tasks. The first, permuted sequential MNIST, is based on the MNIST dataset \citep{lecun1998grad} that comprises $28\times 28$ grayscale images of handwritten digits from 0 to 9. For this task, the inputs are flattened to sequences with length $T=784$ and input dimension $d_\mathrm{in}=1$, and then permuted using a predetermined random order. The second and third tasks, pixel-by-pixel CIFAR-10 and noise-padded CIFAR-10, employ the CIFAR-10 dataset \citep{krizhevsky2009learning} which contains $32\times 32$ color images across 10 object classes. The pixel-by-pixel CIFAR-10 task takes flattened images with RGB channels as inputs, resulting in sequences with $T=1024$ and $d_\mathrm{in}=3$. In noise-padded CIFAR-10, images are processed row-wise and flattened along RGB channels, producing 96-dimensional sequences, each of length 32. Furthermore, a random noise is padded after the first 32 inputs to create sequences with $T=1000$ and $d_\mathrm{in}=96$. This task poses the greatest challenge as it requires retaining information over long sequences.

For the permuted sequential MNIST and pixel-by-pixel CIFAR-10 tasks, we employ 128-dimensional ParaRNN, ParaLSTM and ParaGRU and their corresponding baselines. A FC layer or position-wise feedforward network is used during the aggregation step, with an additional linear projection layer applied to obtain outputs.
Detailed hyperparameter settings are provided in Section \ref{sec:appendix-brt} of the Supplementary Material. We also include BRT as a baseline, where the architecture design is the same as in Section \ref{sec:result-ts} and the hyperparameter settings are included in Table \ref{tab:setting} of Section \ref{sec:appendix-brt}.
For the noise-padded CIFAR-10 task, given that baseline RNN and LSTM models perform only marginally better than random guess, as reported in \cite{rusch2022long, rusch2021unicornn, chang2018antisymmetricrnn}, we exclude RNNs and follow \cite{rusch2022long} to adopt the chrono-initialization method from \cite{tallec2018can} for LSTMs and extend its applications to GRUs as well. 

The final test accuracies for both baseline and proposed models are presented in Table \ref{tab:exp}(b)-(d). ParaRNN surpasses its baseline, suggesting that it has already offered adequate recurrent dynamics to fit the data while the additional aggregation step may further aid learning. ParaLSTM, ParaGRU and ParaBRT exhibit comparable performance with their respective baselines, affirming that our framework generalizes well to various RNN variants.

Besides, Figure \ref{fig:recurrence-feature-type} presents the shift of recurrence feature types during training for vanilla RNN, LSTM and GRU. Specifically, their recurrent matrices are initialized using three distinct methods: random entries from a uniform distribution on $(-1/\sqrt{d}, 1/\sqrt{d})$, resulting in R-1 and C-1 features exclusively; an identity matrix, yielding solely R-1 features; and a block diagonal matrix with a mix of R-4 and C-2 features. Note that although these models have more than one recurrent matrix, they exhibit similar patterns of dominant low-order features as in Section \ref{subsec:dominant-feature}, and we thus just visualize one of them without loss of generality. 

\subsection{Very Long Sequences for Genomics Classification}
To assess the model's ability to classify very long sequences, we consider the EigenWorms dataset \citep{anthony2018the} that includes 259 sequences of length $T = 17984$. These sequences depict the motion of a worm with dimension $d_\mathrm{in} = 6$, and our target is to classify a worm as either wild-type or one of four mutant types. Two-layer RNN, LSTM, GRU, each with a hidden size of 32 following \cite{rusch2022long}, and four-layer BRT models are considered. Unlike in previous experiments, we adapt all BRT layers to be recurrent due to the excessive length of the sequences, preventing the out-of-memory issue; additional hyperparameter configurations can be found in Table \ref{tab:setting}. The average test accuracies over 10 random initializations for both baseline and proposed models are reported in Table \ref{tab:exp}(e). ParaRNN, ParaLSTM and ParaGRU perform comparably to their counterparts, whereas an improvement can be observed for ParaBRT. We speculate that since this task handles extended sequences with a limited sample size, the training of ParaBRT may benefit from the block diagonal structure of recurrent matrices without compromising recurrent dynamics.

\section{Conclusion and Discussion} \label{sec:conclusion}
This paper revisits the recurrent dynamics of vanilla RNNs and shows that they can be decoupled in an approximation sense. Motivated by this observation, we introduce an alternative recurrent model, ParaRNN, whose additive structure separates recurrent dynamics into tractable components characterized by recurrence features. This structure also naturally enables parallelization and improves computational efficiency. A guideline for hyperparameter selection is provided for practitioners to help them balance the trade-off between recurrence feature richness and parallelization efficiency. We establish approximation properties and non‑asymptotic prediction error bounds for ParaRNN in a nonparametric regression setting, demonstrating that it achieves expressive power comparable to vanilla RNNs. Empirical results on three sequential modeling tasks further confirms its effectiveness.

The proposed methodology suggests three possible extensions. Firstly, our framework is highly extensible and not limited to verified examples in Section \ref{sec:extension}. Other recurrent models, such as more RNN variants and combinations with state-of-the-art models like Transformer, can also leverage our framework to achieve more interpretable recurrent dynamics, thereby making them more suitable for statistical modeling.
Providing theoretical guarantees for these extensions also remains an important open problem.
Secondly, the theoretical analysis in Section \ref{subsec:theory} is conducted under a standard machine learning setup, where the sample size corresponds to the number of independent sequences $N$. Extending the results to classical time‑series settings, where the number of time points $T$ diverges, is left for future work.
Finally, to accommodate high‑dimensional inputs, additional low‑dimensional structures, such as sparsity or low-rankness, may be imposed on the input weight matrix $\bm{W}_x$.

\setlength{\bibsep}{5.5pt}

\clearpage
\hypersetup{pageanchor=false}
\providecommand{\theHpage}{}
\renewcommand{\theHpage}{supp.\arabic{page}}
\setcounter{page}{1}
\setcounter{section}{0}
\setcounter{subsection}{0}
\setcounter{subsubsection}{0}
\setcounter{equation}{0}
\setcounter{figure}{0}
\setcounter{table}{0}
\setcounter{algorithm}{0}
\setcounter{proposition}{3}
\numberwithin{equation}{section}
\numberwithin{lemma}{section}
\numberwithin{theorem}{section}
\numberwithin{assumption}{section}
\numberwithin{proposition}{section}
\numberwithin{definition}{section}
\numberwithin{figure}{section}
\renewcommand{\thesection}{\Alph{section}}
\providecommand{\theHsection}{}
\providecommand{\theHsubsection}{}
\providecommand{\theHsubsubsection}{}
\providecommand{\theHequation}{}
\providecommand{\theHfigure}{}
\providecommand{\theHtable}{}
\providecommand{\theHalgorithm}{}
\renewcommand{\theHsection}{supp.\Alph{section}}
\renewcommand{\theHsubsection}{supp.\Alph{section}.\arabic{subsection}}
\renewcommand{\theHsubsubsection}{supp.\Alph{section}.\arabic{subsection}.\arabic{subsubsection}}
\renewcommand{\theHequation}{supp.\Alph{section}.\arabic{equation}}
\renewcommand{\theHfigure}{supp.\Alph{section}.\arabic{figure}}
\renewcommand{\theHtable}{supp.\arabic{table}}
\renewcommand{\theHalgorithm}{supp.\arabic{algorithm}}
\renewcommand\thealgorithm{S.\arabic{algorithm}}
\renewcommand{\thetable}{S.\arabic{table}}
\begingroup
\renewcommand\maketitlehookc{\vspace{-10ex}}
\title{Supplementary Material for  ``ParaRNN: An Interpretable and Parallelizable Recurrent Neural Network for Time-Dependent Data''}
\date{\vspace{2.25em}\today}
\setlength{\droptitle}{-6em}
\maketitle
\endgroup
\setlength{\parindent}{16pt}

\begin{abstract}
	This supplementary material consists of four sections. Section \ref{appendix:sec-feature-simulation} presents simulation results supporting the prevalence of low-order recurrence features as discussed in Section \ref{subsec:dominant-feature} of the main paper.
	Section \ref{appendix-sec:proofl} gives technical proofs for theoretical results in Sections \ref{sec:motivation} and \ref{sec:pararnn} of the main paper, while Section \ref{appendix:sec:new-theory} includes the proofs for the justifications in Section \ref{subsec:theory} of the main paper. 
	Section \ref{appendix-sec:exp} provides more descriptions on the visualization of Figure \ref{fig:derivative}, along with further details on model settings, training scheme and results of Section \ref{sec:result} in the main paper. An additional empirical example, namely the adding problem, is also included in Section \ref{appendix-sec:exp}. 
\end{abstract}

\section{Additional Simulation Results for Section \ref{subsec:dominant-feature}}\label{appendix:sec-feature-simulation}
In this setting, we consider an RNN model as the data generating process (DGP). Given an input sequence $\bm{X} = (\bm{x}_1, \cdots,\bm{x}_T) \in \mathbb{R}^{T \times d_\mathrm{in}}$, the output $\bm{z} \in \mathbb{R}^d$ is generated by
\begin{align}
	\bm{h}_t = \text{Tanh}(\bm{W}_h^{\ast} \bm{h}_{t-1} + \bm{W}_x^{\ast} \bm{x}_t) \in \mathbb{R}^{d}, 
	\hspace{3mm} \text{and }
	\bm{z} = \bm{h}_T + \bm{\epsilon},
	\label{eq:simulation-dgp}
\end{align} 
where we do not introduce any other transformation between $\bm{h}_T$ and $\bm{z}$ for model identifiability.
We set $T=128$, $d_\mathrm{in}=1$, $d=128$, $\bm{h}_0=\bm{0}$, and generate entries of matrices $\bm{W}_h^\ast$, $\bm{W}_x^\ast \in \mathbb{R}^{d \times d_\mathrm{in}}$ and noise $\bm{\epsilon} \in \mathbb{R}^d$ independently from  $N(0, 0.1^2)$, $N(2, 2^2)$ and $N(0, 0.01^2)$. Besides, we fix the training sample size $N=50,000$.

To fit the data, we adopt an RNN model with the same recurrent layer as in \eqref{eq:simulation-dgp} and directly take $\bm{h}_T$ as the output. The number of parameters is $d^2 + dd_\mathrm{in} = 16,512$, which is smaller than the sample size. Three different initializations of recurrence matrices are considered: random entries from a uniform distribution on $(-1/\sqrt{d}, 1/\sqrt{d})$, resulting in R-1 and C-1 features exclusively; an identity matrix, yielding solely R-1 features; and a block diagonal matrix with a mix of R-4 and C-2 features. 
The training objective is to minimize the mean squared error (MSE) loss and the Adam optimizer is applied with an initial learning rate of $10^{-4}$. A reducing learning rate scheduler that drops the learning rate by a factor of 2 if validation performance no longer improves for three consecutive epochs is applied. Note that we have also explored a constant learning rate scheduler, and since it yields the same results as the reducing one, we only present one of them for conciseness.

\begin{figure}
	\vspace{-1cm}
	\centering
	\includegraphics[width=0.85\linewidth]{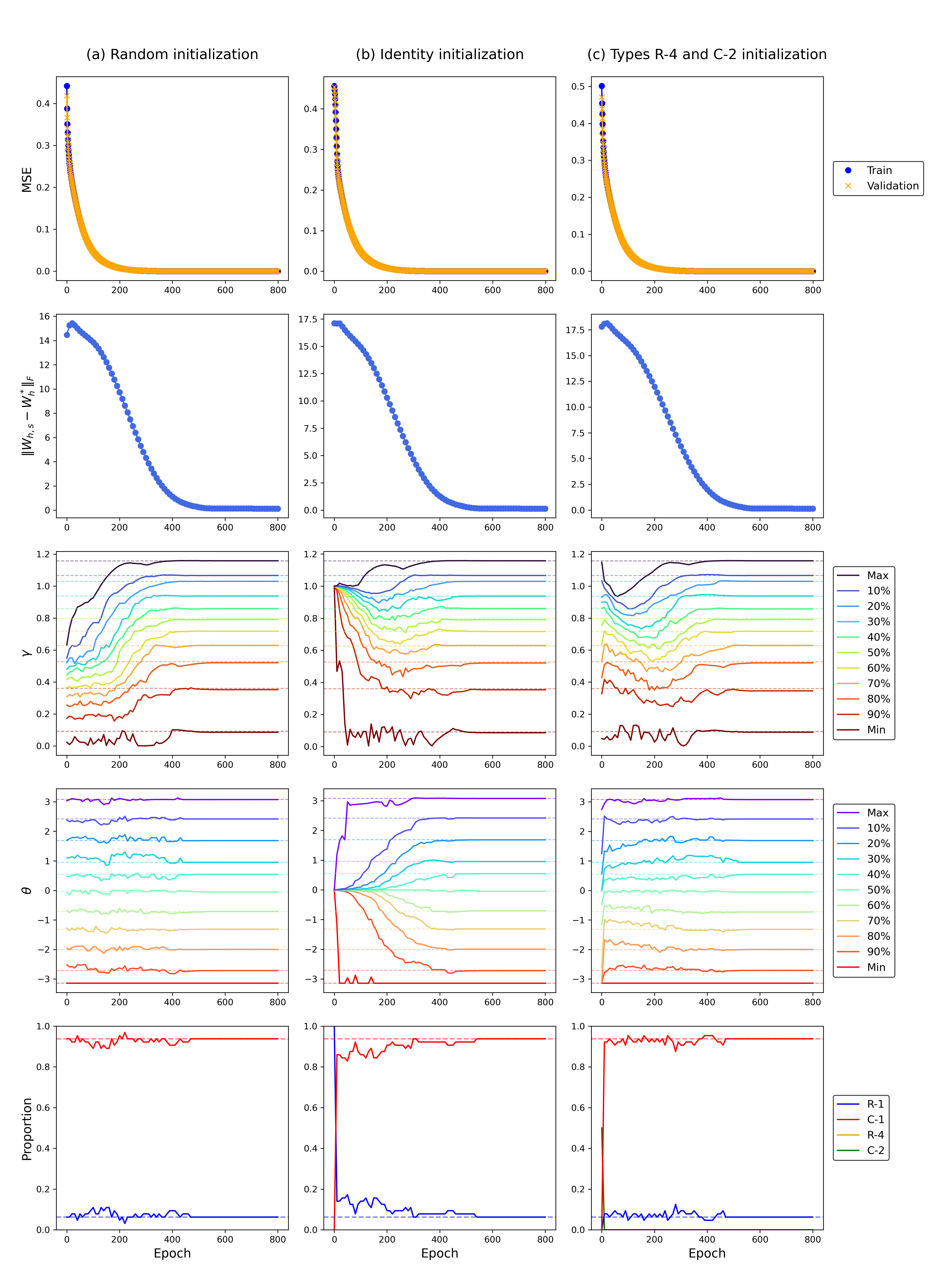}
	\caption{Simulation setting with entries of $\bm{W}_h^\ast$ in DGP from normal distribution: One-layer {RNN} with a {reducing} learning rate schedule. The first row displays the training and validation MSEs, while subsequent rows detail the learning of recurrent matrix in terms of $\lVert \bm{W}_{h, s}-\bm{W}_h^\ast \rVert_\mathrm{F}$, the percentiles of $\gamma$ and $\theta$, and the distribution of recurrence feature types. Three columns correspond to different initialization methods.}
	\label{fig:stat-random-adaptive}
\end{figure}

\begin{figure}
	\centering
	\includegraphics[width=\linewidth]{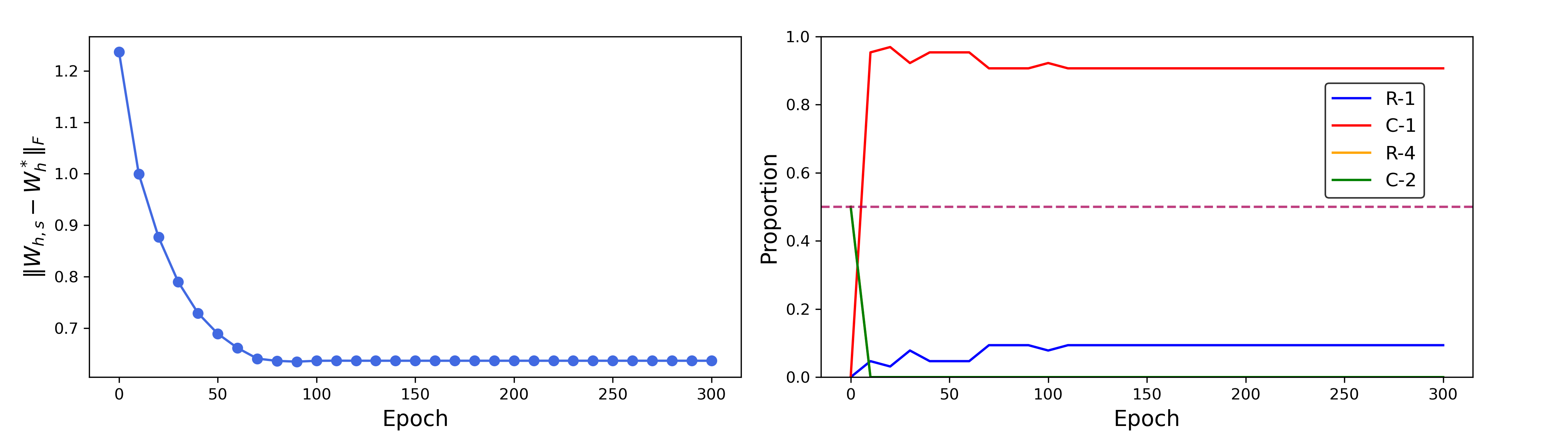}
	\caption{Simulation setting with $\bm{W}_h^\ast$ in DGP comprising R-4 and C-2 features: One-layer RNN with a {reducing} learning rate schedule. The left figure shows the learning of recurrent matrix in terms of $\lVert \bm{W}_{h, s}-\bm{W}_h^\ast \rVert_\mathrm{F}$, whereas the right presents the distribution of recurrence feature types.}
	\label{fig:stat-rnn-r4init}
\end{figure}

Under the asymptotic framework, overfitting is not an issue, as confirmed by overlapping training and validation loss curves in the first row of Figure \ref{fig:stat-random-adaptive}. Despite the non-convex nature of this optimization problem, it is possible for training to converge to a global minimum with a sufficiently large sample size. In this case, since the model \eqref{eq:simulation-dgp} is identifiable, the updated recurrence matrices from each epoch during training, denoted as $\{\bm{W}_{h, 1}, \bm{W}_{h, 2}, \dots, \bm{W}_{h, s}, \dots\}$ where $s$ is the epoch index, can converge to its true value $\bm{W}_h^\ast$. The second row of Figure \ref{fig:stat-random-adaptive} presents the difference between $\bm{W}_{h, s}$ and $\bm{W}_h^\ast$ in terms of the Frobenius norm across epochs, confirming the trend of convergence. 
Furthermore, by Theorem 1.1 from \cite{stewart1990matrix}, the eigenvalues of these updated recurrent matrices, which drive the recurrent dynamics of trained RNNs, can also converge to those of $\bm{W}_h^\ast$. 
The third and fourth rows of Figure \ref{fig:stat-random-adaptive} demonstrate that the percentiles of $\gamma, \theta$ approach the true values. Finally, while the recurrent matrix can converge, recurrence feature types may not align with those of $\bm{W}_h^\ast$. Since by the denseness of low-order features in our Theorem \ref{thm:dense}, trained RNNs tend to provide these features, even if $\bm{W}_h^\ast$ yields higher-order features. To verify this, we introduce a different DGP model where $\bm{W}_h^\ast$ in \eqref{eq:simulation-dgp} comprises a mix of R-4 and C-2 features. The corresponding recurrent matrix and recurrence feature types after gradient descent updates are provided in Figure \ref{fig:stat-rnn-r4init}. Notably, the learned recurrence features transition to low-order. Besides, both Figure \ref{fig:stat-rnn-r4init} and the last row of Figure \ref{fig:stat-random-adaptive} show a substantial proportion of Type C features. These are consistent with our findings in the real-world setting.


\section{Proof of Theoretical Results in Sections \ref{sec:motivation} and \ref{sec:pararnn}} \label{appendix-sec:proofl}
\subsection{Proof of Theorem \ref{thm:dense}}
\begin{proof}
	The denseness of $\mathbb{M}_d^2$ in $\mathbb{M}_d$ is directly given by Theorem 1 of \cite{hartfiel1995dense}. To prove that $\mathbb{M}_d^1$ is not dense in $\mathbb{M}_d$, it is sufficient to construct a matrix $\bm{W} \in \mathbb{M}_d$ and an $\epsilon > 0$ such that there exists no matrix $\tilde{\bm{W}} \in \mathbb{M}_d^1$ satisfying $\lVert{\tilde{\bm{W}} -\bm{W}}\rVert_\Fr \leq \epsilon$.
	
	Without loss of generality, we construct a matrix $\bm{W}$ with rank $R > 1$ in the following form:
	\begin{align*}
		\bm{W} = \begin{cases}
			\bm{C}_1(\gamma, \theta) \oplus \bm{0}_{d-R} & \text{for } R=2, \\ 
			\bm{C}_1(\gamma, \theta) \oplus \bm{J}_1(\lambda_1) \oplus \cdots \oplus \bm{J}_1(\lambda_{R-2}) \oplus \bm{0}_{d-R} & \text{for } R > 2,
		\end{cases}
	\end{align*}
	where $\lambda_1, \dots, \lambda_{R-2} \in \mathbb{R}$ are arbitrary and we let $\gamma=\sqrt{2}$ and $\theta = -\pi/4$.
	Note that $\bm{W}$ has nonzero complex eigenvalues $\gamma e^{i\theta}$ and  $\gamma e^{-i\theta}$ and for $R>2$, real eigenvalues $\lambda_1, \dots, $ $\lambda_{R-2}$.
	By \cite{meyer2023matrix}, the eigenvalues of $\bm{W}$ must vary continuously with the entries of $\bm{W}$. It implies that for sufficiently small $\epsilon>0$, any ${\bm{W}}^\prime$ satisfying $\lVert {\bm{W}}^\prime -\bm{W} \rVert_\Fr \leq \epsilon$ must retain at least one nonzero complex eigenvalue. Since any $\tilde{\bm{W}} \in \mathbb{M}_d^1$ can only have real eigenvalues, we have no matrix $\tilde{\bm{W}} \in \mathbb{M}_d^1$ satisfying $\lVert{\tilde{\bm{W}} -\bm{W}}\rVert_\Fr \leq \epsilon$. The proof is thus complete.
\end{proof}

\subsection{Proof of Proposition \ref{thm:recurrence-feature}}
\begin{proof}
	Suppose $\bm{W}_h = (\xi_{ij})_{1 \leq i,j \leq d} \in \mathbb{R}^{d \times d}$ is a random matrix.
	For clarity, we first denote a series of events, among which our target is to find the probabilities of events $C$ and $E$:
	\begin{itemize}
		\item Event $A$: All eigenvalues of $\bm{W}_h$ are distinct.
		\item Event $B$: All eigenvalues of $\bm{W}_h$ are real.
		\item Event $C$: A RNN whose recurrent matrix is $\bm{W}_h$ only provides Type R-1 recurrence features.
		\item Event $D$: A RNN whose recurrent matrix is $\bm{W}_h$ only provides Types R-1 and C-1 recurrence features.
		\item Event $E$: A RNN whose recurrent matrix is $\bm{W}_h$ only provides Types R-1, R-2, and C-1 recurrence features.
	\end{itemize}
	By \cite{tao2017random}, $P(A) = 1$ when $\xi_{ij}$'s has a continuous joint distribution. As demonstrated in Section \ref{subsec:recurrence-features}, which type of recurrence features that an RNN is able to provide is determined by the eigenvalues of the recurrent matrix $\bm{W}_h$. Therefore, the events A and D are equivalent and $P(D) = P(A)= 1$. Since $D \subseteq E$, $P(E) \geq P(D) = 1$; and thus we prove that with probability $1$, the RNN whose recurrent matrix is $\bm{W}_h$ only provides Types R-1, R-2, and C-1 recurrence features.
	
	By Corollary 7.1 of \cite{alan1997matrix}, when $\xi_{ij} \overset{\text{i.i.d}}{\sim} N(0, 1)$, $P(B) =  1 / 2^{d(d-1)/4}$. Since $C = A \cap B$, $P(C) \leq P(B) =1 / 2^{d(d-1)/4}$. Hence with probability at most $1 / 2^{d(d-1)/4}$, the RNN whose recurrent matrix is $\bm{W}_h$ only provides Type R-1 recurrence features.
\end{proof}

\subsection{Proof of Theorem \ref{thm:complexity}}
\begin{proof}
	Suppose $\bm{A} \in \mathbb{R}^{m \times n}$, $\bm{B} \in \mathbb{R}^{n \times p}$ and $\bm{v} \in \mathbb{R}^{n \times 1}$ are dense. We take the time complexity of computing $\bm{A}\bm{B}$ and $\bm{A}\bm{v}$ as $\mathcal{O}(mnp)$ and $\mathcal{O}(mn)$, respectively. Note that this is a naive version of counting the complexity and it can be replaced when a more efficient algorithm for matrix multiplication is used. Moreover, since the $K$ small RNNs in \eqref{eq:nonlinear-rnn-diag} are run in parallel, it suffices to analyze the branch with the largest hidden size $d_\mathrm{max}$, indexed by $k := \arg\max_i d_i$.
	
	\textbf{Forward Propagation.} For any $1 \leq t \leq T$, the complexities of computing $\bm{W}_h^{(k)} \bm{h}_{t-1}^{(k)}$ and $\bm{W}_x^{(k)} \bm{x}_t$ are $\mathcal{O}(d_\mathrm{max}^2)$ and $\mathcal{O}(d_\mathrm{max}d_\mathrm{in})$, respectively. Thus, the $T$-step time complexity is $\mathcal{O}(T d_\mathrm{max}^2 + T d_\mathrm{max} d_\mathrm{in})$. In addition, ParaRNN applies a linear projection to the concatenated hidden states, which incurs a matrix-vector multiplication cost of $\mathcal{O}(d^2)$. Then the overall time complexity of forward propagation is $\mathcal{O}(T d_\mathrm{max}^2 + T d_\mathrm{max} d_\mathrm{in} + d^2)$.
	
	\textbf{Backward Propagation.} For the backward propagation, we use the complexity of deriving $\partial \bm{h}_T^{(k)} / \partial \text{vec} \big( \bm{W}_h^{(k)} \big)$ as the time measure, which is representative since its derivation needs to traverse through the longest time steps. Denote $\bar{\bm{h}}_t^{(k)} = \bm{W}_h^{(k)} \bm{h}_{t-1}^{(k)} + \bm{W}_x^{(k)} \bm{x}_t + \bm{b}^{(k)}$ as the pre-activation hidden state. Then
	\begin{align*}
		\frac{\partial \bm{h}_T^{(k)}}{\partial \text{vec} \big( \bm{W}_h^{(k)} \big)} = &  
		\frac{\partial \bm{h}_T^{(k)}}{\partial \bar{\bm{h}}_T^{(k)}} 
		\Bigl(
		\frac{\partial \bar{\bm{h}}_T^{(k)}}{\partial \text{vec} \big( \bm{W}_h^{(k)} \big)} + 
		\frac{\partial \bar{\bm{h}}_T^{(k)}}{\partial \bm{h}_{T-1}^{(k)}} 
		\frac{\partial \bm{h}_{T-1}^{(k)}}{\partial \text{vec} \big( \bm{W}_h^{(k)} \big)}
		\Bigr) \\
		= & 
		\text{Diag}\big( \sigma_h'(\bar{\bm{h}}_T^{(k)}) \big) 
		\Bigl\{
		( \bm{h}_{T-1}^{(k)} \otimes \bm{I}_{d_\mathrm{max}})^\top + 
		\bm{W}_h^{(k)} 
		\frac{\partial \bm{h}_{T-1}^{(k)}}{\partial \text{vec} \big( \bm{W}_h^{(k)} \big)}
		\Bigr\} \\
		= & 
		\text{Diag}\big( \sigma_h'(\bar{\bm{h}}_T^{(k)}) \big)  
		( \bm{h}_{T-1}^{(k)} \otimes \bm{I}_{d_\mathrm{max}})^\top \\
		+ & 
		\sum_{j=1}^{T-1}
		\Bigl[
		\prod_{i=0}^{j-1}
		\text{Diag}\big( \sigma_h'(\bar{\bm{h}}_{T-i}^{(k)}) \big)
		\bm{W}_h^{(k)} 
		\Bigr]
		\text{Diag}\big( \sigma_h'(\bar{\bm{h}}_{T-j}^{(k)}) \big) 
		( \bm{h}_{T-j-1}^{(k)} \otimes \bm{I}_{d_\mathrm{max}})^\top. 
	\end{align*}
	For the simplicity of notations, we define two terms $\bm{B}_t := \text{Diag}\big( \sigma_h'(\bar{\bm{h}}_t^{(k)}) \big)  ( \bm{h}_{t-1}^{(k)} \otimes \bm{I}_{d_\mathrm{max}})^\top$ and $\bm{C}_t := \text{Diag}\big( \sigma_h'(\bar{\bm{h}}_{t}^{(k)}) \big)\bm{W}_h^{(k)}$. Consequently, we can represent the above derivative as $\partial \bm{h}_T^{(k)} / \partial \text{vec} \big( \bm{W}_h^{(k)} \big) = \bm{B}_T + \sum_{j=1}^{T-1} (\prod_{i=0}^{j-1} \bm{C}_{T-i}) \bm{B}_{T-j}$. Since both $\text{Diag}\big( \sigma_h'(\bar{\bm{h}}_t^{(k)}) \big)$ and $( \bm{h}_{t-1}^{(k)} \otimes \bm{I}_{d_\mathrm{max}})^\top$ are sparse and have special structures, the complexities of computing $\bm{B}_t$ and $\bm{C}_t$ are $\mathcal{O}(d_\mathrm{max}^2)$. The summation, $\sum_{j=1}^{T-1} (\prod_{i=0}^{j-1} \bm{C}_{T-i}) \bm{B}_{T-j}$ , has a complexity of $\sum_{j=1}^{T-1} j d_\mathrm{max}^3$, which accounts for the major complexity since $\bm{C}_t$'s are dense. As a result, the T-step time complexity of backward propagation is $\mathcal{O}(T^2 d_\mathrm{max}^3)$.

	Next, we consider the aggregation step in ParaRNN. During the backward propagation, we need to compute $\partial \bm{h}_T / \partial \text{vec} \big( \bm{W}_h^{(k)} \big) = \partial \bm{h}_T/  \partial \bm{h}_T^{(k)}  \cdot \partial \bm{h}_T^{(k)} / \partial \text{vec} \big( \bm{W}_h^{(k)} \big)$. This introduces an additional cost of order $\mathcal{O}(dd_{\mathrm{max}}^3)$. Combining the recurrent and aggregation costs, the total $T$-step backward complexity of a single ParaRNN layer is $\mathcal{O}(T^2 d_\mathrm{max}^3 + dd_\mathrm{max}^3)$.
\end{proof}


\section{Proof of Theoretical Results in Section \ref{subsec:theory}}\label{appendix:sec:new-theory}
\subsection{Proof of Approximation Error Bound in Theorem \ref{proposition:approx-rate-single-step}} \label{subsec:approx-proof-general}
This subsection proves the approximation error bound for ParaRNNs as defined in Section \ref{subsec:theory}. Specifically, the equivalence of ParaRNNs and FNNs is first illustrated in Section \ref{subsec:equivalence}, where the formal statement can be found in Proposition \ref{prop:equivalence-pararnn-fnn}. Then the proof of Theorem \ref{proposition:approx-rate-single-step} is given based on this equivalence in Section \ref{subsec:approx-proof}.

\subsubsection{Equivalence of ParaRNNs and FNNs}\label{subsec:equivalence}
To begin with, we first define the ParaRNN-based and FNN-based function classes.
\paragraph{Class of ParaRNNs} 
Denote $\mathcal{PRNN}$ as a class of ParaRNNs with bounded outputs:
\begin{align*}
	\mathcal{PRNN}^{(T)}_{d_\mathrm{in}, d_\mathrm{out}} (d, d_s, L, U) =  \{&\mathcal{N}: \mathcal{N}(\bm{X}_T) =  \mathcal{Q} \circ \mathcal{F} \circ \mathcal{R}_L \circ \cdots \circ \mathcal{R}_1 \circ \mathcal{P} (\bm{X}_T) \\
	&\text{ with } 
	\sup_{\bm{X}_T \in \mathbb{R}^{d_\mathrm{in} \times T}, t\in \{1, \dots, T\}} \norm{\mathcal{N}(\bm{X}_T)[t]}_\infty \leq U
	\},
\end{align*}
where the definitions of  $\mathcal{P}, \mathcal{R}_1, \dots, \mathcal{R}_L, \mathcal{F}, \mathcal{Q}$ are given in Section \ref{subsec:theory} of the main paper. Note that the function class $\mathcal{F}^{(T)}_{d_\mathrm{in}, d_\mathrm{out}, d, d_s, L, U}$ in the main paper can be equivalently written as $\mathcal{F}^{(T)}_{d_\mathrm{in}, d_\mathrm{out}, d, d_s, L, U} = \{\mathcal{N}(\bm{X}_T)[T]: \mathcal{N} \in 	\mathcal{PRNN}^{(T)}_{d_\mathrm{in}, d_\mathrm{out}} (d, d_s, L, U) \}$. 

For the clarity of notations, we elaborate more on the formulation of these layers. Given an input sequence $\bm{X}_T= (\bm{x}_{1}, \dots, \bm{x}_{T}) \in \mathbb{R}^{d \times T}$, each ParaRNN layer $\mathcal{R}_l$ takes the form of
\begin{align}
	\label{eq:rnn-block-diag}
	\mathcal{R}_l(\bm{X}_T)[t_0] = \sigma_h(\bm{W}_{h, l}	\mathcal{R}_l(\bm{X}_T)[t_0-1] + \bm{W}_{x, l} \bm{x}_{t_0} + \bm{b}_{l}), \hspace{3mm}\text{for }1\leq t_0 \leq T.
\end{align}
Here $\bm{W}_{h, l} = \oplus_{k=1}^K \bm{W}_{h, l}^{(k)} \in \mathbb{R}^{d\times d}$, $\bm{W}_{x, l} = ( {\bm{W}_{x, l}^{(1)}}^\top, \dots,  {\bm{W}_{x, l}^{(K)}}^\top)^\top \in \mathbb{R}^{d \times d}$, $\bm{b}_l = ({\bm{b}_l^{(1)}}^\top, \dots,$ $ {\bm{b}_l^{(K)}}^\top)^\top \in \mathbb{R}^d$, where $\bm{W}_{h, l}^{(k)} \in \mathbb{R}^{d_s \times d_s}, \bm{W}_{x, l}^{(k)} \in \mathbb{R}^{d_s \times d}, \bm{b}_l^{(k)} \in \mathbb{R}^{d_s}$ are the weights and bias of the recurrent cell defined in \eqref{eq:pararnn} of the main paper. Besides, the FC layer $\mathcal{F}$, and the linear transformation maps $\mathcal{P}, \mathcal{Q}$ are all applied position-wise. Therefore, we may write $\mathcal{F}(\bm{X}_T)[t_0] = \sigma(\bm{W}_f \bm{x}_{t_0}  + \bm{b}_f) \in \mathbb{R}^d$ for the given weight matrix $\bm{W}_f\in \mathbb{R}^{d\times d}$ and bias vector $\bm{b}_f \in \mathbb{R}^d$, $\mathcal{P}(\bm{X}_T)[t_0] = \bm{P}\bm{x}_{t_0}\in \mathbb{R}^d$ for the given weight matrix $\bm{P} \in \mathbb{R}^{d \times d_\mathrm{in}}$, and $\mathcal{Q}(\bm{X}_T)[t_0] = \bm{Q}\bm{x}_{t_0} \in \mathbb{R}^{d_\mathrm{out}}$ for the given weight matrix $\bm{Q} \in \mathbb{R}^{d_\mathrm{out} \times d}$.

\paragraph{Class of Feedforward Neural Networks}
We denote $\mathcal{FNN}$ as a class of feedforward neural networks:
\begin{align*}
	\mathcal{FNN}_{d_\mathrm{in}, d_\mathrm{out}} (d, L) = \{\mathcal{N}: \mathcal{N}(\bm{x} ) = \mathcal{F}_L \circ \cdots \circ \mathcal{F}_1(\bm{x}) \},
\end{align*}
where $\mathcal{F}_l$ represents a single feedforward layer of the form: $\mathcal{F}_l(\bm{x}) = \sigma(\bm{A}_l \bm{x} + \bm{b}_l)$ for $1 \leq l \leq L-1$ and $\mathcal{F}_L(\bm{x}) = \bm{A}_L \bm{x} + \bm{b}_L$. Here $\bm{A}_l \in \mathbb{R}^{d_l \times d_{l-1}}$ with $d_0 = d_\mathrm{in}$ and $d_L = d_\mathrm{out}$ are the weight matrices and $\bm{b}_l \in \mathbb{R}^{d_l}$ are the bias terms. Besides, the notation $d$ denotes the maximum width of the network, i.e., $d = \max_l d_l$. Note that to let FNN use the same input as the ParaRNN, we may vectorize the input $\bm{X}_T \in \mathbb{R}^{d_\mathrm{in} \times T}$ by stacking its columns, which results in $\vectorize(\bm{X}_T) = (\bm{x}_1^\top, \dots, \bm{x}_{T}^\top)^\top \in \mathbb{R}^{d_{\mathrm{in}}T}$.  

In addition to these two function classes, we also introduce a function class of the so-called modified ParaRNNs, which plays an intermediary role in proving the equivalence of $\mathcal{PRNN}$ and $\mathcal{FNN}$.

\paragraph{Class of Modified ParaRNNs} Define the modified recurrent layer of ParaRNN as
\begin{align*}
	\bar{\mathcal{R}}(\bm{X}_{T})[t_0]_i &= \sigma_I(\bm{W}_h \bar{\mathcal{R}}(\bm{X}_{T})[t_0-1] + \bm{W}_x \bm{x}_{t_0} + \bm{b})_i \\
	&= \begin{cases}
		\sigma_h(\bm{W}_h \bar{\mathcal{R}}(\bm{X}_{T})[t_0-1] + \bm{W}_x \bm{x}_{t_0}+ \bm{b})_i & \hspace{3mm}\text{if } i \in I\\
		(\bm{W}_h \bar{\mathcal{R}}(\bm{X}_{T})[t_0-1] + \bm{W}_x \bm{x}_{t_0} + \bm{b})_i & \hspace{3mm} \text{if } i \notin I
	\end{cases},
\end{align*}
where $\sigma_I$ is a partially activation function defined as
\begin{align*}
	\sigma_I(\bm{x})_i = \begin{cases}
		\sigma_h(x_i) & \hspace{3mm} \text{if } i \in I \\
		x_i   & \hspace{3mm} \text{if } i \notin I
	\end{cases}.
\end{align*}
In other words, its only difference with the ParaRNN layer is that the activation function is now partially activated on a specified set.
Similarly, we can also define the modified position-wise FC layer as 
\begin{align*}
	\bar{\mathcal{F}}(\bm{X}_{T})[t_0]_i = \sigma_I(\bm{W}_f \bm{x}_{t_0} + \bm{b}_f).
\end{align*}
Then denote a class of modified ParaRNNs with bounded outputs as $\mathcal{MPRNN}^{(T)}_{d_\mathrm{in}, d_\mathrm{out}}$ $(d, d_s, L, U)  = \{\mathcal{N}: \mathcal{N}(\bm{X}_{T}) =  \mathcal{Q} \circ \bar{\mathcal{F}}\circ \bar{\mathcal{R}}_L \circ \cdots \circ \bar{\mathcal{R}}_1 \circ \mathcal{P} (\bm{X}_T) \text{ with }
\sup_{\bm{X}_T \in \mathbb{R}^{d_\mathrm{in} \times T}, t\in \{1, \dots, T\}} $ $\norm{\mathcal{N}(\bm{X}_T)[t]}_\infty \leq U
\}.$

Proposition \ref{prop:equivalence-pararnn-fnn} proves the equivalence of $\mathcal{PRNN}$ and $\mathcal{FNN}$ by showing that any FNN/ ParaRNN can be represented by a deeper and wider ParaRNN/FNN. In particular, the proof of the former is established by demonstrating that any modified ParaRNN can be represented by a ParaRNN (Lemma \ref{lemma:prnn-mprnn}) and any FNN can be represented by a modified ParaRNN (Lemma \ref{lemma:mprnn-fnn}).
Without the loss of generality, we consider $\mathcal{PRNN}_{d_\mathrm{in}, d_\mathrm{out}} (d, d_s, L, U) :=\mathcal{PRNN}^{(T)}_{d_\mathrm{in}, d_\mathrm{out}} (d, d_s, L, U)$ and $\mathcal{MPRNN}_{d_\mathrm{in}, d_\mathrm{out}} (d, d_s, L, U) :=\mathcal{MPRNN}^{(T)}_{d_\mathrm{in}, d_\mathrm{out}} (d, d_s, L, U)$.
 
\begin{proposition}
	\label{prop:equivalence-pararnn-fnn}
	Let $t_0 \in \{1, \cdots, T\}$, $\bm{X} = (\bm{x}_1, \dots, \bm{x}_{T})$ and $\bm{X}_{t_0} = (\bm{x}_1, \dots, \bm{x}_{t_0})$. 
	\begin{itemize}
		\item[(i)] For any FNN $\tilde{\mathcal{N}} \in \mathcal{FNN}_{d_\mathrm{in}t_0, d_\mathrm{out}}(d, L)$, there exists a ParaRNN $\mathcal{N} \in \mathcal{PRNN}_{d_\mathrm{in}, d_\mathrm{out}} ((d_\mathrm{in} + 1)d + d_s,{d_s}, 2(T+L-1), U)$ with $d_s$ being divisible by $(d_\mathrm{in} + 1)d$ such that 
		\begin{align*}
			\tilde{\mathcal{N}}(\bm{X}_{t_0}) = \mathcal{N}(\bm{X})[t_0]
			\hspace{3mm}\text{for}\hspace{3mm} \bm{X}\in[0, 1]^{d_\mathrm{in} \times T}.
 		\end{align*}
 		\item[(ii)] For any ParaRNN $\mathcal{N} \in \mathcal{PRNN}_{d_\mathrm{in}, d_\mathrm{out}}(d, d_s, L, U)$, there exists an FNN $\tilde{\mathcal{N}} \in \mathcal{FNN}_{d_\mathrm{in}t_0, d_\mathrm{out}}$ $((2t_0-1)d, (t_0 + 1)L+2)$ such that
 		\begin{align*}
 		\mathcal{N}(\bm{X})[t_0]=\tilde{\mathcal{N}}(\bm{X}_{t_0} )
 		\hspace{3mm}\text{for}\hspace{3mm} \bm{X}\in \mathbb{R}^{d_\mathrm{in} \times T}. 
 		\end{align*}
	\end{itemize}
\end{proposition}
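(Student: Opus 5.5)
The plan for (i) has two steps, matching the route announced before the statement. First, Lemma \ref{lemma:mprnn-fnn}: write any FNN as a modified ParaRNN. Second, Lemma \ref{lemma:prnn-mprnn}: write any modified ParaRNN as a genuine ParaRNN.

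\textbf{Step 1 (FNN to modified ParaRNN).} I follow the construction of \cite{song2023minimal,jiao2024approximation}. The first $T$ recurrent layers do no nonlinear work; they only bring past inputs to time $t_0$. In a modified layer whose coordinates are left unactivated ($i\notin I$), I use a shift recurrent matrix to copy $\bm{x}_{t_0-j}$ into a slot of the hidden state at time $t_0$. After these layers, the hidden state at time $t_0$ contains $\vectorize(\bm{X}_{t_0})$, using width about $d_\mathrm{in}d$ once blocks are padded. The remaining $L-1$ layers set $\bm{W}_h=\bm{0}$ and $\bm{W}_x=\bm{A}_l$ with activation on all coordinates, so they act position-wise as the FNN layers. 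The final affine layer $\bm{A}_L$ is absorbed into $\bar{\mathcal{F}}$ and $\mathcal{Q}$.

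The block-diagonal constraint is the delicate point, because a shift of a $d_\mathrm{in}$-vector through $T$ slots is not block diagonal with blocks of size $d_s$. I would avoid needing it. Instead of one recurrent shift, I use one layer per delay: each layer's $\bm{W}_x$ (which is dense, a $d\times d$ matrix) moves the previously delayed copies into fresh coordinates, and a $d_s\times d_s$ block of $\bm{W}_h$ holds one delayed value per step. Alternatively, I could set $\bm{W}_h^{(k)}$ to a nilpotent $d_s\times d_s$ shift that delays a coordinate by one step. Either choice plausibly explains the depth count $T$ and the extra width $d_s$.

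\textbf{Step 2 (modified ParaRNN to ParaRNN).} Each unactivated coordinate is made positive and later recovered. On the compact domain $[0,1]^{d_\mathrm{in}\times T}$ all pre-activations are bounded by some $M$. I replace each identity coordinate $z$ by the pair $\sigma(z)$ and $\sigma(-z)$, and recover $z=\sigma(z)-\sigma(-z)$ through the next layer's $\bm{W}_x$ or $\bm{W}_h$. This recovery is linear, so it can be folded into the next weights. A bias shift $z+M\ge 0$ would also work.

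Step 2 doubles the layer count, giving $2(T+L-1)$, and roughly doubles the width. The recurrent part must be handled with care: if the recovery matrix mixes the pair $\sigma(z),\sigma(-z)$, then the two coordinates must lie in the same $d_s$-block, which I keep block diagonal by placing the pair inside one block. The output bound $U$ passes through unchanged, because the function computed is identical.

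\textbf{Part (ii) (ParaRNN to FNN).} I unroll the recurrence. Layer $l$ at time $t$ depends on layer $l-1$ at times $1,\dots,t$. An FNN on $\vectorize(\bm{X}_{t_0})$ keeps, at each stage, all $t_0$ hidden states of the current layer. The affine maps are $\bm{W}_{h,l}\bm{h}_{t-1}+\bm{W}_{x,l}\bm{x}_t+\bm{b}_l$, and these are built sequentially in $t$, costing $t_0$ sub-layers per recurrent layer.

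Earlier-time states, and the raw inputs still needed, are carried forward through ReLU by the identity trick $z=\sigma(z)-\sigma(-z)$. This needs at most about $(2t_0-1)d$ width. ReLU states are already nonnegative, so only the pass-through of the linear $\mathcal{P}$ output requires the doubling. Counting gives depth $(t_0+1)L+2$, with the extra $+2$ covering $\mathcal{P}$, $\mathcal{F}$ and $\mathcal{Q}$.

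I expect the main obstacle to be the block-diagonal bookkeeping in (i): fitting the time-delay construction into $d_s$-blocks and matching the stated width $(d_\mathrm{in}+1)d+d_s$ exactly.
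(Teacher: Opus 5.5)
Your part (ii) is essentially the paper's unrolling argument, and splitting (i) into FNN $\to$ modified ParaRNN $\to$ ParaRNN is also the paper's route. The gaps are in how you realize the two steps of (i) within the stated sizes.

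\textbf{Step 1.} Storing $\vectorize(\bm{X}_{t_0})$ in the hidden state costs $d_\mathrm{in}t_0$ coordinates, not ``about $d_\mathrm{in}d$''. The target width $(d_\mathrm{in}+1)d+d_s$ does not depend on $t_0$, because $d$ is the hidden width (the paper writes $\tilde{\bm{A}}_1\in\mathbb{R}^{d\times d_\mathrm{in}t_0}$). For instance, with $d=1$ your network needs width $d_\mathrm{in}t_0$, against the allowed $d_\mathrm{in}+1+d_s$. Your delays also need an off-diagonal entry inside a recurrent block, so they are unavailable when $d_s=1$. That case is allowed, since $d_s$ is meant to be any factor of $(d_\mathrm{in}+1)d$.

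The missing idea is not to store the history at all. For each first-layer neuron $k$, take a block of width $d_\mathrm{in}+1$ with \emph{diagonal} recurrent matrix $\mathrm{Diag}(0,\ldots,0,1)$ and input matrix $\left(\begin{smallmatrix}\bm{I}_{d_\mathrm{in}} & \bm{0}\\ \bm{b}^{(k)}_l & 1\end{smallmatrix}\right)$, and stack $T$ such layers. The accumulator at time $t_0$ then equals $\sum_{i=1}^{T}\sum_{j=1}^{t_0}\binom{T+t_0-i-j}{T-i}\bm{b}^{(k)}_i\bm{x}_j$. Since $\bm{\Lambda}_T=\{\binom{2T-i-j}{T-i}\}_{i,j}$ is invertible, you can choose the $\bm{b}^{(k)}_i$ so that this is exactly the $k$-th row of $\tilde{\bm{A}}_1$ applied to $\vectorize(\bm{X}_{t_0})$. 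One more layer adds $\tilde{\bm{c}}_1$ and applies ReLU. This gives width $(d_\mathrm{in}+1)d$ with a diagonal recurrence, which is compatible with every block size.

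\textbf{Step 2.} Splitting an unactivated coordinate into $\sigma(z),\sigma(-z)$ does not preserve the block size. The recurrent row of that coordinate involves every coordinate of its $d_s$-block, so after splitting the unactivated coordinates a block can grow to size $2d_s$. The splitting also roughly doubles the width instead of adding $d_s$. Nothing in it doubles the depth, so $2(T+L-1)$ is asserted rather than derived.

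The bias shift you mention in passing is the right tool, but the shift re-enters through $\bar{\bm{W}}_h\bm{h}_{t-1}$ and must be cancelled. The paper adds one extra $d_s$-block carrying a constant $z_0$ and uses $\sigma(z_0+\delta x)=z_0+\delta x$ on the compact domain. Each modified layer becomes two layers:
\begin{itemize}
\item a non-recurrent layer producing $z_0\mathbf{1}+\delta(\bar{\bm{W}}_x\bm{x}_t+\bar{\bm{b}})$;
\item a recurrent layer whose matrix is $\bar{\bm{W}}_h\oplus\bm{0}$ multiplied on both sides by diagonal matrices, so the block pattern survives. Its bias cancels the $z_0$-terms fed back through the recurrence, and its initial state carries the shift; with $\bm{h}_0=\bm{0}$ that cancellation would be wrong at $t=1$.
\end{itemize}
That two-layer replacement is exactly what produces width $+d_s$ and depth $\times 2$.
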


\begin{proof}[Proof of Proposition \ref{prop:equivalence-pararnn-fnn}(i)]
	The conclusion can be obtained by combining Lemmas \ref{lemma:prnn-mprnn} and \ref{lemma:mprnn-fnn}.
\end{proof}

\begin{proof}[Proof of Proposition \ref{prop:equivalence-pararnn-fnn} (ii)]
	This proof closely follows Lemma 9 of \cite{jiao2024approximation}. We first consider a single ParaRNN recurrent layer $\mathcal{R}$:
	\begin{align*}
		\mathcal{R}(\bm{X})[t] = \sigma_h(\bm{W}_h \mathcal{R}(\bm{X})[t-1] + \bm{W}_x \bm{x}[t] + \bm{b})
	\end{align*} 
	with $\bm{W}_h$ being a block diagonal matrix and each block of size $d_s \times d_s$; 
	and prove that there exists a FNN $\tilde{\mathcal{N}} \in \mathcal{FNN}_{t_0d, t_0d}((2t_0-1)d, t_0+1)$ satisfying that 
	\begin{align*}
		\tilde{\mathcal{N}}
		\begin{pmatrix}
			\bm{x}[1] \\ \bm{x}[2] \\ \vdots \\ \bm{x}[t_0]
		\end{pmatrix}
		= 
		\begin{pmatrix}
			\mathcal{R}(\bm{X})[1] \\ \mathcal{R}(\bm{X})[2] \\ \vdots \\ \mathcal{R}(\bm{X})[t_0]
		\end{pmatrix},
	\end{align*}
where $\bm{x}[i] \in \mathbb{R}^d$ is the $i$-th column of the matrix $\bm{X}$. 
As suggested by \cite{jiao2024approximation}, one can construct a FNN $\tilde{\mathcal{N}} = \mathcal{F}_{t_0+1} \circ \mathcal{F}_{t_0} \circ \cdots \circ \mathcal{F}_{1}$ and set their weight matrices and bias terms as 
\begin{align*}
	\tilde{\bm{A}}_1 = \begin{pmatrix}
		\bm{W}_x & \bm{0}_{d \times d} & \cdots & \bm{0}_{d \times d} \\
		\bm{0} _{d \times d}& \bm{I}_d & \cdots & \bm{0}_{d \times d} \\
		\bm{0}_{d \times d} & -\bm{I}_d & \cdots & \bm{0}_{d \times d} \\
		\vdots & \vdots & \ddots & \vdots \\
		\bm{0}_{d \times d} & \bm{0}_{d \times d} & \cdots & \bm{I}_d \\
		\bm{0}_{d \times d} & \bm{0}_{d \times d} & \cdots & -\bm{I}_d \\
	\end{pmatrix} \in \mathbb{R}^{(2t_0 - 1)d  \times t_0 d},
	\hspace{2mm}
	\tilde{\bm{b}}_1 = \begin{pmatrix}
		\bm{b} \\ 0 \\ 0 \\ \vdots \\ 0
	\end{pmatrix} \in \mathbb{R}^{(2t_0 - 1)d},
\end{align*}
for $l=2, \dots, t_0$, 
\begin{align*}
	\tilde{\bm{A}_l}  = \begin{pmatrix}
		\bm{I}_d & & & & & & & &  \\
		& \ddots & & & & & & & \\
		&  & \bm{I}_d & & & & & & \\
		&  & & \bm{I}_d & \bm{0}_{d \times d} &  \bm{0}_{d \times d}  & & & \\
		&  & & -\bm{I}_d & \bm{0}_{d \times d} &  \bm{0}_{d \times d}  & & & \\
		&  & & \bm{W}_h & \bm{W}_x &  -\bm{W}_x & & & \\
		& & & & & & \bm{I}_d  & &  \\
		&  & & & & & & \ddots & \\
		&  &  & & & & & & \bm{I}_d \\
	\end{pmatrix} \in \mathbb{R}^{(2t_0 - 1)d  \times t_0 d},
	\hspace{2mm}
	\tilde{\bm{b}}_l = \begin{pmatrix}
		0 \\ \vdots \\ 0 \\ \bm{b} \\ 0 \\ \vdots \\ 0
	\end{pmatrix}  \in \mathbb{R}^{(2t_0 - 1)d},
\end{align*}
and 
\begin{align*}
	\tilde{\bm{A}}_{t_0 + 1} =
	\begin{pmatrix}
		\bm{I}_d & -\bm{I}_d & & & & & & \\
		 & & \bm{I}_d & -\bm{I}_d & & & & \\
		 & &  & & \ddots & & & \\
		 & &  & &  &\bm{I}_d & -\bm{I}_d & \\
		  & &  & &  & & & \bm{I}_d \\
	\end{pmatrix} \in \mathbb{R}^{(2t_0 - 1)d  \times t_0 d},
	\hspace{2mm}
	\tilde{\bm{b}}_{t_0+1} = \bm{0}_{(2t_0 - 1)d} \in \mathbb{R}^{(2t_0 - 1)d}.
 \end{align*}
Note that $\tilde{\bm{A}}_l$ has $2l-4$ identity matrices in the upper left corner and $2t_0 - 2l$ in the lower right corner, $\tilde{\bm{b}}_l$ has $(2l-2)d$ zero rows before $\bm{b}$, and it does not matter whether $\mathcal{F}_{t_0+1}$ includes the ReLU activation. It can be verified that 
\begin{align*}
	\tilde{\mathcal{N}}
	\begin{pmatrix}
		\bm{x}[1] \\ \bm{x}[2] \\ \vdots \\ \bm{x}[t_0]
	\end{pmatrix}
	= 
	\mathcal{F}_{t_0 + 1}
	\begin{pmatrix}
		\sigma_h(\mathcal{R}(\bm{X})[1]) \\ \sigma_h(-\mathcal{R}(\bm{X})[1]) \\
		\sigma_h(\mathcal{R}(\bm{X})[2]) \\ \sigma_h(-\mathcal{R}(\bm{X})[2]) \\ \cdots \\
		\sigma_h(\mathcal{R}(\bm{X})[t_0 - 1]) \\ \sigma_h(-\mathcal{R}(\bm{X})[t_0 - 1]) \\ \mathcal{R}(\bm{X})[t_0]
	\end{pmatrix}
  = \begin{pmatrix}
  	 \mathcal{R}(\bm{X})[1] \\  \mathcal{R}(\bm{X})[2] \\ \vdots \\  \mathcal{R}(\bm{X})[t_0]
  \end{pmatrix},
\end{align*}
where we utilize the fact that $\sigma_h(\bm{x}) - \sigma_h(-\bm{x}) = \bm{x}$.

Further, for a tokenwise FNN layer $\mathcal{F}(\bm{X})[t] = \sigma(\bm{W} \bm{x}_t  + \bm{b})$, one can construct a FNN layer $\tilde{\mathcal{F}}$ with $\tilde{\bm{A}} = \bm{I}_{t_0} \otimes \bm{W}_f$ and $\tilde{\bm{b}} = \bbm{1}_{t_0} \otimes \bm{b}_f$ such that 
\begin{align*}
	\tilde{\mathcal{F}} 	\begin{pmatrix}
		\bm{x}[1] \\ \bm{x}[2] \\ \vdots \\ \bm{x}[t_0]
	\end{pmatrix} = 
	 \begin{pmatrix}
		\mathcal{F}(\bm{X})[1] \\  \mathcal{F}(\bm{X})[2] \\ \vdots \\  \mathcal{F}(\bm{X})[t_0]
	\end{pmatrix}.
\end{align*}

Then for any ParaRNN $\mathcal{N} = \mathcal{Q} \circ \mathcal{F} \circ \mathcal{R}_L \circ \cdots \circ \mathcal{R}_1 \circ \mathcal{P} \in \mathcal{PRNN}_{d_\mathrm{in}, d_\mathrm{out}}(d, d_s, L, U)$, we can find $\tilde{\mathcal{N}}_1, \dots, \tilde{\mathcal{N}}_L \in  \mathcal{FNN}_{t_0d, t_0d}((2t_0-1)d, t_0+1)$, a FNN layer $\tilde{\mathcal{F}}$ and linear mappings $\tilde{\mathcal{P}}$ and $\tilde{\mathcal{Q}}$ such that
\begin{align*}
	\tilde{\mathcal{Q}} \circ \tilde{\mathcal{F}} \circ \tilde{\mathcal{N}}_L \circ \cdots \circ (\tilde{\mathcal{N}}_1 \tilde{\mathcal{P}})
	(\vectorize (\bm{X}_{t_0})) =  \mathcal{Q} \circ \mathcal{F} \circ \mathcal{R}_L \circ \cdots \circ \mathcal{R}_1 \circ \mathcal{P}(\bm{X})[t_0] = \mathcal{N}(\bm{X})[t_0].
\end{align*}
It can be checked that $\tilde{\mathcal{Q}} \circ\tilde{ \mathcal{F}} \circ \tilde{\mathcal{N}}_L \circ \cdots \circ (\tilde{\mathcal{N}}_1 \tilde{\mathcal{P}}) \in \mathcal{FNN}_{d_\mathrm{in}t_0, d_\mathrm{out}}((2t_0-1)d, (t_0 + 1)L+2)$, and the proof is thus finished.
\end{proof}

\begin{lemma}\label{lemma:prnn-mprnn}
	For any modified ParaRNN $\bar{\mathcal{N}} \in \mathcal{MPRNN}_{d_\mathrm{in}, d_\mathrm{out}}(d, d_s, L, U)$, there exists a ParaRNN $\mathcal{N} \in \mathcal{PRNN}_{d_\mathrm{in}, d_\mathrm{out}}(d+d_s, d_s, 2L, U)$ such that
	\begin{align*}
		\bar{\mathcal{N}}(\bm{X})[t] = \mathcal{N}(\bm{X})[t] \hspace{3mm}\text{for } \bm{X} \in [0, 1]^{d_\mathrm{in} \times T}, t \in \{1, \dots, T\}.
	\end{align*}
\end{lemma}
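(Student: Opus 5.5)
We simulate each modified layer by two ordinary ParaRNN layers. Since the input is in $[0,1]$ and the maps are continuous, every pre-activation is bounded; in fact we only need that ReLU applied to a value shifted by a large constant acts as the identity. We use a bias-shift trick. For coordinates $i\notin I$ we want the identity $x\mapsto x$ instead of $\sigma_h(x)=\max(x,0)$. Since the inputs range over the compact set $[0,1]^{d_\mathrm{in}\times T}$ and $T$ is finite, every hidden pre-activation of $\bar{\mathcal{N}}$ is bounded in absolute value by some constant $M$. This bound is obtained by induction over layers and time steps, because each layer is a continuous map of finitely many bounded quantities.

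For such coordinates we therefore compute $\sigma_h(x+M)=x+M$ in the first layer, and then subtract $M$ through the bias of the next layer, which enters linearly before its own activation. The extra $d_s$ hidden units, forming one additional block of size $d_s$, carry a constant channel; for instance a unit with zero recurrence and bias $1$ outputs $\sigma_h(1)=1$ at every time step. Any correction that must pass through an input weight can then use this channel.

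The construction for one modified layer $\bar{\mathcal{R}}_l$ proceeds as follows. \textbf{Layer $2l-1$} is a ParaRNN layer of width $d+d_s$ whose first $d$ coordinates run the recurrence of $\bar{\mathcal{R}}_l$ in shifted coordinates. We set $\bm{u}_t=\bar{\mathcal{R}}_l[t]+\bm{c}$, where $c_i=M$ for $i\notin I$ and $c_i=0$ for $i\in I$. Then $\bm{u}_t=\sigma_h(\bm{W}_h\bm{u}_{t-1}+\bm{W}_x\bm{x}_t+\bm{b}+\bm{c}-\bm{W}_h\bm{c})$ reproduces $\sigma_I$ exactly, since on $I$ nothing is shifted and off $I$ the argument is nonnegative. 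Here $\bm{W}_h\bm{c}$ respects the block structure because $\bm{W}_h$ is block diagonal, so the recurrent matrix is unchanged. Only the bias changes, which is allowed. The initial state $\bm{u}_0=\bm{0}$ differs from $\bm{c}$, so the first step needs care. We handle this with the constant channel: a time-indicator unit in the extra block equals $1$ only at $t=1$, realized as $\sigma_h(1-2s_{t-1})$ with $s$ the constant-$1$ unit. Through the input weight it adds back $\bm{W}_h\bm{c}$ at $t=1$. If that proves awkward, an alternative is to shift the initial value by a preliminary step.

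\textbf{Layer $2l$} has zero recurrent matrix on the main block. Its input map is the identity with bias $-\bm{c}$, applied to nonnegative outputs, so $\sigma_h(\bm{u}_t-\bm{c})$ would destroy negative values off $I$. We therefore do not undo the shift here. Instead we keep the data in shifted form and absorb $-\bm{c}$, and the shift for the next modified layer, into the input weights $\bm{W}_{x,l+1}$ and bias of the following layer via the constant channel. The second layer per modified layer then serves only to regenerate the constant and indicator channels, and to pad the depth to $2L$. At the end, the modified FC layer $\bar{\mathcal{F}}$ receives the same shift treatment, and $\mathcal{Q}$ subtracts the residual shift linearly. Because $\mathcal{Q}$ has no bias, this subtraction is done via the constant unit, which $\mathcal{F}$ passes through. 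The output bound $U$ is inherited since the outputs agree.

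The main obstacle is the bookkeeping of the constant channel and of the first time step under block-diagonal recurrence, and verifying that every inserted correction can be written as input weight plus bias without violating the block structure. The uniform bound $M$ over $[0,1]^{d_\mathrm{in}\times T}$, by contrast, is routine.
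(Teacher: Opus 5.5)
The shifted recurrence $\bm{u}_t=\bar{\mathcal{R}}_l[t]+\bm{c}$, with bias correction $\bm{c}-\bm{W}_h\bm{c}$, is correct. However, with the layer order you chose, the first-step repair fails for the first modified layer.

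\textbf{The gap.} The recurrent matrix is block diagonal, so the indicator $e_t=\sigma_h(1-2s_{t-1})$ in the extra block cannot reach the main block within the same layer. It can only enter through input weights, which means the preceding layer must produce it. You put the recurrence for $\bar{\mathcal{R}}_l$ in layer $2l-1$ and regenerate the channels in layer $2l$. Then layers $3,5,\dots$ receive $e_t$, but layer $1$ sees only $\mathcal{P}(\bm{X})$. That map is linear with no bias, so it carries neither a constant nor a time indicator. Hence $\bar{\mathcal{R}}_1$ is reproduced with an error $-\bm{W}_h\bm{c}$ inside the activation at $t=1$.

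\textbf{The fix.} Swap the two layers in each pair.
Layer $2l-1$ has zero recurrence on the main block and passes its nonnegative input through. It also generates $s_t\equiv 1$ and $e_t$ recurrently in its extra block.
Layer $2l$ runs your shifted recurrence and adds $\bm{W}_h\bm{c}\,e_t$ through its input weights. Its extra block regenerates $s_t$ for later use by $\mathcal{F}$ and $\mathcal{Q}$.
For $l=1$ the pass-through is valid because $\bm{x}_t\in[0,1]^{d_\mathrm{in}}$; alternatively, add $M$ to $\bar{\bm{P}}\bm{x}_t$ via the bias. The depth stays $2L$.

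\textbf{Comparison with the paper.} With that repair, your construction is essentially the paper's at $\delta=1$. The paper uses the same scheme:
\begin{itemize}
\item a zero-recurrence layer computing the shifted input map, followed by the recurrent layer;
\item the non-$I$ coordinates carried as $z_0\bm{1}+\delta\,\bar{\mathcal{R}}$;
\item $\bar{\bm{W}}_h$ rescaled by $\delta$-dependent diagonal matrices, with the analogous bias corrections;
\item an extra unit equal to $z_0$ as the constant channel, through which $\mathcal{Q}$ removes the residual shift.
\end{itemize}
The $\delta$-scaling comes from arguments for general activations and is unnecessary for ReLU. Your pure shift by a uniform bound $M$ on all pre-activations is therefore a legitimate simplification.

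The real difference is the first time step. The paper gives $\mathcal{R}_2^\delta$ the nonzero initial hidden state $(\bm{0}_k, z_0\bm{1}_{d-k}, \bm{0}_{d_s})$, which departs from the zero initialization in Algorithm \ref{alg:ParaRNN}. Your indicator makes the same correction with all initial states zero. The paper's route is shorter. Yours, once reordered, yields a ParaRNN in the strict sense, at the price of needing $d_s\ge 2$ for the constant/indicator pair; this holds because $d_s$ is even.

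\textbf{A shared caveat.} The claim ``the output bound $U$ is inherited since the outputs agree'' holds only on $[0,1]^{d_\mathrm{in}\times T}$. The class requires the bound over all of $\mathbb{R}^{d_\mathrm{in}\times T}$. Off the cube, the shifted coordinates can leave the region where $\sigma_h$ acts as the identity, and the constructed network can have unbounded output even though $\bar{\mathcal{N}}$ is bounded. The paper's proof is equally silent on this. In effect, the supremum in the class definition has to be read over the unit cube.
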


\begin{proof}[Proof of Lemma \ref{lemma:prnn-mprnn}]
	This proof adapts from Lemma 7 of \cite{jiao2024approximation}.
	For any modified ParaRNN $\bar{\mathcal{N}} \in \mathcal{MPRNN}_{d_\mathrm{in}, d_\mathrm{out}}(d, d_s, L, U)$, it is constructed by $L$ modified recurrent layers $\bar{\mathcal{R}}_1, \dots, \bar{\mathcal{R}}_L$ with $\bar{\mathcal{R}}_l(\bm{X})[t] = \sigma_I(\bar{\bm{W}}_{h, l} \bar{\mathcal{R}}_l(\bm{X})[t-1] + \bar{\bm{W}}_{x, l} \bm{x}_t + \bar{\bm{b}}_l)$, a position-wise FC layer $\bar{\mathcal{F}}(\bm{X})[t] = \sigma_I (\bar{\bm{W}}_f\bm{x}_t + \bar{\bm{b}}_f)$, and tokenwise input and output projection  maps: $\bar{\mathcal{P}}: \mathbb{R}^{d_\mathrm{in} \times T} \mapsto \mathbb{R}^{d \times T}$ and $\bar{\mathcal{Q}}: \mathbb{R}^{d \times T} \mapsto \mathbb{R}^{d_\mathrm{out} \times T}.$ This can be equivalently written as 
	\begin{align*}
		\bar{\mathcal{N}} = \bar{\mathcal{Q}} \circ \bar{\mathcal{F}} \circ \bar{\mathcal{R}}_L \circ \cdots \circ \bar{\mathcal{R}}_1 \circ \bar{\mathcal{P}}.
	\end{align*}
	Without loss of generality, $d_\mathrm{in} = d_\mathrm{out} = d$ is assumed. 
	
	Let us first consider the case where $L=1$. We show that for any compact subset $\Omega \subset \mathbb{R}^{d\times T}$, there exist ParaRNN recurrent layers $\mathcal{R}_1, \mathcal{R}_2: \mathbb{R}^{(d+d_s) \times T} \mapsto \mathbb{R}^{(d+d_s) \times T}$ where the initial hidden states of $\mathcal{R}_2$ are non-zeros, a position-wise FC layer $\mathcal{F}: \mathbb{R}^{(d+d_s) \times T} \mapsto \mathbb{R}^{(d+d_s) \times T}$, and tokenwise projection maps: $\mathcal{P}: \mathbb{R}^{d \times T} \mapsto \mathbb{R}^{(d+d_s) \times T}$ and $\mathcal{Q} \in \mathbb{R}^{(d+d_s) \times T} \mapsto  \mathbb{R}^{d \times T}$ such that for any $\bm{X} \in \Omega$,
	$\bar{\mathcal{Q}} \circ \bar{\mathcal{F}} \circ \bar{\mathcal{R}}_1 \circ \bar{\mathcal{P}}(\bm{X}) = {\mathcal{Q}} \circ {\mathcal{F}} \circ {\mathcal{R}_2} \circ \mathcal{R}_1 \circ {\mathcal{P}}(\bm{X})$. 
	
	Without loss of generality, we may assume $\bar{\mathcal{P}}$ is an identity map and the index set $I = \{1, 2, \dots, k\}$. We define the input projection map $\mathcal{P}$ as 
	\begin{align*}
		\mathcal{P}(\bm{X})[t] = \begin{pmatrix}
			\bm{I}_d \\ \bm{0}_{d_s, d}
		\end{pmatrix} \bm{x}_t = \begin{pmatrix}
		\bm{x}_t \\ \bm{0}_{d_s}
	\end{pmatrix},
	\end{align*}
	which introduces $d_s$ additional dimensions.
	
	Further, for the ReLU activation $\sigma$, we have for a fixed point $z_0 > 0$, $\sigma(z_0 + \delta x) = z_0 + \delta x$ if $\delta$ is sufficiently small. Then for $\delta > 0$, we can construct 
	\begin{align*}
		\mathcal{R}_1^\delta(\bm{X})[t] = \sigma_h
		\left(\delta\begin{pmatrix}
			\bar{\bm{W}}_{x, 1} & \\ & \bm{0}_{d_s, d_s} 
		\end{pmatrix} \bm{X}[t] + \delta \begin{pmatrix}
		\bar{\bm{b}}_1 \\ \bm{0}_{d_s}
	\end{pmatrix}
	+ z_0 \begin{pmatrix}
		 \bm{1}_{d+1} \\ \bm{0}_{d_s - 1}
	\end{pmatrix}
	\right) \in \mathbb{R}^{d + d_s}.
	\end{align*}
	Since its recurrent matrix is $\bm{0}_{d+d_s, d+d_s}$, $\mathcal{R}_1^\delta$ can be considered as a ParaRNN recurrent layer. Combined with the input projection map, we have
	\begin{align*}
		\mathcal{R}_1^\delta \circ \mathcal{P}(\bm{X}) [t] = \sigma_h
		\begin{pmatrix}
			\delta 	\bar{\bm{W}}_{x, 1} \bm{x}_t + \delta \bar{\bm{b}}_1 + z_0 \bm{1}_d \\
			z_0 \\ \bm{0}_{d_s - 1}
		\end{pmatrix}
		= \begin{pmatrix}
			 z_0 \bm{1}_d + \delta(	\bar{\bm{W}}_{x, 1} \bm{x}_t  + \bar{\bm{b}}_1 ) \\ z_0 \\ \bm{0}_{d_s - 1}
		\end{pmatrix}.
	\end{align*}
	
	Next, we construct another recurrent layer
	\begin{align*}
		&\mathcal{R}_2^\delta(\bm{X})[t] \\ = & \sigma_h \left(
		\tilde{\bm{W}}_{h, 1} \mathcal{R}_2^\delta(\bm{X})[t-1]  + 
		\begin{pmatrix}
			\delta^{-1} \bm{I}_k & \\ & \bm{I}_{d+d_s-k} 
		\end{pmatrix} \bm{X}[t] +
		\begin{pmatrix}
			-\delta^{-1} z_0 \bm{1}_k  - \delta^{-1} z_0 [\bar{\bm{W}}_{h, 1}]_{1:k, k+1:d} \bm{1}_{d-k}  \\ 
			- z_0 [\bar{\bm{W}}_{h, 1}]_{k+1:d, k+1:d} \bm{1}_{d-k} \\
			\bm{0}_{d_s}
		\end{pmatrix}
		\right),
	\end{align*}
	and let the recurrent matrix 
	\begin{align*}
		\tilde{\bm{W}}_{h, 1} = 
		\begin{pmatrix}
			\bm{I}_{k} & \\ & \delta \bm{I}_{d+d_s - k} 
		\end{pmatrix}
		\begin{pmatrix}
			\bar{\bm{W}}_{h, 1} & \\ & \bm{0}_{d_s, d_s}
		\end{pmatrix}
		\begin{pmatrix}
			\bm{I}_k & & \\ & \delta^{-1}\bm{I}_{d-k} & \\ & & \bm{0}_{d_s, d_s}
		\end{pmatrix}
	\end{align*}
	and the initial hidden state $	\mathcal{R}_2^\delta(\bm{X})[0] = \begin{pmatrix}
	\bm{0}_k \\ z_0 \bm{1}_{d-k} \\ \bm{0}_{d_s}
	\end{pmatrix}$. Note that it can be easily verified that $\tilde{\bm{W}}_{h, 1}$ is a block diagonal matrix with each block of size $d_s \times d_s$.
	We next proceed to find out the form of $\mathcal{R}_2^\delta \circ \mathcal{R}_1^\delta \circ \mathcal{P}(\bm{X})[t]$ by mathematical induction.
	When $t=1$, we have
	\begin{align*}
		\mathcal{R}_2^\delta \circ \mathcal{R}_1^\delta \circ \mathcal{P}(\bm{X})[1] & = 
		\sigma_h \left(
		\tilde{\bm{W}}_{h, 1} 
		\begin{pmatrix}
			\bm{0}_k \\ z_0 \bm{1}_{d-k} \\ \bm{0}_{d_s}
		\end{pmatrix} \right.  +
		\begin{pmatrix}
			\delta^{-1} \bm{I}_k & \\ & \bm{I}_{d+d_s-k} 
		\end{pmatrix}
		 \begin{pmatrix}
			z_0 \bm{1}_d + \delta(	\bar{\bm{W}}_{x, 1} \bm{x}_1  + \bar{\bm{b}}_1 ) \\ z_0 \\ \bm{0}_{d_s - 1} 
		\end{pmatrix}  \\ &  \hspace{4cm} \left. +
		\begin{pmatrix}
			-\delta^{-1} z_0 \bm{1}_k  - \delta^{-1} z_0 [\bar{\bm{W}}_{h, 1}]_{1:k, k+1:d} \bm{1}_{d-k}  \\ 
			- z_0 [\bar{\bm{W}}_{h, 1}]_{k+1:d, k+1:d} \bm{1}_{d-k} \\
			\bm{0}_{d_s}
		\end{pmatrix}
		\right) \\
		& = \sigma_h 
		\begin{pmatrix}
			(\bar{\bm{W}}_{x, 1} \bm{x}_1 + \bar{\bm{b}}_{1})_{1:k} \\
			z_0 \bm{1}_{d-k} + \delta (\bar{\bm{W}}_{x, 1} \bm{x}_1 +  \bar{\bm{b}}_1)_{k+1:d} \\
			z_0 \\
			\bm{0}_{d_s-1} 
		\end{pmatrix} \\
	& = \begin{pmatrix}
	\bar{\mathcal{R}}_1 \circ \bar{ \mathcal{P}} (\bm{X})[1]_{1:k} \\
	z_0 \bm{1}_{d-k} + \delta 	\bar{\mathcal{R}}_1 \circ \bar{ \mathcal{P}} (\bm{X})[1]_{k+1:d} \\
	z_0 \\ \bm{0}_{d_s-1}
	\end{pmatrix}.
	\end{align*}
	Then assume 
	\begin{align*}
		\mathcal{R}_2^\delta \circ \mathcal{R}_1^\delta \circ \mathcal{P}(\bm{X})[t-1] =
		 \begin{pmatrix}
			\bar{\mathcal{R}}_1\circ \bar{ \mathcal{P}} (\bm{X})[t-1]_{1:k} \\
			z_0 \bm{1}_{d-k} + \delta 	\bar{\mathcal{R}}_1 \circ \bar{ \mathcal{P}} (\bm{X})[t-1]_{k+1:d} \\
			z_0 \\ \bm{0}_{d_s-1}
			\end{pmatrix}.
	\end{align*}
	We can derive the non-recurrent part of $\mathcal{R}_2^\delta \circ \mathcal{R}_1^\delta \circ \mathcal{P}(\bm{X})[t]$ as
	\begin{align*}
		& \begin{pmatrix}
			\delta^{-1} \bm{I}_k & \\ & \bm{I}_{d+d_s-k} 
		\end{pmatrix} 
		\begin{pmatrix}
			z_0 \bm{1}_d + \delta(	\bar{\bm{W}}_{x, 1} \bm{x}_t  + \bar{\bm{b}}_1 ) \\ z_0 \\ \bm{0}_{d_s - 1}
		\end{pmatrix} +
		\begin{pmatrix}
			-\delta^{-1} z_0 \bm{1}_k  - \delta^{-1} z_0 [\bar{\bm{W}}_{h, 1}]_{1:k, k+1:d} \bm{1}_{d-k}  \\ 
			- z_0 [\bar{\bm{W}}_{h, 1}]_{k+1:d, k+1:d} \bm{1}_{d-k} \\
			\bm{0}_{d_s}
		\end{pmatrix} \\
	= &  \begin{pmatrix}
		(\bar{\bm{W}}_{x, 1} \bm{x}_t + \bar{\bm{b}}_1)_{1:k} \\
		z_0 \bm{1}_{d-k} + \delta (\bar{\bm{W}}_{x, 1} \bm{x}_t +  \bar{\bm{b}}_1)_{k+1:d} \\
		z_0 \\
		\bm{0}_{d_s-1} 
	\end{pmatrix} + 
	\begin{pmatrix}
		 - \delta^{-1} z_0 [\bar{\bm{W}}_{h, 1}]_{1:k, k+1:d} \bm{1}_{d-k}  \\ 
		- z_0 [\bar{\bm{W}}_{h, 1}]_{k+1:d, k+1:d} \bm{1}_{d-k} \\
		\bm{0}_{d_s}
	\end{pmatrix}, 
	\end{align*}
	and thus
	\begin{align*}
			& \mathcal{R}_2^\delta \circ \mathcal{R}_1^\delta \circ \mathcal{P}(\bm{X})[t] \\ = & 
			\sigma_h \left(
			\tilde{\bm{W}}_{h, 1} \mathcal{R}_2^\delta \circ \mathcal{R}_1^\delta \circ \mathcal{P}(\bm{X})[t-1] +
			\begin{pmatrix}
				(\bar{\bm{W}}_{x, 1} \bm{x}_t + \bar{\bm{b}}_1)_{1:k} \\
				z_0 \bm{1}_{d-k} + \delta (\bar{\bm{W}}_{x, 1} \bm{x}_t +  \bar{\bm{b}}_1)_{k+1:d} \\
				z_0 \\
				\bm{0}_{d_s-1} 
			\end{pmatrix} + 
			\begin{pmatrix}
				- \delta^{-1} z_0 [\bar{\bm{W}}_{h, 1}]_{1:k, k+1:d} \bm{1}_{d-k}  \\ 
				- z_0 [\bar{\bm{W}}_{h, 1}]_{k+1:d, k+1:d} \bm{1}_{d-k} \\
				\bm{0}_{d_s}
			\end{pmatrix}
			\right) \\
			= & \sigma_h \left(
			\begin{pmatrix}
				[\bar{\bm{W}}_{h, 1}]_{1:k, :} \bar{\mathcal{R}}  \circ \bar{\mathcal{P}} (\bm{X}) [t-1]  + \delta^{-1} z_0 [\bar{\bm{W}}_{h, 1}]_{1:k, k+1:d} \bm{1}_{d-k}\\
				\delta [\bar{\bm{W}}_{h, 1}]_{k+1:d, :} \bar{\mathcal{R}} \circ \bar{\mathcal{P}} (\bm{X}) [t-1] + z_0 [\bar{\bm{W}}_{h, 1}]_{k+1:d, k+1:d} \bm{1}_{d-k} \\
				\bm{0}_{d_s}
			\end{pmatrix} + 
		\begin{pmatrix}
			(\bar{\bm{W}}_{x, 1} \bm{x}_1 + \bar{\bm{b}}_1)_{1:k} \\
			z_0 \bm{1}_{d-k} + \delta (\bar{\bm{W}}_{x, 1} \bm{x}_1 +  \bar{\bm{b}}_1)_{k+1:d} \\
			z_0 \\
			\bm{0}_{d_s-1} 
		\end{pmatrix}  \right.\\  & \hspace{4cm}+ \left.
		\begin{pmatrix}
			- \delta^{-1} z_0 [\bar{\bm{W}}_{h, 1}]_{1:k, k+1:d} \bm{1}_{d-k}  \\ 
			- z_0 [\bar{\bm{W}}_{h, 1}]_{k+1:d, k+1:d} \bm{1}_{d-k} \\
			\bm{0}_{d_s}
		\end{pmatrix}
			\right) \\
		= & 
		\begin{pmatrix}
			\bar{\mathcal{R}}_1 \circ \bar{\mathcal{P}} (\bm{X})[t]_{1:k} \\
			z_0 \bm{1}_{d-k} + \delta \bar{\mathcal{R}}_1 \circ \bar{\mathcal{P}} (\bm{X})[t]_{k+1:d} \\
			z_0 \\ \bm{0}_{d_s - 1}
		\end{pmatrix}.
	\end{align*}
	
	Further, we can construct the position-wise FC layer as
	\begin{align}
		&\mathcal{F}(\bm{X})[t] \notag \\= &\sigma \left(
		\begin{pmatrix}
			\bm{I}_k & & \\
			& \delta \bm{I}_{d-k} & \\
			& & \bm{I}_{d_s}
		\end{pmatrix}
		\begin{pmatrix}
			\bar{\bm{W}}_f & \\
			& \bm{0}_{d_s, d_s}
		\end{pmatrix}
		\begin{pmatrix}
			\bm{I}_k & & & \\
			& \delta^{-1} \bm{I}_{d-k} & -\delta^{-1} \bm{1}_{d-k} &  \\
			& & & \bm{0}_{d_s, d_s-1}
		\end{pmatrix}
		\bm{X}[t]  \right.  \notag \\ 
		&\hspace{8cm} + \left.
		\begin{pmatrix}
			[\bar{\bm{b}}_f ]_{1:k} \\ \delta 	[\bar{\bm{b}}_f ]_{k+1:d} + z_0 \bm{1}_{d-k}  \\ z_0 \\  \bm{0}_{d_s-1}
		\end{pmatrix}
		\right),		
		\label{eq:fnn-construction}
	\end{align}
	which leads to
	\begin{align*}
		\mathcal{F} \circ \mathcal{R}_2^\delta \circ \mathcal{R}_1^\delta \circ  \mathcal{P}(\bm{X})[t] = \begin{pmatrix}
			\bar{\mathcal{F}}\circ \bar{\mathcal{R}}_1 \circ \bar{\mathcal{P}} (\bm{X})[t]_{1:k} \\
			\delta \bar{\mathcal{F}}\circ \bar{\mathcal{R}}_1 \circ \bar{\mathcal{P}} (\bm{X})[t]_{k+1:d} + z_0 \bm{1}_{d-k}  \\
			z_0 \\ \bm{0}_{d_s-1}
		\end{pmatrix}.
	\end{align*}	
	Finally, let the output projection matrix 
	\begin{align}
		\bm{Q} = \begin{pmatrix}
			\bar{\bm{Q}} &  \bm{0}_{d, d_s}
		\end{pmatrix}
		\begin{pmatrix}
			\bm{I}_k & & & \\
			& \delta^{-1} \bm{I}_{d-k} & -\delta^{-1} \bm{1}_{d-k} &  \\
			& & & \bm{0}_{d_s, d_s-1}
		\end{pmatrix}. \label{eq:mrnn-rnn-Q}
	\end{align}
	One can see that 
	\begin{align*}
		\mathcal{Q}\circ\mathcal{F} \circ\mathcal{R}_2^\delta \circ \mathcal{R}_1^\delta\circ \mathcal{P}(\bm{X})[t]
		= 	\bar{\mathcal{Q}}\circ\bar{\mathcal{F}} \circ\bar{ \mathcal{R}}_1  \circ \bar{\mathcal{P}}(\bm{X})[t]
		\hspace{3mm} \text{for } 1\leq t \leq T.
	\end{align*} 
	As $z_0$ is arbitrary, we set $z_0 = \max\{ \max_{1\leq t \leq T, 1 \leq k \leq d} \sup_{\bm{X} \in \Omega} \lvert \bar{\mathcal{R}}_1 \circ \bar{\mathcal{P}} (\bm{X}) [t]_k \rvert, \max_{1\leq t \leq T, 1 \leq k \leq d}$ $ \sup_{\bm{X} \in \Omega} \lvert \bar{\mathcal{F}} \circ \bar{\mathcal{R}}_1 \circ \bar{\mathcal{P}} (\bm{X}) [t]_k \rvert\}$ and $\delta \leq 1$ to ensure that $\sigma(z_0 + \delta x) =z_0 + \delta x$ holds.
	
	We next consider the case that $L=2$. Consider a ParaRNN recurrent layer of the form 
	\begin{align*}
		\mathcal{R}_3^\delta(\bm{X})[t] = \sigma_h
		\left(\delta\begin{pmatrix}
			\bar{\bm{W}}_{x, 2} & \\ & \bm{0}_{d_s, d_s} 
		\end{pmatrix} 
		\begin{pmatrix}
			\bm{I}_k & & & \\
			& \delta^{-1} \bm{I}_{d-k} & -\delta^{-1} \bm{1}_{d-k} &  \\
			& & & \bm{0}_{d_s, d_s-1}
		\end{pmatrix}
		\bm{X}[t] \right. \\ 
		\left.+ \delta \begin{pmatrix}
			\bar{\bm{b}}_2 \\ \bm{0}_{d_s}
		\end{pmatrix}
		+ z_0 \begin{pmatrix}
			\bm{1}_{d+1} \\ \bm{0}_{d_s - 1}
		\end{pmatrix}\right),
	\end{align*}
	and thus
	\begin{align*}
		\mathcal{R}_3^\delta  \circ\mathcal{R}_2^\delta \circ \mathcal{R}_1^\delta\circ \mathcal{P}(\bm{X})[t]
		=  \begin{pmatrix}
			z_0 \bm{1}_d + \delta(\bar{\bm{W}}_{x, 2} \bar{\mathcal{R}}_1 \circ \bar{\mathcal{P}}(\bm{X})[t] + \bar{\bm{b}}_2) \\
			z_0 \\ \bm{0}_{d_s-1}
		\end{pmatrix}.
	\end{align*}
	We further construct another ParaRNN recurrent layer
	\begin{align*}
		&\mathcal{R}_4^\delta(\bm{X})[t] \\ = & \sigma_h \left(
		\tilde{\bm{W}}_{h, 2} \mathcal{R}_4^\delta(\bm{X})[t-1]  + 
		\begin{pmatrix}
			\delta^{-1} \bm{I}_k & \\ & \bm{I}_{d+d_s-k} 
		\end{pmatrix} \bm{X}[t] +
		\begin{pmatrix}
			-\delta^{-1} z_0 \bm{1}_k  - \delta^{-1} z_0 [\bar{\bm{W}}_{h, 2}]_{1:k, k+1:d} \bm{1}_{d-k}  \\ 
			- z_0 [\bar{\bm{W}}_{h, 2}]_{k+1:d, k+1:d} \bm{1}_{d-k} \\
			\bm{0}_{d_s}
		\end{pmatrix}
		\right),
	\end{align*}
	and let the recurrent matrix 
	\begin{align*}
		\tilde{\bm{W}}_{h, 2} = 
		\begin{pmatrix}
			\bm{I}_{k} & \\ & \delta \bm{I}_{d+d_s - k} 
		\end{pmatrix}
		\begin{pmatrix}
			\bar{\bm{W}}_{h, 2} & \\ & \bm{0}_{d_s, d_s}
		\end{pmatrix}
		\begin{pmatrix}
			\bm{I}_k & & \\ & \delta^{-1}\bm{I}_{d-k} & \\ & & \bm{0}_{d_s, d_s}
		\end{pmatrix}
	\end{align*}
	and the initial hidden state $	\mathcal{R}_4^\delta(\bm{X})[0] = \begin{pmatrix}
		\bm{0}_k \\ z_0 \bm{1}_{d-k} \\ \bm{0}_{d_s}
	\end{pmatrix}$. 
	This gives rise to 
	\begin{align*}
		\mathcal{R}_4^\delta \circ \mathcal{R}_3^\delta  \circ\mathcal{R}_2^\delta \circ \mathcal{R}_1^\delta\circ \mathcal{P}(\bm{X})[t] 
		= \begin{pmatrix}
			\bar{\mathcal{R}}_2 \circ \bar{\mathcal{R}}_1 \circ \bar{\mathcal{P}}(\bm{X})[t]_{1:k} \\
			z_0 \bm{1}_{d-k} + \delta (\bar{\mathcal{R}}_2 \circ \bar{\mathcal{R}}_1 \circ \bar{\mathcal{P}}(\bm{X})[t]_{k+1:d}) \\
			z_0 \\ \bm{0}_{d_s -1}
		\end{pmatrix}
	\end{align*}
	Then applying the same FC layer as in \eqref{eq:fnn-construction} and the same output projection matrix as in \eqref{eq:mrnn-rnn-Q}, we have 
	\begin{align*}
		\mathcal{Q}\circ\mathcal{F} \circ \mathcal{R}_4^\delta \circ \mathcal{R}_3^\delta  \circ\mathcal{R}_2^\delta \circ \mathcal{R}_1^\delta\circ \mathcal{P}(\bm{X})[t]
		= 	\bar{\mathcal{Q}}\circ\bar{\mathcal{F}} \circ \bar{ \mathcal{R}}_2 \circ\bar{ \mathcal{R}}_1  \circ \bar{\mathcal{P}}(\bm{X})[t]
		\hspace{3mm} \text{for } 1\leq t \leq T.
	\end{align*} 

	For a general $L$ and any modified ParaRNN $\bar{\mathcal{N}} \in \mathcal{PRNN}_{d_\mathrm{in}, d_\mathrm{out}}(d, d_s, L, U)$, we can find $2L$ ParaRNN recurrent layers $\mathcal{R}_1, \dots, \mathcal{R}_{2L}$, a position-wise FC layer $\mathcal{F}$, and projection maps $\mathcal{P}$ and $\mathcal{Q}$ such that 
	\begin{align*}
		\bar{\mathcal{N}}(\bm{X}) &= \bar{\mathcal{Q}} \circ \bar{\mathcal{F}} \circ \bar{\mathcal{R}}_L \circ \cdots \circ \bar{\mathcal{R}}_1 \circ \bar{\mathcal{P}}(\bm{X}) \\
		 &=  \mathcal{Q} \circ \mathcal{F} \circ  (\mathcal{R}_{2L} \circ \mathcal{R}_{2L-1}) \circ \cdots \circ (\mathcal{R}_{2} \circ \mathcal{R}_{1}) \circ {\mathcal{P}} (\bm{X}):= \mathcal{N}(\bm{X})
	\end{align*}
	for $\bm{X} \in [0, 1]^{d_\mathrm{in} \times T}$ based on previous discussion. It can be seen that  $\mathcal{N}(\bm{X}) \in \mathcal{PRNN}_{d_\mathrm{in}, d_\mathrm{out}} (d+d_s, d_s, 2L, U)$, and the proof is thus finished.
\end{proof}

	\begin{lemma}
		\label{lemma:mprnn-fnn}
		Let $t_0 \in \{1, \dots, T\}$.
		For any FNN $\tilde{\mathcal{N}} \in \mathcal{FNN}_{d_\mathrm{in}t_0, d_\mathrm{out}}(d, L)$, there exists a modified ParaRNN $\bar{\mathcal{N}} \in \mathcal{MPRNN}_{d_\mathrm{in}, d_\mathrm{out}}((d_\mathrm{in} + 1)d, {d_s}, T+L-1, U)$ with $d_s$ being any factor of $(d_\mathrm{in} + 1)d$ such that
		\begin{align*}
			\tilde{\mathcal{N}}(\bm{X}_{t_0}) = \bar{\mathcal{N}}(\bm{X})[t_0],
		\end{align*}
		where $\bm{X} = (\bm{x}_1, \bm{x}_2, \cdots, \bm{x}_T) \in \mathbb{R}^{d_\mathrm{in}\times T}$.
	\end{lemma}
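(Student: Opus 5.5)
Given the FNN $\tilde{\mathcal{N}} = \mathcal{F}_L\circ\cdots\circ\mathcal{F}_1$ acting on $\vectorize(\bm{X}_{t_0})\in\mathbb{R}^{d_\mathrm{in}t_0}$, I will build a modified ParaRNN in two stages, following the template of Lemma 8 of \cite{jiao2024approximation}. In the first stage ($T$ recurrent layers) the network does not compute anything nonlinear. It only routes the inputs so that, at time $t_0$, the hidden state contains the whole first pre-activation $\bm{A}_1\vectorize(\bm{X}_{t_0})+\bm{b}_1$, or equivalently enough information to form it. In the second stage ($L-1$ layers) the network simulates the remaining FNN layers position-wise: the recurrent matrices are zero, $\sigma_I$ is active on all coordinates for the hidden layers, and the last affine layer is absorbed into $\bar{\mathcal{F}}$ with $I=\emptyset$ together with $\mathcal{Q}$. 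The width $(d_\mathrm{in}+1)d$ and the freedom in $d_s$ (any factor of that width) indicate that every recurrent matrix I use must be block diagonal with blocks of size $d_s$. My first choice is recurrences whose matrices are diagonal (identity) on the relevant coordinates, since these are block diagonal for every $d_s$.

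The key device is to write the first layer as $\bm{A}_1\vectorize(\bm{X}_{t_0}) = \sum_{s=1}^{t_0}\bm{A}_1^{(s)}\bm{x}_s$, where $\bm{A}_1^{(s)}\in\mathbb{R}^{d\times d_\mathrm{in}}$ are the column blocks. The $t_0$ in the lemma is fixed in advance, so the weights may depend on it. The time-dependent coefficient $\bm{A}_1^{(s)}$ cannot be produced directly by a time-invariant recurrence. Instead I will use $d_\mathrm{in}d$ coordinates to store products of input entries with a time-indexing signal, plus $d$ coordinates for an accumulator, giving width $(d_\mathrm{in}+1)d$. Concretely, $\mathcal{P}$ feeds $\bm{x}_t$ in, and the early layers (recurrent matrix $\bm{I}$, identity activation on $I^c$) implement counters and shift registers. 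A linear recurrence with identity recurrent matrix accumulates $\sum_{s\le t}\bm{M}\bm{x}_s$. Using one layer per lag, a register chain can shift $\bm{x}_{t-j}$ into place, so that after about $T$ layers (each delaying by one step via an identity-on-previous-state trick) all lagged inputs $\bm{x}_{t_0-j}$, $j<t_0$, can be linearly combined into $\sum_s \bm{A}_1^{(s)}\bm{x}_s + \bm{b}_1$ at time $t_0$. The activation is applied only at the end of this stage, through $\sigma_I$ with $I$ equal to the accumulator coordinates. This accounts for the $T$ term in the depth $T+L-1$.

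I will then check two things. First, a linear recurrence $\bm{h}_t = \bm{W}_h\bm{h}_{t-1}+\bm{W}_x\bm{u}_t$ with block-diagonal $\bm{W}_h$ can realize a one-step delay. I can take $\bm{W}_h=\bm{0}$ and write the delayed copy through a separate layer; alternatively I use the difference $\bm{S}_t-\bm{S}_{t-1}$ of partial sums, where the partial sum $\bm{S}_t$ comes from $\bm{W}_h=\bm{I}$. Second, the outputs remain bounded by $U$, which holds because $\bar{\mathcal{N}}(\bm{X})[t_0]=\tilde{\mathcal{N}}(\bm{X}_{t_0})$ and the class constraint is inherited. The main obstacle is the delay/routing stage. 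A layer with zero recurrent matrix sees only the current input, and a layer with identity recurrent matrix only integrates. Obtaining $\bm{x}_{t_0-j}$ at time $t_0$ therefore requires composing these operations across layers: each layer keeps its current input in some coordinates (recurrence $\bm{0}$) and the running sum in others (recurrence $\bm{I}$), and the next layer takes differences to extract a one-step-lagged copy. I will first try this integrate-then-difference construction and verify that its width stays within $(d_\mathrm{in}+1)d$ by reusing coordinates after each lag has been absorbed into the accumulator.
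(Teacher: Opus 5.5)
Your overall layout ($T$ linear recurrent layers, then a ReLU layer, position-wise layers, and $\bar{\mathcal{F}}$ with $I=\emptyset$ followed by $\mathcal{Q}$) matches the paper. The first stage, however, has a genuine gap: the delay/routing mechanism you rely on cannot be built from the allowed recurrences.

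Since the lemma must hold for every factor $d_s$, including $d_s=1$, all recurrent matrices must be diagonal. A one-step shift $\bm{h}_t^{(b)}=\bm{h}_{t-1}^{(a)}$ needs an off-diagonal entry of $\bm{W}_h$, so it is unavailable. A zero-recurrence layer is a memoryless position-wise map, so routing through ``a separate layer'' never delays anything. From a passthrough $\bm{v}_t$ and an integrator $\bm{S}_t=\sum_{s\le t}\bm{v}_s$, the next layer can only form $\bm{S}_t-\bm{v}_t=\bm{S}_{t-1}$, the strictly-past partial sum. It cannot form $\bm{v}_{t-1}=\bm{S}_{t-1}-\bm{S}_{t-2}$, which would itself require a delay.

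In transfer-function terms, each passthrough/integrator layer raises the degree in $(1-z)^{-1}$ by at most one. An exact unit delay on length-$T$ sequences is $z\equiv\sum_{k=1}^{T-1}(-1)^{k+1}\{(1-z)^{-1}-1\}^{k}$ modulo $z^{T}$, and no polynomial in $(1-z)^{-1}$ of lower degree represents it. So ``one layer per lag'' really costs about $T$ layers per lag, or width growing with $T$ if you use many distinct diagonal weights, and the depth count $T+L-1$ is lost. Finally, identity or ReLU layers cannot form ``products of input entries with a time-indexing signal'' exactly.

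The missing idea is the paper's Step 1: you never need individual lagged copies, and time-invariant layers can produce lag-dependent weights after all.
\begin{itemize}
\item Stack $T$ linear layers. Each has a passthrough of $\bm{x}_t$ (recurrence $0$) and one accumulator coordinate (recurrence $1$). The accumulator's input at time $t$ is the previous layer's accumulator plus $\bm{b}_l\bm{x}_t$.
\item By induction on depth, the accumulator of the $T$-th layer at time $t_0$ equals $\sum_{i=1}^{T}\sum_{j=1}^{t_0}\binom{T+t_0-i-j}{T-i}\bm{b}_i\bm{x}_j$.
\item The matrix $\bm{\Lambda}_T=\{\binom{2T-i-j}{T-i}\}_{1\le i,j\le T}$ is a reindexed symmetric Pascal matrix, hence invertible. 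Write $\bm{\Lambda}_T^{-1}=(\lambda_{k,i})$ and take $\bm{b}_i=\sum_{s=1}^{t_0}\lambda_{T-t_0+s,\,i}\,\bm{a}^{(s)}$, where $\bm{a}^{(s)}$ is one row of your block $\bm{A}_1^{(s)}$. The sum then becomes exactly $\sum_{s=1}^{t_0}\bm{a}^{(s)}\bm{x}_s$.
\item Running $d$ such units in parallel, one per row, gives width $(d_\mathrm{in}+1)d$ with diagonal recurrences.
\item The bias and ReLU must come in a separate zero-recurrence layer. Applying $\sigma_I$ to the accumulator inside an integrating layer would rectify the running state and feed it back. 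This layer plus $L-2$ position-wise layers gives depth $T+L-1$.
\end{itemize}

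Separately, your justification of the bound $U$ does not work. The identity holds only at $t_0$, and $\mathcal{FNN}_{d_\mathrm{in}t_0,d_\mathrm{out}}(d,L)$ carries no bound. The paper's proof does not address this point either.
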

	
	\begin{proof}[Proof of Lemma \ref{lemma:mprnn-fnn}]
		This proof adapts from Lemma 8 of \cite{jiao2024approximation}, and it contains three steps.
		\paragraph{Step 1.} Suppose $\bm{A}[T-t_0+1], \bm{A}[T-t_0+2], \dots,  \bm{A}[T] \in \mathbb{R}^{1 \times d_{\mathrm{in}}}$ are given matrices. There exists $\mathcal{N} = \mathcal{R}_T \circ \cdots \circ \mathcal{R}_1 \circ \mathcal{P}: \mathbb{R}^{d_\mathrm{in} \times T} \mapsto \mathbb{R}^{(d_\mathrm{in} + 1) \times T}$, where $\mathcal{R}_l$'s are modified ParaRNN recurrent layers of width $d_\mathrm{in} + 1$, such that
		\begin{align*}
			\mathcal{N}(\bm{X})[t] = \begin{pmatrix}
				\bm{x}_t \\ \star
			\end{pmatrix} \hspace{2mm} \text{for } t \neq t_0, \hspace{3mm}
			\mathcal{N}(\bm{X})[t_0] = \begin{pmatrix}
				\bm{x}_{t_0} \\ \sum_{j=1}^{t_0} \bm{A}[T-t_0 + j] \bm{x}_j
			\end{pmatrix}, 
		\end{align*}
		where $\star$ denotes a quantity that we do not care for the proof.
		\begin{proof}[Proof of Step 1]
			Following Lemma 8 of \cite{jiao2024approximation}, we construct $\mathcal{N}_n = \mathcal{R}_n \circ \cdots \circ \mathcal{R}_1 \circ \mathcal{P}$ where the $l$-th modified ParaRNN recurrent layer is defined as
			\begin{align}
				\mathcal{R}_l (\bm{X})[t] = \begin{pmatrix}
					\bm{0}_{d_\mathrm{in}, d_\mathrm{in}}  & \bm{0}_{d_\mathrm{in}, 1} \\
					 \bm{0}_{1, d_\mathrm{in}} & 1
				\end{pmatrix}
				\mathcal{R}_l (\bm{X})[t-1] + \begin{pmatrix}
					\bm{I}_{d_\mathrm{in}} &  \bm{0}_{d_\mathrm{in}, 1} \\
					\bm{b}_l & 1
				\end{pmatrix} \bm{x}[t] \label{eq:fnn-modified-rnn-layer}
			\end{align}
		for some $\bm{b}_l \in \mathbb{R}^{1 \times d_\mathrm{in}}$ which we will determine soon; and we construct the input projection map $\mathcal{P}$ as 
		\begin{align*}
			\mathcal{P}(\bm{X})[t] = \begin{pmatrix}
				\bm{I}_{d_\mathrm{in}} \\ \bm{0}_{1, d_\mathrm{in}} 
			\end{pmatrix} \bm{x}_t = \begin{pmatrix}
			\bm{x}_t \\ 0
		\end{pmatrix}.
		\end{align*}
		By Lemma 24 of \cite{song2023minimal}, we have 
		$\mathcal{N}_n(\bm{X})[m] = \begin{pmatrix}
			\bm{x}_m \\ \sum_{i=1}^{\infty} \sum_{j=1}^\infty 
				{n + m - i - j \choose n-i}
			\bm{b}_i \bm{x}_j
		\end{pmatrix}$, where $n \choose k$ denotes the binomial coefficient for $n \geq k$ and it equals to zero whenever $k > n$ or $n < 0$. Therefore, $\mathcal{N}(\bm{X})[t] = \begin{pmatrix}
		\bm{x}_t \\ \star
		\end{pmatrix}$ for $t \neq t_0$ is proved, and it remains to discuss the case where $n=T, m=t_0$.
		Define a matrix $\bm{\Lambda}_T = \left\{ {2T - i - j \choose T-i } \right\}_{1 \leq i, j \leq T}$, and by Lemma 15 of \cite{jiao2024approximation}, it has an inverse $\bm{\Lambda}_T^{-1} = \left\{\lambda_{i, j}\right\}_{1 \leq i, j \leq T}$. Let $\bm{b}_i = \sum_{k=1}^T \lambda_{k, i} \bm{A}[k]$. Then we have
		\begin{align*}
			\sum_{i=1}^{T} \sum_{j=1}^{t_0} 
			{T + t_0 - i - j \choose T-i}
			\bm{b}_i \bm{x}_j
			=& \sum_{i=1}^{T} \sum_{j=1}^{t_0}  \sum_{k=1}^T 
			{T + t_0 - i - j \choose T-i}
			\lambda_{k, i} \bm{A}[k] \bm{x}_j \\
			= &  \sum_{j=1}^{t_0}  \sum_{k=1}^T \left\{\sum_{i=1}^T
			{T + t_0 - i - j \choose T-i}
			\lambda_{k, i}\right\} \bm{A}[k] \bm{x}_j \\
			= &  \sum_{j=1}^{t_0}  \sum_{k=1}^T \delta_{k, T-t_0+j} \bm{A}[k] \bm{x}_j \\ 
			= & \sum_{j=1}^{t_0} \bm{A}[T-t_0+j] \bm{x}_j 
		\end{align*}
		with $\delta$ being the Kronecker delta function. Thus
		\begin{align*}
			\mathcal{N}(\bm{X})[t_0] = \begin{pmatrix}
				\bm{x}_{t_0}\\ \sum_{i=1}^{T} \sum_{j=1}^{t_0} 
				{T + t_0 - i - j \choose T-i}
				\bm{b}_i \bm{x}_j
			\end{pmatrix} = 
			\begin{pmatrix}
				\bm{x}_{t_0}\\ \sum_{j=1}^{t_0} \bm{A}[T-t_0+j] \bm{x}_j 
			\end{pmatrix}.
		\end{align*}
		\end{proof}
		
		\paragraph{Step 2.} Suppose $\bm{A}_k[T-t_0+1], \bm{A}_k[T-t_0+2], \dots,  \bm{A}_k[T] \in \mathbb{R}^{1 \times d_{\mathrm{in}}}$ with $1\leq k \leq d$ are given matrices and $\bm{c} \in \mathbb{R}^d$ is the given bias vector. Then there exists $\mathcal{N}_{T+1} = \mathcal{R}_{T+1} \circ  \mathcal{R}_T \circ \cdots \circ \mathcal{R}_1 \circ \mathcal{P}: \mathbb{R}^{d_\mathrm{in} \times T} \mapsto \mathbb{R}^{(d_\mathrm{in} + 1)d \times T}$, where $\mathcal{R}_l$'s are modified ParaRNN recurrent layers of width $(d_\mathrm{in} + 1)d$, such that 
		\begin{align*}
			\mathcal{N}_{T+1}(\bm{X})[t_0] =
			\begin{pmatrix}
				\sigma\left( \sum_{j=1}^{t_0} \bm{A}_1[T-t_0+j] \bm{x}_j + c_1  \right) \\ 
				\vdots \\
				\sigma\left( \sum_{j=1}^{t_0} \bm{A}_d[T-t_0+j] \bm{x}_j + c_d  \right) \\ \bm{0}_{dd_\mathrm{in}, 1}
			\end{pmatrix}.
		\end{align*}
		\begin{proof}
			By concatenating $d$ architectures in \eqref{eq:fnn-modified-rnn-layer}, i.e., let the weight matrices
			\begin{align*}
				\bm{W}_{h, l} = \oplus_{k=1}^d \begin{pmatrix}
					\bm{0}_{d_\mathrm{in}, d_\mathrm{in}}  & \bm{0}_{d_\mathrm{in}, 1} \\
					\bm{0}_{1, d_\mathrm{in}} & 1
				\end{pmatrix},
				\bm{W}_{x, l} = \oplus_{k=1}^d  \begin{pmatrix}
					\bm{I}_{d_\mathrm{in}} &  \bm{0}_{d_\mathrm{in}, 1} \\
					\bm{b}_l^{(k)} & 1
				\end{pmatrix}, 
				\bm{P} = \begin{pmatrix}
						\bm{I}_{d_\mathrm{in}} \\ \bm{0}_{1, d_\mathrm{in}}  \\ \vdots \\ \bm{I}_{d_\mathrm{in}} \\ \bm{0}_{1, d_\mathrm{in}}
				\end{pmatrix},
			\end{align*}
		where $\bm{b}_l^{(k)}  = \sum_{s=1}^T \lambda_{s, l} \bm{A}_k[s]$ and $\bm{P} \in \mathbb{R}^{(d_\mathrm{in} + 1)d \times d_\mathrm{in} }$,
		we can construct $\mathcal{N}_T = \mathcal{R}_T \circ \cdots \circ \mathcal{R}_1 \circ \mathcal{P} : \mathbb{R}^{d_\mathrm{in} \times T} \mapsto \mathbb{R}^{(d_\mathrm{in} + 1)d \times T}$ with $\mathcal{R}_l (\bm{X})[t] = \bm{W}_{h, l}
		\mathcal{R}_l (\bm{X})[t-1] + \bm{W}_{x, l} \bm{x}[t]$ and $\mathcal{P}$ having its corresponding projection matrix as the above. Note that all $\bm{W}_{h, l}$'s are block diagonal. Then by Step 1, we have
		\begin{align*}
			\mathcal{N}_T(\bm{X})[t_0] = \begin{pmatrix}
				\bm{x}_{t_0}\\ \sum_{j=1}^{t_0} \bm{A}_1[T-t_0+j] \bm{x}_j  \\ \vdots \\ \bm{x}_{t_0}\\ \sum_{j=1}^{t_0} \bm{A}_d[T-t_0+j] \bm{x}_j 
			\end{pmatrix}
			\in \mathbb{R}^{(d_\mathrm{in} + 1)d}.
		\end{align*}
		We further define a modified ParaRNN recurrent layer $\mathcal{R}_{T+1}$ by $\mathcal{R}_{T+1}(\bm{X})[t] = \sigma(\bm{W}_{x, T+1}\bm{x}[t] +\bm{b}_{T+1})$, where $\bm{b}_{T+1} = \begin{pmatrix}
			\bm{c} \\ \bm{0}_{d_\mathrm{in}d, 1}
		\end{pmatrix}$ and $[\bm{W}_{x, T+1}]_{i, j} = \begin{cases}
		1,  & j = (d_\mathrm{in} + 1)i, 1\leq i \leq d \\
		0, & \text{otherwise}.
	\end{cases}$.
	  It can be easily verified that 
	  \begin{align*}
	  	\mathcal{N}_{T+1}(\bm{X})[t_0] = \mathcal{R}_{T+1} \circ \mathcal{N}_{T}(\bm{X})[t_0] =
	  	\begin{pmatrix}
	  		\sigma \left(\sum_{j=1}^{t_0} \bm{A}_1[T-t_0+j] \bm{x}_j + c_1 \right) \\ \vdots \\	\sigma \left(\sum_{j=1}^{t_0} \bm{A}_d[T-t_0+j] \bm{x}_j  +c_d \right) \\   \bm{0}_{dd_\mathrm{in}, 1}.
	  	\end{pmatrix}
	  \end{align*}
	\end{proof}

	\paragraph{Step 3.} For any FNN $\tilde{\mathcal{N}} \in \mathcal{FNN}_{d_\mathrm{in}t_0, d_\mathrm{out}}(d, L)$, there exists a modified ParaRNN $\bar{\mathcal{N}} \in \mathcal{MPRNN}_{d_\mathrm{in}, d_\mathrm{out}}((d_\mathrm{in} + 1) d, {d_s}, T+L-1, U)$ such that
	\begin{align*}
		\tilde{\mathcal{N}}(\bm{X}_{t_0}) = \bar{\mathcal{N}}(\bm{X})[t_0],
	\end{align*}
	where $\bm{X} = (\bm{x}_1, \bm{x}_2, \cdots, \bm{x}_T) \in \mathbb{R}^{d_\mathrm{in}\times T}$.
	\begin{proof}
		Without loss of generality, assume $L \geq 3$. Denote the weight matrices and bias vectors for $l$-th layer of $\tilde{\mathcal{N}}$ as $\tilde{\bm{A}}_l$ and $\tilde{\bm{c}}_l$, where $\tilde{\bm{A}}_1 \in \mathbb{R}^{d \times d_\mathrm{in} t_0}$, $\tilde{\bm{A}}_l \in \mathbb{R}^{d\times d}$ for $2\leq l \leq L-1$, $\tilde{\bm{A}}_L \in \mathbb{R}^{d_\mathrm{out} \times d}$, $\tilde{\bm{c}}_l \in \mathbb{R}^d$ for $1 \leq l \leq L-1$ and $\tilde{\bm{c}}_L \in \mathbb{R}^{d_\mathrm{out}}$.
		
		By Step 2, there exists a network $\mathcal{N}_{T+1}= \mathcal{R}_{T+1} \circ  \mathcal{R}_T \circ \cdots \circ \mathcal{R}_1 \circ \mathcal{P}: \mathbb{R}^{d_\mathrm{in} \times T} \mapsto \mathbb{R}^{(d_\mathrm{in} + 1)d \times T}$, where $\mathcal{R}_l$'s are modified ParaRNN recurrent layers of width $(d_\mathrm{in} + 1)d$ such that
		\begin{align*}
			\mathcal{N}_{T+1}(\bm{X})[t_0] = \begin{pmatrix}
				\sigma\left( \tilde{\bm{A}}_1 \vectorize(\bm{x}[{1:t_0}]) + \tilde{\bm{c}}_1 \right) \\
				\bm{0}_{dd_\mathrm{in}, 1}
			\end{pmatrix},
		\end{align*}
		as we may let $\bm{A}_k[T-t_0+j]$ equal to $[\tilde{\bm{A}}_1 ]_{k, (j-1)d_\mathrm{in} + 1 : jd_\mathrm{in}}$ in Step 2.
	\end{proof}
	
	 The 2nd to $L-1$-th of $\tilde{\mathcal{N}}$ can be achieved by the token-wise FNN, which is a special type of modified ParaRNN. Define the modified ParaRNN layers $ \mathcal{R}_{T+l} (\bm{X})[t] = \sigma(\bar{\bm{A}}_l \bm{x}[t] + \bar{\bm{b}}_l )   \text{ for } 2\leq l \leq L-1 $, and the modified position-wise FC layer as $\mathcal{F}(\bm{X})[t] = \bar{\bm{A}}_L \bm{x}[t] + \bar{\bm{b}}_L$, where
	 $\bar{\bm{A}}_l = \begin{pmatrix}
	 	\tilde{\bm{A}}_l & \bm{0}_{d, d_\mathrm{in}d} \\
	 	 \bm{0}_{d_\mathrm{in}d, d} & \bm{0}_{d_\mathrm{in}d, d_\mathrm{in}d}
	 \end{pmatrix}$, $\bar{\bm{b}}_l = \begin{pmatrix}
	 \tilde{\bm{c}}_l \\ \bm{0}_{d_\mathrm{in}d, 1}
	 \end{pmatrix}$ for $l=2, \dots, L-1$,  and
	 	$\bar{\bm{A}}_L = \begin{pmatrix}
	 		\tilde{\bm{A}}_L & \bm{0}_{d_\mathrm{out}, d_\mathrm{in}d} \\
	 		\bm{0}_{(d_\mathrm{in}+1)d - d_\mathrm{out}, d} & \bm{0}_{(d_\mathrm{in}+1)d - d_\mathrm{out}, d_\mathrm{in}d}
	 	\end{pmatrix}$,
	 $\bar{\bm{b}}_L = \begin{pmatrix}
	 		\tilde{\bm{c}}_L \\ \bm{0}_{(d_\mathrm{in} + 1)d - d_\mathrm{out}, 1}
	 \end{pmatrix}$.
	Further let the output projection matrix $\bm{Q} = \begin{pmatrix}
		\bm{I}_{d_\mathrm{out}} & \bm{0}_{d_\mathrm{out}, (d_\mathrm{in} + 1)d - d_\mathrm{out}}
	\end{pmatrix}$. Then it can be verified that $	\mathcal{N}_{T+L-1} = \mathcal{Q} \circ \mathcal{F} \circ \mathcal{R}_{T+L-1} \circ \cdots \circ \mathcal{R}_1 \circ \mathcal{P} \in \mathcal{MPRNN}_{d_\mathrm{in}, d_\mathrm{out}}((d_\mathrm{in} + 1) d, {d_s}, T+L-1, U)$ satisfies
	\begin{align*}
		\mathcal{N}_{T+L-1}(\bm{X})[t_0] = \tilde{\mathcal{N}}(\bm{x}[{1:t_0}]),
	\end{align*}
 	which finishes the proof.
\end{proof}

\subsubsection{Proof of Theorem \ref{proposition:approx-rate-single-step}} \label{subsec:approx-proof}
\begin{proof}
	This proof adapts from Lemma 10 of \cite{jiao2024approximation}.
	Note that to prove Theorem \ref{proposition:approx-rate-single-step}, it suffices to show that for any $I, J \in \mathbb{N}^+$, there exists a ParaRNN $\mathcal{N}  \in \mathcal{PRNN}^{(T)}_{d_\mathrm{in}, d_\mathrm{out}}(d, d_s,$ $ L, U)$ with width $d = 
	76 (\lfloor \beta \rfloor + 1)^2 3^{d_\mathrm{in}T} d_\mathrm{in}^{\lfloor \beta \rfloor + 2} d_\mathrm{out}$ $ T^{\lfloor \beta \rfloor + 1} J \lceil \log_2(8J)\rceil + d_s$ and depth $L= 42( \lfloor \beta \rfloor + 1)^2 I \lceil \log_2(8I) \rceil + 6d_\mathrm{in} T$
	such that 
	\begin{align*}
		\sup_{\bm{X}_{T} \in [0, 1]^{d_\mathrm{in} \times T}}
	\norm{\mathcal{N}(\bm{X}_{T})[T]  - f^{(T)}(\bm{X}_{T})}_\infty \leq 19 U (\lfloor \beta \rfloor + 1)^2 (d_\mathrm{in}T)^{\lfloor \beta \rfloor + (\beta  \vee 1) / 2} (JI)^{-2\beta / (d_\mathrm{in}T)}.
	\end{align*}
	
	Note that $f^{(T)} = (f_1^{(T)}, \dots, f_i^{(T)}, \dots, f_{d_\mathrm{out}}^{(T)})^\top$. By Corollary 3.1 of \cite{jiao2023deep}, for any $i \in \{1, \dots, d_\mathrm{out}\}$ and $I, J \in \mathbb{N}^+$, there exists a feedforward neural network $\tilde{\mathcal{N}}_i \in \mathcal{FNN}_{d_\mathrm{in}T, 1}(d, L)$ with width $d = 38(\lfloor \beta \rfloor + 1)^2 3^{d_\mathrm{in}T} (d_\mathrm{in}T)^{\lfloor \beta \rfloor + 1} J \lceil \log_2(8J)\rceil$ and depth $L = 21( \lfloor \beta \rfloor + 1)^2 I \lceil \log_2(8I) \rceil + 2d_\mathrm{in} T$ such that 
	\begin{align*}
		\sup_{\bm{X}_{T} \in [0, 1]^{d_\mathrm{in} \times T}}
		\lvert{\tilde{\mathcal{N}}_i(\bm{X}_{T})- f_i^{(T)}(\bm{X}_{T})}\rvert \leq
		19 U (\lfloor \beta \rfloor + 1)^2 (d_\mathrm{in}T)^{\lfloor \beta \rfloor + (\beta  \vee 1) / 2} (JI)^{-2\beta / (d_\mathrm{in}T)}.
	\end{align*}
	By concatenating $\tilde{\mathcal{N}}_1, \dots, \tilde{\mathcal{N}}_{d_\mathrm{out}}$, we can say there exists a FNN $\tilde{\mathcal{N}} = (\tilde{\mathcal{N}}_1, \dots, \tilde{\mathcal{N}}_{d_\mathrm{out}})^\top \in \mathcal{FNN}_{d_\mathrm{in}T, d_\mathrm{out}}(d^\prime, L)$ with width $d^\prime = d_\mathrm{out} d$ such that
	\begin{align*}
		\sup_{\bm{X}_{T} \in [0, 1]^{d_\mathrm{in} \times T}}
		\norm{\tilde{\mathcal{N}}(\bm{X}_{T})- f^{(T)}(\bm{X}_{T})}_\infty \leq
		19 U (\lfloor \beta \rfloor + 1)^2 (d_\mathrm{in}T)^{\lfloor \beta \rfloor + (\beta  \vee 1) / 2} (JI)^{-2\beta / (d_\mathrm{in}T)}.
	\end{align*}

	By Proposition \ref{prop:equivalence-pararnn-fnn}, there exists a ParaRNN $\mathcal{N} \in \mathcal{PRNN}_{d_\mathrm{in}, d_\mathrm{out}}(d^{\prime\prime}, d_s, L^{\prime\prime}, U)$ with width
	\begin{align*}
		d^{\prime\prime} =& (d_\mathrm{in} + 1)d^\prime + d_s =  (d_\mathrm{in} + 1)d_\mathrm{out}d + d_s \\ 
	    =& 38 (d_\mathrm{in} + 1)d_\mathrm{out} (\lfloor \beta \rfloor + 1)^2 3^{d_\mathrm{in}T} (d_\mathrm{in}T)^{\lfloor \beta \rfloor + 1} J \lceil \log_2(8J)\rceil + d_s \\
	   \leq &  {
	    	76 (\lfloor \beta \rfloor + 1)^2 3^{d_\mathrm{in}T} d_\mathrm{in}^{\lfloor \beta \rfloor + 2} d_\mathrm{out} T^{\lfloor \beta \rfloor + 1} J \lceil \log_2(8J)\rceil + d_s
	   }	 
	\end{align*}
	and depth 
	\begin{align*}
		L^{\prime\prime} = 2T + 2L - 2 
		=& 42( \lfloor \beta \rfloor + 1)^2 I \lceil \log_2(8I) \rceil + 4d_\mathrm{in}T + 2T - 2 \\
		\leq & {42( \lfloor \beta \rfloor + 1)^2 I \lceil \log_2(8I) \rceil + 6d_\mathrm{in} T}
	\end{align*}
such that $\mathcal{N}(\bm{X})[T] = \tilde{\mathcal{N}}(\bm{X})$ for $\bm{X} \in [0, 1]^{d_\mathrm{in} \times T}$.
Note that for any ParaRNN $\mathcal{N} \in \mathcal{PRNN}_{d_\mathrm{in}, d_\mathrm{out}}(d^{\prime\prime}, d_s, L^{\prime\prime}, U)$,  we can always find a $\phi \in  \mathcal{PRNN}^{(T)}_{d_\mathrm{in}, d_\mathrm{out}}(d^{\prime\prime}, d_s, L^{\prime\prime}, U)$ such that $\phi(\bm{X})[t] = \mathcal{N}(\bm{X})[t]$ for any $1\leq t\leq T$ and $\bm{X} \in [0, 1]^{d_\mathrm{in} \times T}$. It is because in $\mathcal{PRNN}$, both $\mathcal{P}$, $\mathcal{Q}$ and $\mathcal{F}$ operate position-wise and recurrent layers $\mathcal{R}_l$'s are past-dependent, and we can simply construct $\phi$ by using exactly the same weights from $\mathcal{N}$. Hence we can say that there exists a ParaRNN $\mathcal{N}  \in \mathcal{PRNN}^{(T)}_{d_\mathrm{in}, d_\mathrm{out}}(d^{\prime\prime}, d_s, L^{\prime\prime}, U)$ such that
\begin{align*}
	\sup_{\bm{X}_{T} \in [0, 1]^{d_\mathrm{in} \times T}}
	\norm{\mathcal{N}(\bm{X}_{T})[T]  - f^{(T)}(\bm{X}_{T})}_\infty \leq
		19 U (\lfloor \beta \rfloor + 1)^2 (d_\mathrm{in}T)^{\lfloor \beta \rfloor + (\beta  \vee 1) / 2} (JI)^{-2\beta / (d_\mathrm{in}T)}.
\end{align*}
\end{proof}

\subsection{Proof of Estimation Error Bound in Theorem \ref{proposition:estimation-error}}\label{subsec:estimation-proof}
This subsection includes the proof of estimation error bound in Theorem \ref{proposition:estimation-error}. As one of the terms in the upper bound involves the metric entropy of the function class $\mathcal{F}^{(T)}_{d_\mathrm{in}, 1, d, d_s, L, U}$. We begin with introducing the concept of uniform covering number, which is a measure for metric entropy of function classes. The uniform covering number for $\mathcal{F}^{(T)}_{d_\mathrm{in}, 1, d, d_s, L, U}$ is derived in Lemma \ref{lemma:covering-number}.

\begin{definition}[Uniform covering number]\label{def:covering-number} 
	For any fixed $\mathcal{X} = \{\bm{X}_{i, T}\}_{i=1}^N$, let $\mathcal{F}_{|\mathcal{X}}$ be the subset of $\mathbb{R}^{N}$ given by
	\begin{align*}
		\mathcal{F}_{|\mathcal{X}} = \left\{\left(f\left(\bm{X}_{1, T}\right),
		\dots, 
		f\left(\bm{X}_{N, T}\right)\right), f \in \mathcal{F}
		\right\}.
	\end{align*}
	Consider a metric 
	\begin{align*}
		d_{\mathcal{X}, \infty}(f, g) = \max_{1\leq i \leq N} \left\lvert 
		f\left(\bm{X}_{i, T}\right) - 
		g\left(\bm{X}_{i, T}\right) 
		\right\rvert.
	\end{align*}
	For a positive number $\delta$, we say that $\bar{\mathcal{F}}_{|\mathcal{X}} \subseteq\mathbb{R}^N$ is a $\delta$-cover for $\mathcal{F}_{|\mathcal{X}}$ under the metric $d_{\mathcal{X}, \infty}$ if $\bar{\mathcal{F}}_{|\mathcal{X}} \subseteq \mathcal{F}_{|\mathcal{X}}$ and for every $f \in  \mathcal{F}_{|\mathcal{X}}$ there is a $g \in \bar{\mathcal{F}}_{|\mathcal{X}}$ such that $d_{\mathcal{X}, \infty}(f, g) < \delta$. Then the $d_{\mathcal{X}, \infty}$ $\delta$-covering number of $\mathcal{F}_{|\mathcal{X}}$,  $\mathcal{N}(\delta, \mathcal{F}_{|\mathcal{X}}, d_{\mathcal{X}, \infty})$, is defined to be the minimum cardinality of a $d_{\mathcal{X}, \infty}$ $\delta$-cover for $\mathcal{F}_{|\mathcal{X}}$.
	Further, we define the \textit{uniform covering number} under the infinity norm as
	\begin{align*}
		\mathcal{N}_\infty(\delta, \mathcal{F}, N) = \max_{|\mathcal{X}| = N} \mathcal{N}(\delta, \mathcal{F}_{|\mathcal{X}}, d_{\mathcal{X}, \infty}).
	\end{align*}
\end{definition}

\begin{lemma}[Covering number of ParaRNN class] 
	\label{lemma:covering-number}
	Let $\mathcal{F}^{(T)}_\phi$ be the abbreviation of $\mathcal{F}^{(T)}_{d_\mathrm{in}, 1, d, d_s, L, U}$. For any fixed $\mathcal{X} = \{\bm{X}_{i, {T}}\}_{i=1}^N$, define a distance metric $$d_{\mathcal{X}, \infty} (f, g) =  \max_{1\leq i \leq N} \left\lvert 
	f\left(\bm{X}_{i, {T}}\right) - 
	g\left(\bm{X}_{i, {T}}\right) 
	\right\rvert.$$ Then we have
	\begin{align*}
		\log \mathcal{N}_\infty(\delta, \mathcal{F}_{\phi}^{(T)}, N) \lesssim d^2 L^2 \log \max \{d, L\} \log \frac{NU}{\delta}.
	\end{align*}
	 for $N \gtrsim {d}^2 {L}^2 \log \max\{d, L\}$.
\end{lemma}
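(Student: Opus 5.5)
We plan to reduce the covering number of the ParaRNN class to that of a feedforward ReLU network class through Proposition \ref{prop:equivalence-pararnn-fnn}(ii), and then invoke the standard pseudo-dimension bound for piecewise-linear networks. By Proposition \ref{prop:equivalence-pararnn-fnn}(ii) with $t_0=T$, every $\phi\in\mathcal{F}^{(T)}_\phi$ equals, on all inputs, some FNN $\tilde{\mathcal{N}}\in\mathcal{FNN}_{d_\mathrm{in}T,1}((2T-1)d,(T+1)L+2)$. Moreover, the construction in that proof is explicit: each recurrent layer unrolls into $T+1$ FNN layers whose weight matrices are built from copies of $\bm{W}_{h,l},\bm{W}_{x,l},\bm{b}_l$ and fixed $\pm\bm{I}_d$ blocks. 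Hence $\mathcal{F}^{(T)}_\phi$ is a subclass of an FNN class with weight sharing. Its number of \emph{free} parameters is at most $W\lesssim Ld^2$, counting the recurrent, input and bias weights per layer plus $\bm{P},\bm{W}_f,\bm{b}_f,\bm{Q}$. Its depth is $\bar L\lesssim TL$, and its number of ReLU units is at most $\bar U\lesssim T^2dL$.

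Next we would bound the pseudo-dimension. The argument of Bartlett et al.\ (2019, Theorem 7) counts sign patterns by partitioning the parameter space, and it depends only on the number of real parameters and on the fact that each unit's pre-activation is piecewise polynomial in those parameters, with degree growing by at most one per layer. Weight sharing therefore does not hurt: applied to the shared parametrization it gives $\mathrm{Pdim}(\mathcal{F}^{(T)}_\phi)\lesssim W\bar L\log\bar U$. This is the step where care is needed, since the time unrolling multiplies the effective depth by $T$. To reach the stated form $d^2L^2\log\max\{d,L\}$, we treat $T$ as fixed, consistent with the paper's framework, and absorb it into the constant. This gives $\mathrm{Pdim}\lesssim d^2L\cdot L\log\max\{d,L\}$. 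Alternatively, we can use the cruder bound $\mathrm{Pdim}\lesssim W^2$-type estimates where needed. We would check that the degree recursion in the unrolled network stays linear in $\bar L$, so that no extra power appears.

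Finally, we would convert the pseudo-dimension into a covering bound. Functions in $\mathcal{F}^{(T)}_\phi$ take values in $[-U,U]$. By Theorem 12.2 of Anthony and Bartlett (1999), for $N\ge \mathrm{Pdim}$,
\[
\log\mathcal{N}_\infty(\delta,\mathcal{F}^{(T)}_\phi,N)\le \mathrm{Pdim}\,\log\frac{2eNU}{\delta\,\mathrm{Pdim}}\lesssim d^2L^2\log\max\{d,L\}\log\frac{NU}{\delta}.
\]
The requirement $N\ge\mathrm{Pdim}$ is exactly the condition $N\gtrsim d^2L^2\log\max\{d,L\}$.

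The main obstacle is the second step: justifying that the pseudo-dimension bound transfers to the weight-shared unrolled network with parameter count $\asymp Ld^2$ rather than $\asymp TLd^2\cdot T$. That requires re-running the Bartlett et al.\ counting argument with shared parameters instead of citing it verbatim.
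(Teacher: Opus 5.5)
Your proof is correct and shares the paper's skeleton: Proposition \ref{prop:equivalence-pararnn-fnn}(ii) with $t_0=T$, then Theorem 7 of Bartlett et al., then Theorem 12.2 of Anthony and Bartlett, with $T$ absorbed into the constant. You differ in the middle step.

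The paper does not use weight sharing at all. It takes the inclusion $\mathcal{F}^{(T)}_\phi\subseteq\mathcal{FNN}_{d_\mathrm{in}T,1}((2T-1)d,(T+1)L+2)$ and cites the bound for that full, unshared class. That class has roughly $T^3d^2L$ parameters and depth $\asymp TL$, so the resulting pseudo-dimension is of order $T^4d^2L^2\log(TdL)$, and the $T^4$ goes into the constant.

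You count only the $\lesssim Ld^2$ distinct parameters of the unrolled network, which gives order $Td^2L^2\log(TdL)$. That is a much better dependence on $T$, at the price of re-running the sign-pattern counting for shared weights. The extension is sound: on each cell of the parameter partition, the pre-activation at unrolled layer $j$ is a polynomial in at most $W$ variables whose total degree increases by at most one per layer, whether or not parameters repeat. So Warren's bound applies as before.

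However, since you also treat $T$ as fixed, the ``main obstacle'' you flag is not needed for the lemma as stated. The plain inclusion already gives it.

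One small advantage of your ordering: you bound the pseudo-dimension of $\mathcal{F}^{(T)}_\phi$ itself, which is exactly monotone under inclusion, and then apply Theorem 12.2 to $\mathcal{F}^{(T)}_\phi$ directly. The paper instead transfers covering numbers from the larger class to the subclass. For the internal covers of Definition \ref{def:covering-number}, that transfer is only valid after replacing $\delta$ by $\delta/2$, which is harmless inside the logarithm.

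Finally, drop the ``cruder $W^2$-type'' fallback. With $W\asymp Ld^2$ it gives order $L^2d^4$, which does not reach the stated bound.
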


\begin{proof}[Proof of Lemma \ref{lemma:covering-number}]
	This proof derives the covering number of ParaRNN class by using its equivalency with the FNN class. By Theorem 7 of \cite{bartlett2019nearly} and Theorem 12.2 of \cite{martin2009neural}, we have an upper bound of the covering number for the FNN class $\bar{\mathcal{F}}^{(T)} = \{ \bar{\mathcal{N}}(\bm{X}): \bar{\mathcal{N}} \in \mathcal{FNN}_{d_\mathrm{in}T, 1}(d^{(T)}, L^{(T)}), |\bar{\mathcal{N}}| \leq U \}$, 
	\begin{align*}
		\log \mathcal{N}_\infty(\delta, \bar{\mathcal{F}}^{(T)}, N) 
		\leq C {d^{(T)}}^2 {L^{(T)}}^2 \log \max\{d^{(T)}, L^{(T)}\}
		\log{ \frac{NU}{\delta}},
	\end{align*}
	for $N \gtrsim {d^{(T)}}^2 {L^{(T)}}^2 \log \max\{d^{(T)}, L^{(T)}\}$, where $C$ is some positive constant.
	Since by Proposition \ref{prop:equivalence-pararnn-fnn}(ii), we have $\mathcal{F}_{\phi}^{(T)} \subseteq  \mathcal{FNN}_{d_\mathrm{in}T, 1}((2t_0-1)d, (T + 1)L+2)$. Hence, with $d^{(T)} = (2T-1)d$ and $L^{(T)} = (T + 1)L+2$,
	\begin{align*}
	\log \mathcal{N}_\infty(\delta, \mathcal{F}_{\phi}^{(T)}, N) \leq
		\log \mathcal{N}_\infty(\delta, \bar{\mathcal{F}}^{(T)}, N) 
		{\lesssim d^2 L^2 \log \max \{d, L\} \log \frac{NU}{\delta}},
	\end{align*}
	for  $N \gtrsim {d}^2 {L}^2 \log \max\{d, L\}$.
\end{proof}

The following presents the proof of Theorem \ref{proposition:estimation-error}.

\begin{proof}[Proof of Theorem \ref{proposition:estimation-error}]
	For the simplicity of notations, the superscript ``$(T)$" is omitted from $\widehat{f}^{(T)}$, $\bar{f}^{(T)}$, ${f}_0^{(T)}$, and $\mathcal{F}_\phi^{(T)}$ whenever there is no confusion.
	
	Since $\widehat{f}$ is the empirical risk minimizer, we have
	\begin{align}
		\mathcal{R}_N(\widehat{f}) -  \mathcal{R}_N({f}_0) \leq  \mathcal{R}_N(\bar{f}) - \mathcal{R}_N({f}_0),
		\label{eq:ERM}
	\end{align}
	where $\bar{f} = \arg\min_{\phi \in \mathcal{F}_\phi} \mathcal{R}(\phi)$. Let $\xi$ be a positive number satisfying $1 < \xi \leq 2$. Take expectations on both sides of \eqref{eq:ERM} and multiply them by $\xi$, we have
	\begin{align*}
		\xi \cdot \mathbb{E}_\mathcal{S}[\mathcal{R}_N(\widehat{f}) -  \mathcal{R}({f}_0)] \leq  \xi \cdot  \left( \mathcal{R}(\bar{f}) - \mathcal{R}({f}_0) \right),
	\end{align*}
	and thus,
	\begin{align}
		\mathbb{E}_\mathcal{S}[\mathcal{R}(\widehat{f}) - \mathcal{R}(\bar{f})]
		\leq &  \mathbb{E}_\mathcal{S}[\mathcal{R}(\widehat{f}) - \mathcal{R}(\bar{f})]
		+ \xi \cdot  \left( \mathcal{R}(\bar{f}) - \mathcal{R}({f}_0) \right) - 
		\xi \cdot \mathbb{E}_\mathcal{S}[\mathcal{R}_N(\widehat{f}) -  \mathcal{R}({f}_0)] \notag\\ 
		\leq &
		(\xi - 1) \left[\mathcal{R}(\bar{{f}})  - \mathcal{R}({{f}}_0) \right]+ 
		\mathbb{E}_\mathcal{S}
		\left[
		\mathcal{R}(\widehat{f}) - \xi \mathcal{R}_N(\widehat{f}) + (\xi - 1) \mathcal{R}({f}_0)
		\right],\notag\\ \label{eq:estimation-error}
	\end{align}
	where the two terms correspond to approximation error and stochastic error, respectively. 
	
	We now proceed to bound the stochastic error term, the proof of which is inpired from Lemma 3.2 of \cite{jiao2023deep}. Recall that $\mathcal{S} = \{w_i = (\bm{X}_{i, T}, {z}_{i, T}), i=1, \dots, N \}$ is a random sample from the distribution of $(\bm{X}_T, {z}_T)$. Let $\tilde{\mathcal{S}} = \{\tilde{w}_i = (\tilde{\bm{X}}_{i, T}, \tilde{{z}}_{i, T}), i=1, \dots, N \}$ be another sample from the same distribution and independent of $\mathcal{S}$.
	Define $g(f, w_i) = (f(\bm{X}_{i, T}) - {z}_{i,T})^2 -  (f_0(\bm{X}_{i,T}) - z_{i,T})^2$ for any $f$ and sample $w_i$. Then the stochastic error can be rewritten as
	\begin{align}
		\mathbb{E}_\mathcal{S}
		\left[
		\mathcal{R}(\widehat{f}) - \xi \mathcal{R}_N(\widehat{f}) + (\xi - 1) \mathcal{R}({f}_0)
		\right]
		=
		\mathbb{E}_\mathcal{S}
		\left\{
		\frac{1}{N} \sum_{i=1}^N
		\left[
		-\xi g(\widehat{f}, w_i) + \mathbb{E}_{\tilde{\mathcal{S}}} g(\widehat{f}, \tilde{w}_i) 
		\right]
		\right\}, \\
		\label{eq:equality-g}
	\end{align}
	and we denote $G(f, w_i) := -\xi g({f}, w_i) + \mathbb{E}_{\tilde{\mathcal{S}}} g({f}, \tilde{w}_i)$ for the later use.
	
	To bound the stochastic error, the truncation technique is applied. We introduce a positive number $\beta_N \geq U \geq 1$ which may depend on the sample size $N$, and a truncation operator $\mathcal{T}_{\beta_N}$, where for any $z_t \in \mathbb{R}$, we have $$\mathcal{T}_{\beta_N} z_t = \begin{cases}
		z_t & \text{if } |z_t| \leq \beta_N \\ 
		\mathrm{sign}(z_t) \cdot \beta_N & \text{otherwise}
	\end{cases}.$$
	Let $f_{\beta_N}(\bm{X}_{T}) = \mathbb{E}\{\mathcal{T}_{\beta_N}z_{T} | \bm{x}_1, \dots, \bm{x}_{T} \}$ be the regression function corresponding to truncated response $\mathcal{T}_{\beta_N}z_{T}$. Define $g_{\beta_N}(f, w_i) := (f(\bm{X}_{i, T}) - \mathcal{T}_{\beta_N} z_{i, T})^2 -  (f_{\beta_N}(\bm{X}_{i, T}) - \mathcal{T}_{\beta_N} z_{i, T})^2$ and 
	$G_{\beta_N}(f, w_i) := -\xi g_{\beta_N}({f}, w_i) + \mathbb{E}_{\tilde{\mathcal{S}}} g_{\beta_N}({f}, \tilde{w}_i)$.
	Then by assuming a sub-exponential distribution on $z_{t}$'s in Assumption \ref{assump:Y} and using (A.4) in Lemma 3.2 of \cite{jiao2023deep}, we have
	\begin{align}
		\mathbb{E}_\mathcal{S} \left[
		\frac{1}{N} \sum_{i=1}^{N} G(\widehat{f}, w_i) 
		\right]
		\leq 
		\mathbb{E}_\mathcal{S} \left[
		\frac{1}{N} \sum_{i=1}^{N} G_{\beta_N}(\widehat{f}, w_i) 
		\right]
		+ c_1 \beta_N \exp(-\sigma_z \beta_N / 2), 
		\label{eq:truncation}
	\end{align}
	where $c_1$ is a constant not dependent on the sample size $N$ or the threshold $\beta_N$ but related to $\xi$.
	To bound the expectation term on the right side of \eqref{eq:truncation}, we first bound the corresponding tail probability:
	\begin{align}
		\mathbb{P}\left\{
		\frac{1}{N} \sum_{i=1}^{N} G_{\beta_N}(\widehat{f}, w_i)  > t
		\right\}
		\leq & 
		\mathbb{P}\left\{
		\exists f \in \mathcal{F}_\phi: 
		\frac{1}{N} \sum_{i=1}^{N} G_{\beta_N}({f}, w_i)  > t
		\right\} \notag \\
		= & 
		\mathbb{P}\left\{
		\exists f \in \mathcal{F}_\phi: 
		-\frac{\xi}{N} \sum_{i=1}^{N} g(\widehat{f}, w_i) + \mathbb{E}_{\tilde{\mathcal{S}}} g(\widehat{f}, \tilde{w}_i)  > t
		\right\} \notag \\
		\leq &
		14 \mathcal{N}_\infty \left(
		\frac{t}{40\xi \beta_N}, \mathcal{F}_\phi, N 
		\right) 
		\exp \left( - \frac{(\xi - 1) tN}{428 (2\xi - 1 )\xi^2 \beta_N^4}\right) \label{eq:gyorfi}.
	\end{align}
	The last inequality in \eqref{eq:gyorfi} holds by applying Theorem 11.4 of \cite{gyorfi2002distribution} with $\epsilon = (\xi - 1 )/ \xi $ and $\alpha = \beta = t / (2 \xi - 2)$; see definitions of $\epsilon, \alpha, \beta$ therein. Consequently, for $a_N > 0$,
	\begin{align*}
		& \mathbb{E}_\mathcal{S} \left[
		\frac{1}{N} \sum_{i=1}^{N} G_{\beta_N}(\widehat{f}, w_i) 
		\right] \\
		\leq & a_N + \int_{a_N}^{\infty}
		\mathbb{P}\left\{
			\frac{1}{N} \sum_{i=1}^{N} G_{\beta_N}(\widehat{f}, w_i)  > t
		\right\} \mathrm{d} t \\
		\leq &
		 a_N + \int_{a_N}^{\infty}
		14 \mathcal{N}_\infty \left(
		\frac{t}{40\xi \beta_N}, \mathcal{F}_\phi, N 
		\right) 
		\exp \left( - \frac{(\xi - 1) tN}{428 (2\xi - 1 )\xi^2 \beta_N^4}\right) 
		 \mathrm{d} t \\
		 \leq &
		 a_N + \int_{a_N}^{\infty}
		 14 \mathcal{N}_\infty \left(
		 \frac{a_N}{40\xi \beta_N}, \mathcal{F}_\phi, N 
		 \right) 
		 \exp \left( - \frac{(\xi - 1) tN}{428 (2\xi - 1 )\xi^2 \beta_N^4}\right) 
		 \mathrm{d} t \\
		 \leq &
		 a_N +
		  14 \mathcal{N}_\infty \left(
		 \frac{a_N}{40\xi \beta_N}, \mathcal{F}_\phi, N 
		 \right) 
		 \frac{428 (2\xi - 1 )\xi^2 \beta_N^4}{(\xi - 1) N}
		  \exp \left( - \frac{(\xi - 1)a_N N}{428 (2\xi - 1 )\xi^2 \beta_N^4}\right). 
	\end{align*}
	Choose $a_N = \log \left(14 \mathcal{N}_\infty \left(
	N^{-1}, \mathcal{F}_\phi, N 
	\right)\right) \cdot
	{428 (2\xi - 1 )\xi^2 \beta_N^4} / \left({(\xi - 1) N}\right)$, which leads to $\mathcal{N}_\infty (
	N^{-1},$ $ \mathcal{F}_\phi, N 
	) \geq \mathcal{N}_\infty (
	a_N / (40\xi \beta_N), \mathcal{F}_\phi, N 
	)$ as $a_N / (40\xi \beta_N) \geq 1 / N$. Hence we have
	\begin{align*}
	 \mathbb{E}_\mathcal{S} \left[
		\frac{1}{N} \sum_{i=1}^{N} G_{\beta_N}(\widehat{f}, w_i) 
		\right]  \leq 
		 \frac{428 (2\xi - 1 )\xi^2 \beta_N^4}{\xi - 1 }
		\cdot
		\frac{\log \left(14 \mathcal{N}_\infty (N^{-1}, \mathcal{F}_\phi, N)\right) + 1}{N}.
	\end{align*}
 	Setting $\xi = 3/2$ and $\beta_N = c_2 U \log N$ results in 
 	\begin{align*}
 		\mathbb{E}_\mathcal{S} \left[
 		\frac{1}{N} \sum_{i=1}^{N} G_{\beta_N}(\widehat{f}, w_i) 
 		\right]  \leq 
 		3852 c_2^4 U^4 (\log N)^4 \cdot
 		\frac{\log \left(14 \mathcal{N}_\infty (N^{-1}, \mathcal{F}_\phi, N)\right) + 1}{N},
 	\end{align*}
 	and then combined with \eqref{eq:equality-g} and \eqref{eq:truncation}, we have
 	\begin{align*}
 		\mathbb{E}_\mathcal{S}
 		\left[
 		\mathcal{R}(\widehat{f}) - \frac{3}{2}\mathcal{R}_N(\widehat{f}) + \frac{1}{2} \mathcal{R}({f}_0)
 		\right] \leq 
 		c_3 U^4  (\log N)^4  \frac{1}{N} \log \mathcal{N}_\infty (N^{-1}, \mathcal{F}_\phi, N).
 	\end{align*}
 	Further combined with \eqref{eq:estimation-error} and the upper bound of covering number in Lemma \ref{lemma:covering-number}, one can obtain
 	\begin{align*}
 		\mathbb{E}_\mathcal{S}[\mathcal{R}(\widehat{f}) - \mathcal{R}(\bar{f})]
 		\leq 
 		c_4 U^5  (\log N)^5  \frac{1}{N} d^2 L^2 \log \max\{d, L\}
 		+
 		\frac{1}{2} \left[\mathcal{R}(\bar{{f}})  - \mathcal{R}({{f}}_0) \right],
 	\end{align*}
 	for $N \gtrsim d^2 L^2 \log \max\{d, L\}$.
 	This finishes the proof.
\end{proof}

\subsection{Proof of Prediction Error Bounds in Theorem \ref{thm:prediction-error} and Corollary \ref{corollary:prediction-error}}
This subsection gives the proofs of prediction error bounds in Theorem \ref{thm:prediction-error} and Corollary \ref{corollary:prediction-error}. Note that Theorem \ref{thm:prediction-error} can be easily obtained by combining Theorems \ref{proposition:approx-rate-single-step} and \ref{proposition:estimation-error}. Corollary \ref{corollary:prediction-error} is a direct consequence of Theorem \ref{thm:prediction-error}.

\begin{proof}[Proof of Theorem \ref{thm:prediction-error}]
	Denote $\mathcal{F}_\phi^{(T)} = \mathcal{F}^{(T)}_{d_\mathrm{in}, 1, d, d_s, L, U}$ and $\bar{f}^{(T)}  = \arg \min_{\phi \in \mathcal{F}_\phi^{(T)}  }\mathcal{R}(\phi)$. Since $f_0^{(T)}$ is the minimizer of the $L_2$ risk $\mathcal{R}(\phi)$, it can be easily obtained that
	\begin{align*}
		 \mathbb{E}_\mathcal{S}[	\mathcal{R}(\widehat{f}^{(T)}) - \mathcal{R}(f_0^{(T)}) ]  
		= & \mathbb{E}_\mathcal{S}[	\mathcal{R}(\widehat{f}^{(T)}) -  \mathcal{R}(\bar{f}^{(T)})] + 
		\mathbb{E}_\mathcal{S}[ \mathcal{R}(\bar{f}^{(T)}) -  \mathcal{R}(f_0^{(T)})] \\
		\leq & 
		C U^5  (\log N)^5  \frac{1}{N} d^2 L^2 \log \max\{d, L\} + 
		\frac{3}{2} \left[ \mathcal{R}(\bar{f}^{(T)}) -  \mathcal{R}(f_0^{(T)})\right] 
	\end{align*}
	for $N \gtrsim  d^2 L^2 \log \max\{d, L\}$,
	where the last inequality holds by applying Theorem \ref{proposition:estimation-error}.
	
	Further, from the approximation error bound given in Theorem \ref{proposition:approx-rate-single-step}, we have 
	\begin{align*}
		\mathcal{R}(\bar{f}^{(T)}) -  \mathcal{R}(f_0^{(T)})
		= &  \inf_{\phi \in \mathcal{F}_\phi^{(T)}} 
		\mathbb{E}_{\bm{X}_{T}} \left[
		\left(\phi (\bm{X}_{T}) - f_0^{(T)}(\bm{X}_{T})\right)^2
		\right] \\
		\leq & 
		\inf_{\phi \in \mathcal{F}_\phi^{(T)}} 
		\left(
		\sup_{\bm{X}_{T} \in [0, 1]^{d_\mathrm{in} \times T}}
		\left\lvert \phi(\bm{X}_{T}) - f_0^{(T)}(\bm{X}_{T}) \right\rvert
		\right)^2 \\
		\leq &  361 U^2 (\lfloor \beta \rfloor + 1)^4 (d_\mathrm{in}T)^{2 \lfloor \beta \rfloor + (\beta  \vee 1)} (JI)^{-4\beta / (d_\mathrm{in}T)}. 
	\end{align*}
	Hence,
	\begin{align*}
		& \mathbb{E}_\mathcal{S}[	\mathcal{R}(\widehat{f}^{(T)}) - \mathcal{R}(f_0^{(T)}) ]  \\
		\leq & C U^5  (\log N)^5  \frac{1}{N} d^2 L^2 \log \max\{d, L\} + 
		542 U^2 (\lfloor \beta \rfloor + 1)^4 (d_\mathrm{in}T)^{2 \lfloor \beta \rfloor + (\beta  \vee 1)} (JI)^{-4\beta / (d_\mathrm{in}T)}.
	\end{align*}
\end{proof}

\begin{proof}[Proof of Corollary \ref{corollary:prediction-error}]
	From Theorem \ref{thm:prediction-error}, by plugging the width $d = 
	76 (\lfloor \beta \rfloor + 1)^2 3^{d_\mathrm{in}T} d_\mathrm{in}^{\lfloor \beta \rfloor + 2}$ $ T^{\lfloor \beta \rfloor + 1} J \lceil \log_2(8J)\rceil + d_s$ and depth 
	$L = {42( \lfloor \beta \rfloor + 1)^2 I \lceil \log_2(8I) \rceil} + 6d_\mathrm{in} T$ into \eqref{eq:expected-excess-risk} of the main paper, we have
	\begin{align}
		\mathbb{E}_\mathcal{S}[	\mathcal{R}(\widehat{f}^{(T)}) - \mathcal{R}(f_0^{(T)}) ] 
		\lesssim 
		\frac{1}{N} (\log N)^5 J^2I^2(\log J)^2(\log I)^2 \log\max \{J, I\} +
		(JI)^{-4\beta / (d_\mathrm{in}T)}.
	\end{align} 
	For any $\eta \in [0, {d_\mathrm{in}T} / (2d_\mathrm{in}T + 4\beta)]$, taking $J \asymp N^\eta$ and $I \asymp N^{ {d_\mathrm{in}T} / (2d_\mathrm{in}T + 4\beta) - \eta}$ will result in
	\begin{align*}
		\mathbb{E}_\mathcal{S}[	\mathcal{R}(\widehat{f}^{(T)}) - \mathcal{R}(f_0^{(T)}) ]  \lesssim N^{-\frac{2\beta}{d_\mathrm{in} T + 2\beta}} (\log N)^{10}.
	\end{align*}
\end{proof}

\section{More Details of Visualization and Experiments} \label{appendix-sec:exp}
\subsection{Details on Visualization of Recurrent Dynamics Patterns}
The recurrent dynamic patterns depicted in Figure \ref{fig:derivative}(a) are further elucidated in this subsection. The training procedure involves a two-layer vanilla RNN with a hidden size of 128 and a prediction window size of 48. We employ the Tanh activation function and use the WTH data set. Note that two recurrent matrices are trained, one for each layer. To showcase the dependence between two hidden states at a specific time lag $j$, we leverage the power of the corresponding eigenvalues. In Figure \ref{fig:derivative}(a), we select eigenvalues from the recurrent matrix $\bm{W}_h$ in the first layer to illustrate two elementary patterns, Type R and Type C, captured by the RNN model as the time lag $j$ increases.

\begin{table}[ht]
	\centering
	\caption{Hyperparameter settings: number of transformer layers and recurrent blocks in vanilla BRT and ParaBRT in different data sets.}
	\resizebox{0.65\textwidth}{!}
	{\begin{tabular}{c|c|c|c}
			\specialrule{1pt}{0.1pc}{0.1pc}
			Data set & Prediction window size & No. of layers & No. of blocks \\
			\specialrule{1pt}{0.1pc}{0.1pc}
			\multirow{4}{*}{ETTh$_1$} & 24 & 3 & 2 \\
			\cline{2-4}
			& 48 & 3 & 2 \\
			\cline{2-4}
			& 168 & 3 & 3 \\
			\cline{2-4}
			& 336 & 3 & 4 \\
			\specialrule{1pt}{0.1pc}{0.1pc}
			\multirow{4}{*}{ETTh$_2$} & 24 & 3 & 2 \\
			\cline{2-4}
			& 48 & 3 & 2 \\
			\cline{2-4}
			& 168 & 5 & 3 \\
			\cline{2-4}
			& 336 & 5 & 3 \\
			\specialrule{1pt}{0.1pc}{0.1pc}
			\multirow{4}{*}{ETTm$_1$ } & 24 & 3 & 2 \\
			\cline{2-4}
			& 48 & 3 & 2 \\
			\cline{2-4}
			& 96 & 3 & 3 \\
			\cline{2-4}
			& 288 & 3 & 3 \\
			\specialrule{1pt}{0.1pc}{0.1pc}
			\multirow{4}{*}{WTH} & 24 & 3 & 2 \\
			\cline{2-4}
			& 48 & 3 & 2 \\
			\cline{2-4}
			& 168 & 5 & 2 \\
			\cline{2-4}
			& 336 & 5 & 2 \\
			\specialrule{1pt}{0.1pc}{0.1pc}
			permuted MNIST & - & 3 & 2 \\
			\specialrule{1pt}{0.1pc}{0.1pc}
			Pixel-by-pixel CIFAR-10 & - & 4 & {2} \\
			\specialrule{1pt}{0.1pc}{0.1pc}
			Noise-padded CIFAR-10 & - & 4 & {2} \\
			\specialrule{1pt}{0.1pc}{0.1pc}
			EigenWorms & - & 4 & {18} \\
			\specialrule{1pt}{0.1pc}{0.1pc}
	\end{tabular}}
	\label{tab:setting}
\end{table}

\subsection{More Architecture Details}
\label{sec:appendix-brt}
The following is the hyperparameter settings for vanilla BRT and ParaBRT.
For all the settings of time series forecasting task, we set the hidden dimension of model, attention head and the state to 512, 64, and 512, respectively. The number of attention heads and states is set to 8 and 128, respectively. In the permuted sequential MNIST, pixel-by-pixel CIFAR-10, noise-padded CIFAR-10, and EigenWorms classification tasks, we adjust the above hyperparameters to 256, 32, 256, 8, and 32. The numbers of transformer layers and recurrent blocks are listed in Table \ref{tab:setting}. Note that we always insert the recurrent layer before the last transformer layer except for the EigenWorms classification task.

For other models in Section \ref{sec:result-mnist}, the number of recurrent layers is 2, 3, and 2 for permuted sequential MNIST, pixel-by-pixel CIFAR-10, and noise-padded CIFAR-10 tasks, respectively. All the hidden sizes are set to 128.

\subsection{Training Scheme in Experiments}
\label{sec:appendix-training}
For the first simulation in Section \ref{sec:result-simulation}, we divide the data set into train, validation, and test sets, each comprising 40,000, 10,000 and 10,000 samples, respectively.
The Adam optimizer is used for training, with the initial learning rate set as $0.001$ and dropped by a factor of 2 when the performance on the validation set no longer improves.
For the time series forecasting in Section \ref{sec:result-ts}, the train/validation/test split of ETT data set is 12/4/4 months, and that of WTH data set is 28/10/10 months.
For the image classification tasks in Section \ref{sec:result-mnist}, the MNIST and CIFAR-10 datasets are divided into 57,000/3,000/10,000 and 47,000/3,000/10,000 samples, respectively, as train, validation and test sets. 
The Adam optimizer is used with the learning rate initialized at 0.002, which is reduced by a factor of 2 when the validation loss no longer improves. The training is terminated if the learning rate becomes smaller than $10^{-6}$ or if the maximum number of epochs, set at 800, is reached.
For the adding problem in Section \ref{sec:appendix-adding}, we use the MSE as the objective function and the Adam optimizer with an initial learning rate of 0.002. The total number of training iterations is set to 60,000 and the learning rate decays by a factor of 10 every 20,000 iterations.

\subsection{Additional Results for Simulations}
\label{sec:appendix-simulation}

\begin{figure}[t]
	\centering
	\includegraphics[width=0.55 \linewidth]{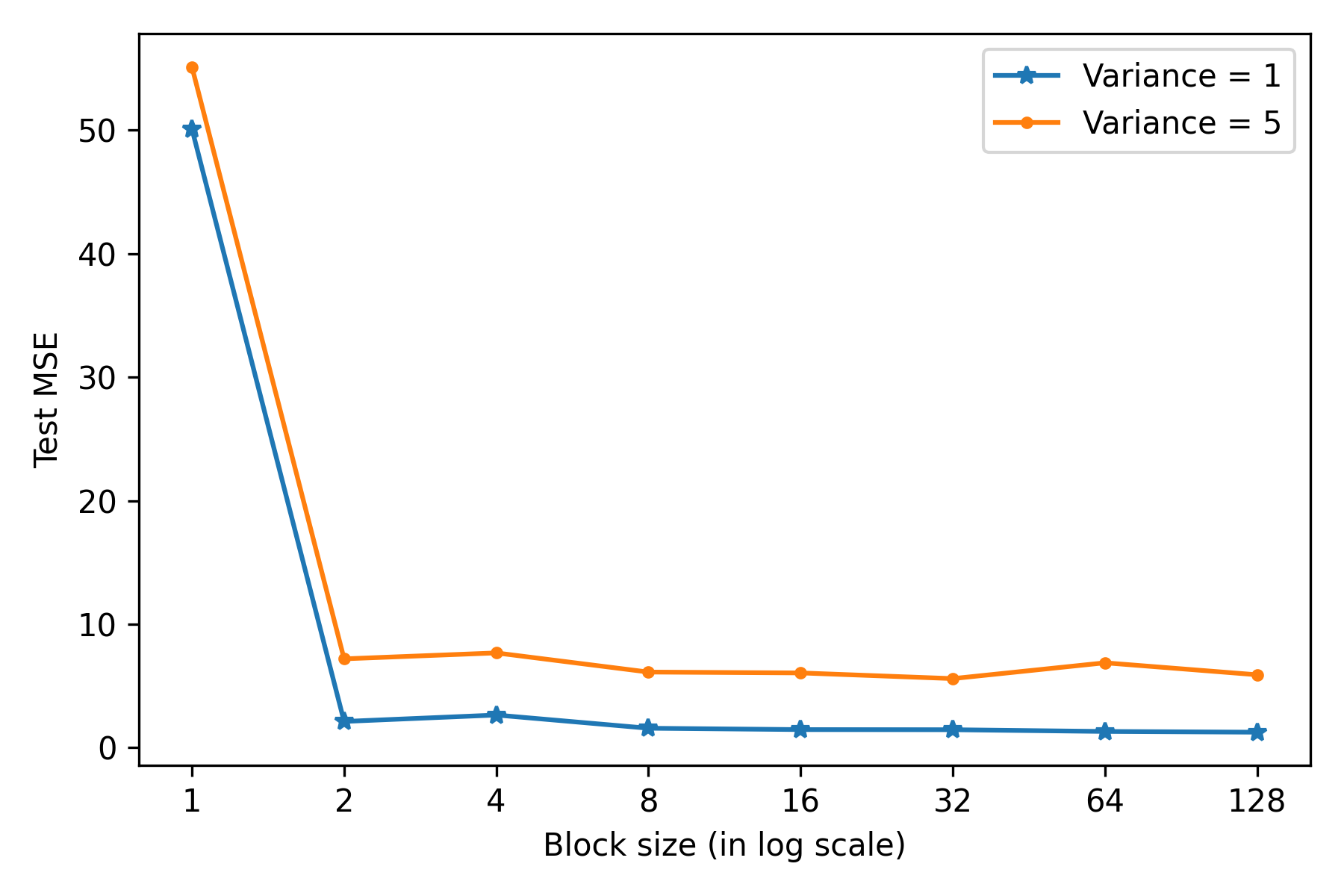}
	\caption{For the first simulation in Section \ref{sec:result-simulation} of the manuscript, the test MSEs averaged over 25 replicates are reported for two noise variances, 1 and 5.}
	\label{fig:mse}
\end{figure}
This subsection provides more results for the two simulations in Section  \ref{sec:result-simulation}. 
For the first simulation, to examine the robustness of our findings, we have also explored a larger noise variance, i.e., the elements of additive errors $\bm{\epsilon}_i$ are now independently sampled from $N(0, 5)$ instead of $N(0, 1)$. The simulation is rerun with the new variance, and the corresponding test mean squared errors (MSEs) averaged over 25 replicates are displayed in Figure \ref{fig:mse}. It can be observed that, while the test MSEs increase with larger noise variances, the overall trend remains the same: a substantial drop in test MSE arises when the block size $d_s$ alters from 1 to 2, whereas the change in performances for $d_s \geq 2$ is not significant. The robustness of our claim is thus verified. 

Furthermore, we also report the standard deviations of these test MSEs for both noise variance settings in Table \ref{tab:variance}. Overall, the standard deviations of test MSEs increase with the noise variance, aligning with our anticipations. The test MSEs for ParaRNNs with $d_s=1$ exhibit notably higher variations than other cases, possibly because they can only provide Type R-1 recurrence features and lose important features that capture periodic patterns. Hence they may struggle to learn the task, resulting in some extremely poor performances.

For the second simulation, we also report the standard deviations of the forward, backward and total time over 100 replications in Table \ref{tab:variance-time}. The corresponding mean values can be found in Figure \ref{fig:two}(c) of the manuscript. 

\begin{table}
	\centering
	\caption{Standard deviations of test MSEs over 25 replicates reported in Figure \ref{fig:mse}.}
	\renewcommand{\arraystretch}{0.8}
	\begin{tabular}{cccccccccc}
		\toprule
		\multirow{2}{*}{}&\multirow{2}{*}{} & \multicolumn{8}{c}{Block size} \\
		\cmidrule(lr){3-10}
		& & 1 & 2 & 4 & 8 & 16 & 32 & 64 & 128 \\
		\midrule
		\multirow{2}{*}{Variance} 
		& 1    & 101.00 & 1.53 & 2.86 & 0.59 & 0.52 & 0.57 & 0.40 & 0.29 \\
		& 5    & 107.50 & 2.93 & 4.01 & 1.39 & 1.87 & 0.95 & 6.31 & 1.78 \\
		\bottomrule
	\end{tabular}
	\label{tab:variance}
\end{table}

\begin{table}
	\centering
	\caption{Standard deviations of execution time over 100 replicates reported in Figure \ref{fig:two}(c) of the manuscript.}
	\vspace{3mm}
	\renewcommand{\arraystretch}{0.8}
	\begin{tabular}{ccccccccc}
		\toprule
		& \multicolumn{8}{c}{Block size} \\
		\cmidrule(lr){2-9}
		& 1 & 2 & 4 & 8 & 16 & 32 & 64 & 128 \\
		\midrule
		Forward propagation    & 0.09 & 0.12 & 0.09 & 0.14 & 0.18 & 0.24 & 0.29 & 0.41 \\
		Backward propagation    & 0.40 & 0.34 & 0.34 & 0.33 & 0.48 & 0.43 & 0.51 & 0.41 \\
		Total time   & 0.42 & 0.38 & 0.35 & 0.36 & 0.50 & 0.45 & 0.57 & 0.55 \\
		\bottomrule
	\end{tabular}
	\label{tab:variance-time}
\end{table}

\subsection{Additional Results for Time Series Forecasting}
\label{sec:appendix-ts}
Table \ref{tab:time-series} lists the mean squared error (MSE) and mean absolute error (MAE) obtained from 3 replicates. 

\begin{table}[t]
	\centering
	\caption{Multivariate long sequence time-series forecasting results using MSE and MAE as evaluation metrics. The results are reported on different prediction window sizes and averaged over 3 replicates. The better performances are in bold and the last row shows the winning counts for each model.}
	\resizebox{1.\textwidth}{!}{\begin{tabular}{c|c|cc|cc|cc|cc|cc|cc|cc|cc}
			\specialrule{1pt}{0.1pc}{0.1pc}  
			\multicolumn{2}{c|}{Methods}&\multicolumn{2}{c|}{Vanilla RNN}&\multicolumn{2}{c|}{ParaRNN}&\multicolumn{2}{c|}{Vanilla LSTM}&\multicolumn{2}{c|}{ParaLSTM}&\multicolumn{2}{c|}{Vanilla GRU}&\multicolumn{2}{c|}{ParaGRU}&\multicolumn{2}{c|}{Vanilla BRT}&\multicolumn{2}{c}{ParaBRT}\\
			\midrule
			\multicolumn{2}{c|}{Metric}&MSE&MAE&MSE&MAE&MSE&MAE&MSE&MAE&MSE&MAE&MSE&MAE&MSE&MAE&MSE&MAE\\
			\specialrule{1pt}{0.1pc}{0.1pc} 
			\parbox[t]{2mm}{\multirow{4}{*}{\rotatebox[origin=c]{90}{ETTh1}}}
			&	24	& 0.548 & 0.530 & \textbf{0.547} & \textbf{0.522}	& 0.642	& 0.602 & \textbf{0.621} & \textbf{0.580} &	0.513 &	0.527 &	\textbf{0.484 }&	\textbf{0.504} & 0.520 & 0.506 & \textbf{0.503} &	\textbf{ 0.496}\\
			&	48	& 0.527  & \textbf{0.519 }&	\textbf{0.525} &	0.521 & \textbf{0.598}	& \textbf{0.577}	& 0.616 & 0.595 &	\textbf{0.523 }&	0.526 &	0.529 &	\textbf{0.522} & \textbf{0.533} & \textbf{0.518} & 0.537 &	0.522\\
			&	168 & 0.752  & 0.662  &	\textbf{0.747} & \textbf{0.648} & 0.974	& 0.774	& \textbf{0.909 }& \textbf{0.739}	 &	0.794 &	0.684 &	\textbf{0.764}	&	\textbf{0.666}& \textbf{0.799}& 0.675 & 0.800 &	\textbf{0.671}\\
			&	336	& 0.989  & 0.790   & \textbf{0.900 }& \textbf{0.755	} & 1.120	& 0.841	& \textbf{0.925} & \textbf{0.761} & 0.967 &	0.789 &	\textbf{0.950} & \textbf{0.785 }& 1.004& \textbf{0.790}& \textbf{1.003} & 0.793\\
			
			\midrule
			\parbox[t]{2mm}{\multirow{4}{*}{\rotatebox[origin=c]{90}{ETTh2}}}
			&	24	&  0.595 & 0.598  &	\textbf{0.550} &	\textbf{0.569} &	0.966 &	0.786 & \textbf{0.804} &	\textbf{0.719 } &	0.614 & 0.633 &	\textbf{0.516} &	\textbf{0.549}& 0.954& 0.778& \textbf{0.891}& \textbf{0.758}	\\
			&	48	& 1.144  & 0.852  &	\textbf{1.025 }&	\textbf{0.826} & 1.437	& 0.953	& \textbf{1.322} & \textbf{0.949}	 &	1.149 &	\textbf{0.868} &	\textbf{1.119}	&	0.873& 1.466 &	0.987 & \textbf{1.374}&\textbf{0.956}\\
			&	168 & 4.599  & 1.787  &	\textbf{3.325 }& \textbf{1.519} & 3.428	& 1.578	&\textbf{ 3.399} &	 \textbf{1.515 }&	3.347 & 1.542 &	\textbf{2.478}	& \textbf{1.239}& \textbf{2.852} & 1.311	& 2.873 &\textbf{1.310}\\
			&	336	& 3.576  & 1.618  &	\textbf{2.772} &	\textbf{1.420} & 3.025	& 1.523	&  \textbf{2.856}&	\textbf{1.445} &	2.598 &	 1.375 & \textbf{2.354}	& \textbf{1.278}& \textbf{2.550} & \textbf{1.239} & 2.599&1.271\\
			
			\midrule
			\parbox[t]{2mm}{\multirow{4}{*}{\rotatebox[origin=c]{90}{ETTm1}}}
			&	24	& 0.780  &  0.581 &\textbf{	0.766} & \textbf{0.577}	& 0.814	& 0.624	& \textbf{0.756 }&	\textbf{0.582 } &	0.753 &	0.585 &	\textbf{0.674} &	\textbf{0.557} &0.778 &0.601	& \textbf{0.745}&\textbf{0.585}\\
			&	48	&  0.972 &  0.698 & \textbf{0.924 }&	\textbf{0.697} & \textbf{0.968}	& 0.719	& \textbf{0.968} &	\textbf{0.699} &	0.768 & 0.623 &	\textbf{0.744 }&	\textbf{0.602}& 0.581&	\textbf{0.554} & \textbf{0.578} &\textbf{0.554}\\
			&	96 & 0.570  & 0.548  &	\textbf{0.519} &\textbf{0.509} & 0.605	&	0.570 & \textbf{0.535 }&	 \textbf{0.526} &	0.510 &	0.529 &	\textbf{0.428}	&	\textbf{0.458}&0.869 &0.739& \textbf{0.786}&\textbf{0.694}	\\
			&	228	&  0.684 &  0.623 & \textbf{0.654} &	\textbf{0.607} & 0.965	&	0.788 & \textbf{0.770} & \textbf{0.679 }&	0.770 &	0.691 &	\textbf{0.739} &	\textbf{0.671}	&\textbf{1.034} & 0.821& 1.043&\textbf{0.819}\\
			
			\midrule
			\parbox[t]{2mm}{\multirow{4}{*}{\rotatebox[origin=c]{90}{Weather}}}
			&	24	& \textbf{0.365}  &  \textbf{0.411} &	 0.368 & \textbf{0.411	}&	0.376 &	0.420 & \textbf{0.374} &	 \textbf{0.417} &	\textbf{0.360}&	0.407&	\textbf{0.360}	&\textbf{0.403}	&\textbf{0.384} &	\textbf{0.428} &	\textbf{0.384}	&	\textbf{0.428}	\\
			&	48	& \textbf{0.479}   & \textbf{0.492}  & 0.481	 &	\textbf{0.492} &0.489	&	0.499& \textbf{0.484}&	\textbf{0.494 }& 0.468	& \textbf{0.484	}&	\textbf{0.464}	&	\textbf{0.484}&	\textbf{0.486 }&	\textbf{0.497} &	\textbf{0.486}	&	0.498	\\
			&	168 &  0.569 & 0.558 &	 \textbf{0.564}&	\textbf{0.556}&	\textbf{0.570}& \textbf{0.558}	& 0.571&	 0.559&	\textbf{0.549} &	\textbf{0.546} &		0.556&	0.552&	\textbf{0.742}&	\textbf{0.641} &		0.750 &	0.645	\\
			&	336	& \textbf{0.582}  & \textbf{0.565}  &	 0.583 & 0.567	&	\textbf{0.583}&	\textbf{0.565} & 0.586& 0.566 &	\textbf{0.580}&	\textbf{0.563}&		0.583&	0.565&	0.842 &	0.694 &	\textbf{0.829}	&	\textbf{0.686}	\\
			
			\specialrule{1pt}{0.1pc}{0.1pc}  
			\multicolumn{2}{c|}{count}&\multicolumn{2}{c|}{7}&\multicolumn{2}{c|}{27}&\multicolumn{2}{c|}{7}&\multicolumn{2}{c|}{26}&\multicolumn{2}{c|}{8}&\multicolumn{2}{c|}{26}&\multicolumn{2}{c|}{15}&\multicolumn{2}{c}{21}\\
			\specialrule{1pt}{0.1pc}{0.1pc} 
	\end{tabular}}
	\label{tab:time-series}
\end{table} 

\subsection{Additional Results for Adding Problem}
\label{sec:appendix-adding}
The adding problem \citep{li2018indrnn} serves as a standard benchmark for assessing the effectiveness of RNNs. It takes two sequences of length $T = 100$ or $500$ as inputs and makes a single prediction based on the sequences. Specifically, the first sequence is generated by uniformly sampling values from the range $(0,1)$. The second sequence consists of two entries being 1 (one in the first half and the other in the second half) and the rest entries being 0. The desired output is the sum of the two entries in the first sequence that correspond to the two entries of 1 in the second sequence. 

Considering that the traditional RNN with the Tanh activation function is known to be ineffective for this task \citep{li2018indrnn}, we exclude it from our comparison and focus solely on comparing ParaLSTM and ParaGRU with their respective counterparts.
We use only one recurrent layer and a total hidden size of 128. 
Ten replications with different data and random initializations have been conducted.
The visualizations of test MSEs for LSTM models with input sequence length $T=100$ and $500$ can be found in Figures \ref{fig:lstm100-all} and \ref{fig:lstm500-all}, respectively; and those for GRU models are provided in Figures \ref{fig:gru100-all} and \ref{fig:gru500-all}. 
Both ParaLSTM and ParaGRU converge to a similar MSE as their baseline models in most cases, regardless of the sequence length. There are some exceptions in Figure \ref{fig:lstm500-all}, where the vanilla LSTM fails to converge while our model rapidly approaches near-zero test MSEs.
Besides, ParaLSTM requires fewer iterations to converge while ParaGRU uses a larger number of iterations compared to their respective vanilla versions. Note that if considering the overall execution time for convergence which depends on both the number of iterations to converge and the execution time per iteration, our models demonstrate advantages due to its parallel implementation and faster speed. 

\begin{figure}
	\centering
	\includegraphics[width=0.9\linewidth]{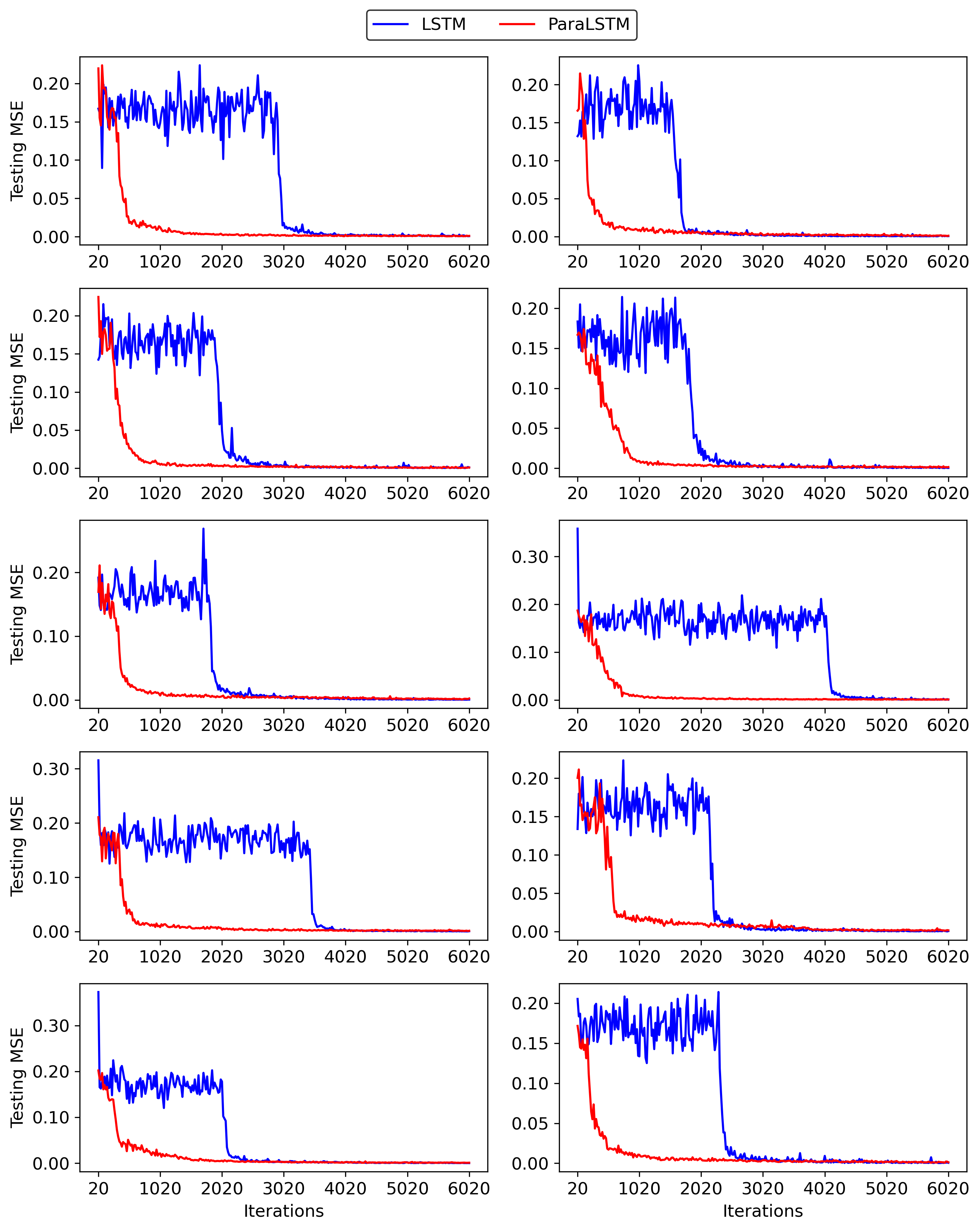}
	\caption{MSEs of ten replicates for LSTMs with time step $T=100$ in the adding problem. The blue and red lines represent the vanilla and our models, respectively.}
	\label{fig:lstm100-all}
\end{figure}

\begin{figure}
	\centering
	\includegraphics[width=0.9\linewidth]{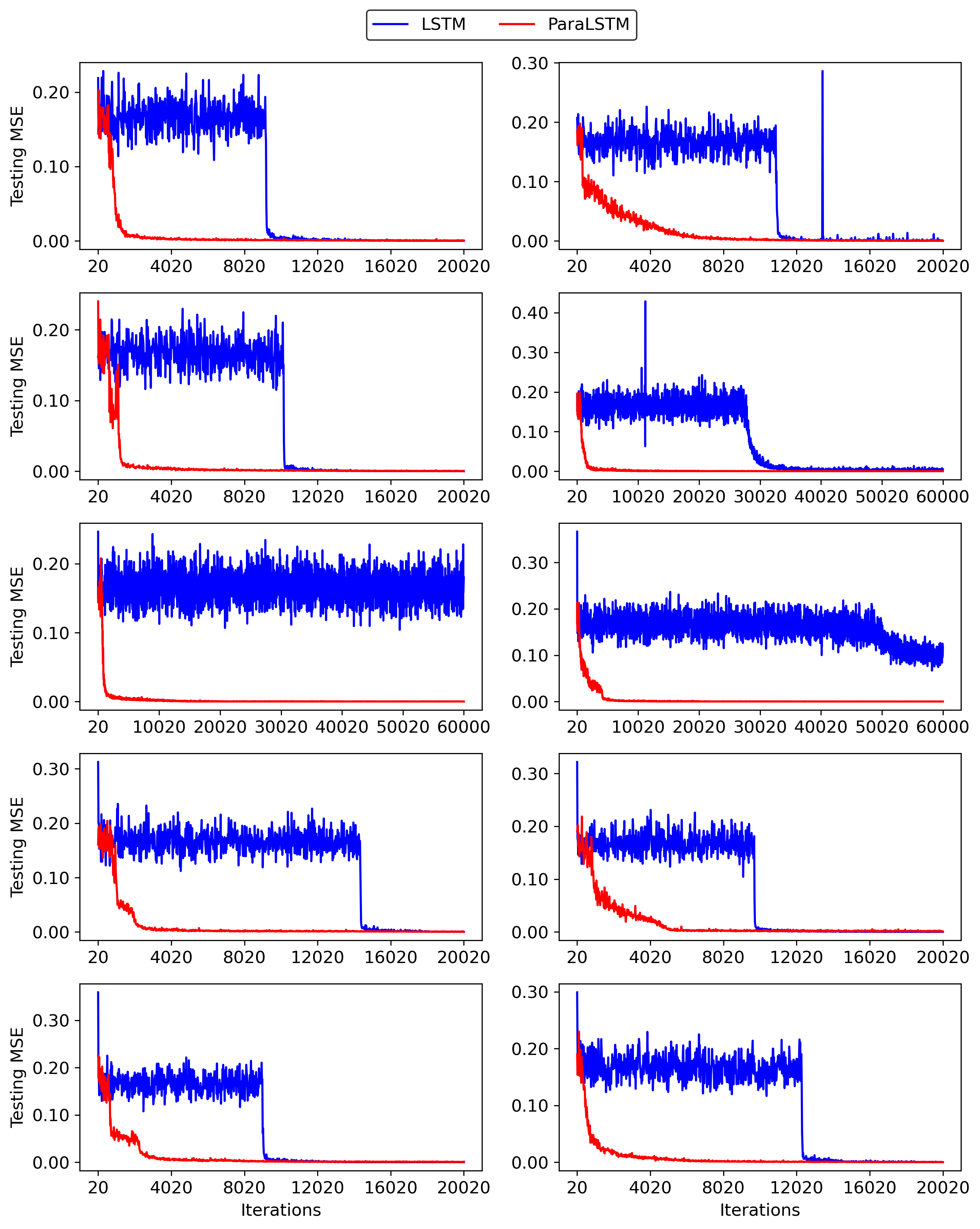}
	\caption{MSEs of ten replicates for LSTMs with time step $T=500$ in the adding problem. The blue and red lines represent the vanilla and our models, respectively. Vanilla LSTM fails to converge after 60,000 iterations in two replications, as shown in the third row.}
	\label{fig:lstm500-all}
\end{figure}

\begin{figure}
	\centering
	\includegraphics[width=0.9\linewidth]{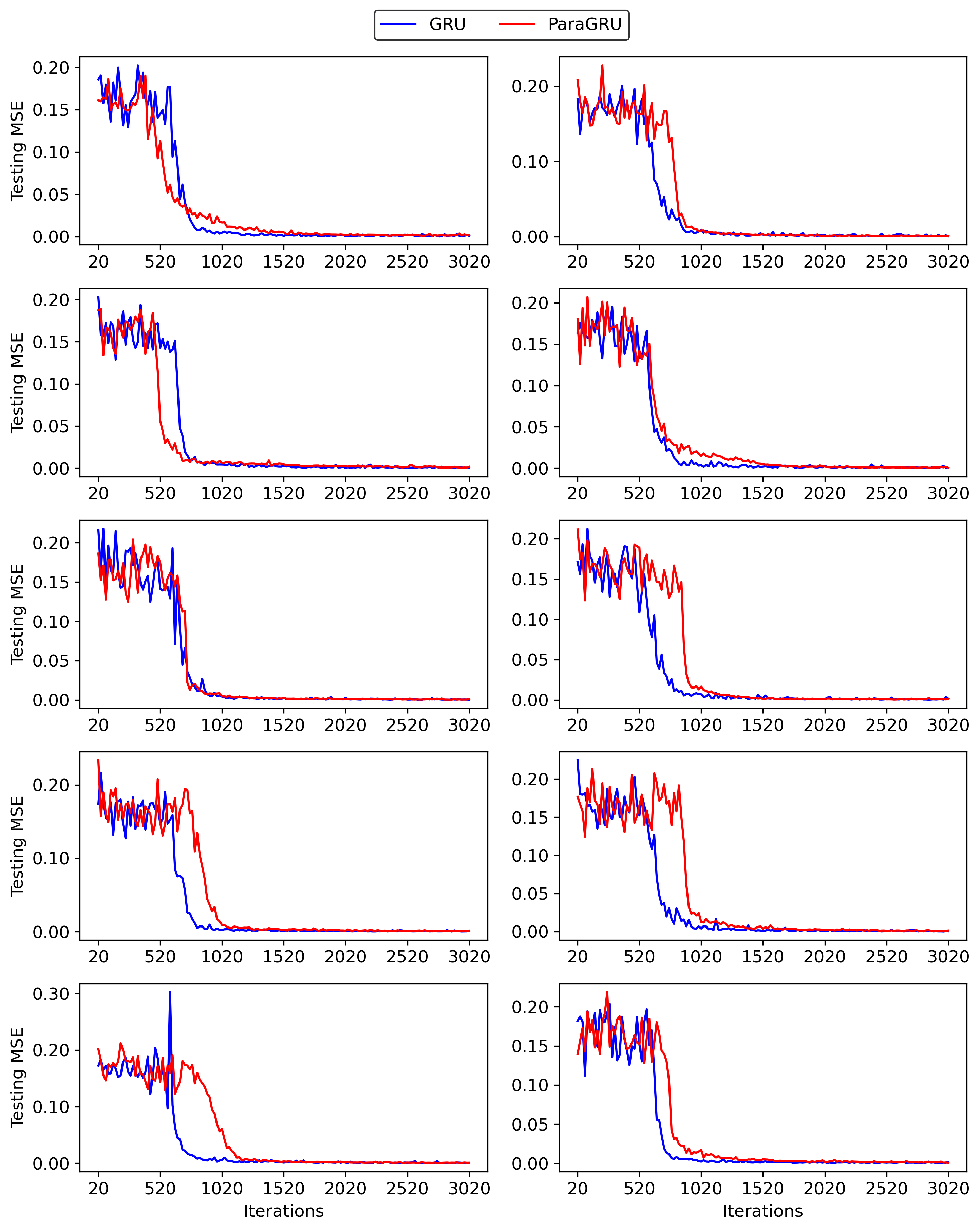}
	\caption{MSEs of ten replicates for GRUs with time step $T=100$ in the adding problem. The blue and red lines represent the vanilla and our models, respectively.}
	\label{fig:gru100-all}
\end{figure}

\begin{figure}
	\centering
	\includegraphics[width=0.9\linewidth]{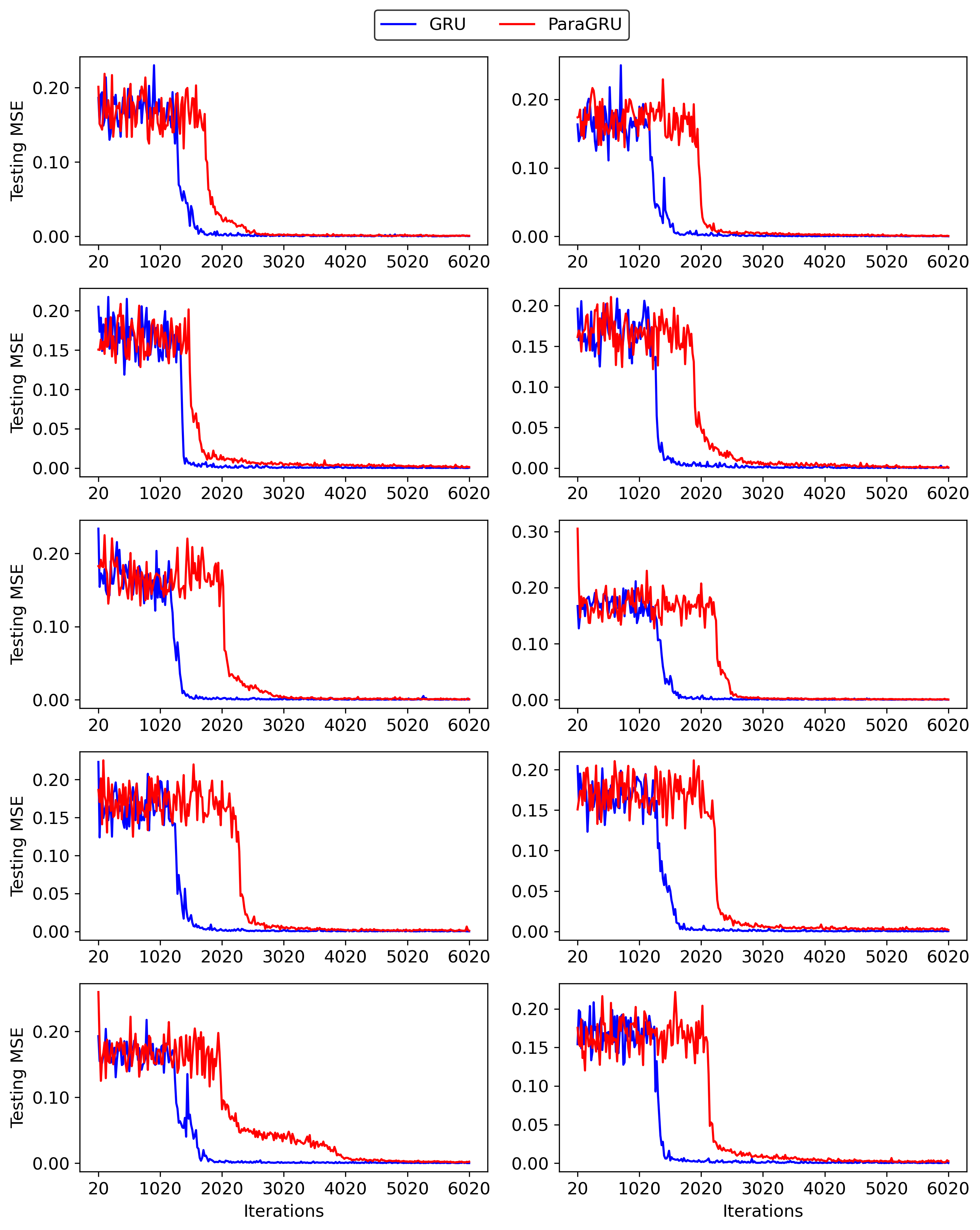}
	\caption{MSEs of ten replicates for GRUs with time step $T=500$ in the adding problem. The blue and red lines represent the vanilla and our models, respectively.}
	\label{fig:gru500-all}
\end{figure}

\setlength{\bibsep}{5.5pt}

\end{document}